\documentclass{article}

\usepackage{arxiv}

\usepackage[utf8]{inputenc} 
\usepackage[T1]{fontenc}    
\usepackage{hyperref}       
\usepackage{url}            
\usepackage{booktabs}       
\usepackage{amsfonts}       
\usepackage{nicefrac}       
\usepackage{microtype}      
\usepackage{xcolor}         

\usepackage{amsmath}
\usepackage{amssymb}
\usepackage{mathtools}
\usepackage{amsthm}
\usepackage[export]{adjustbox}
\usepackage{tikz-cd}
\usepackage{mathabx,epsfig}
\usepackage{quiver}
\usepackage{amsmath}
\usepackage{mathrsfs}
\usepackage{bbm}

\usepackage[capitalize,noabbrev]{cleveref}

\theoremstyle{plain}
\newtheorem{theorem}{Theorem}[section]
\newtheorem{proposition}{Proposition}[section]

\newtheorem{corollary}{Corollary}[section]
\newtheorem{example}{Example}[section]
\theoremstyle{definition}
\newtheorem{definition}{Definition}[section]

\theoremstyle{remark}
\newtheorem{remark}{Remark}[section]

\usepackage{amsmath,amsfonts,bm}

\def\eqref#1{equation~\ref{#1}}

\def\1{\bm{1}}

\DeclareMathAlphabet{\mathsfit}{\encodingdefault}{\sfdefault}{m}{sl}
\SetMathAlphabet{\mathsfit}{bold}{\encodingdefault}{\sfdefault}{bx}{n}

\def\gC{{\mathcal{C}}}
\def\gD{{\mathcal{D}}}

\def\gF{{\mathcal{F}}}
\def\gG{{\mathcal{G}}}

\def\gM{{\mathcal{M}}}
\def\gN{{\mathcal{N}}}

\def\sN{{\mathbb{N}}}

\def\sR{{\mathbb{R}}}

\def\sZ{{\mathbb{Z}}}

\usepackage{bbm}
\usepackage{wrapfig}
\usepackage{mathabx,epsfig}

\makeatletter
\newcommand*{\transpose}{%
  {\mathpalette\@transpose{}}%
}
\newcommand*{\@transpose}[2]{%
  \raisebox{\depth}{$\m@th#1\intercal$}%
}
\usepackage{comment} 
\usepackage[dvipsnames]{xcolor}

\usepackage{natbib} 
\PassOptionsToPackage{sort}{natbib}
\definecolor{darkgreen}{rgb}{0.0, 0.5, 0.13}

\definecolor{first}{RGB}{68, 1, 64} 
\definecolor{second}{RGB}{49, 104, 142} 
\definecolor{third}{RGB}{53, 183, 121} 
\definecolor{fourth}{RGB}{253, 231, 37}

\newcommand{\rmapf}[2]{\mathcal{F}_{#1 \unlhd #2}}
\newcommand{\rmapc}[2]{\mathcal{C}_{#1 \unlhd #2}}
\newcommand{\rmapm}[2]{\mathcal{M}_{#1 \unlhd #2}}
\newcommand{\psd}{\mathcal{S}_+}
\newcommand{\concat}[2]{#1\mathbin\Vert#2}
\newcommand{\gauss}[1]{\mathcal{G}(\mathbb{R}^{#1})}
\def\acts{\mathrel{\reflectbox{$\righttoleftarrow$}}}
\newcommand{\values}[2]{#1_{\pm #2}}
\newcommand{\valuesf}[2]{{\color{ForestGreen}\mathbf{#1}}_{\pm #2}}
\newcommand{\valuess}[2]{{\color{DarkOrchid}\mathbf{#1}}_{\pm #2}}
\newcommand{\Conv}{%
  \mathop{\scalebox{1.5}{\raisebox{-0.2ex}{$\circledast$}}
  }
}

\usepackage{enumitem}
\definecolor{lb}{RGB}{31,119,180}
\usepackage{tcolorbox}
\newtcolorbox{mybox}[1]{colback=lb!1!white,colframe=lb!70!black,fonttitle=\bfseries,title=#1,
left=3pt,    
  right=3pt,   
}

\newtcolorbox{myboxgray}[1]{colback=gray!15,colframe=black!70,fonttitle=\bfseries,title=#1, %
left=3pt,    
  right=3pt,   
}

\usepackage{amsmath}
\usepackage{multirow}
\usepackage{tabularx}
\usepackage[table]{xcolor}

\title{Gaussian Sheaf Neural Networks}

\author{%
  André Ribeiro \\
  Getulio Vargas Foundation\\
  \texttt{andre.guimaraes@fgv.br} \\
   \And
   Ana Luiza Tenório \\
   Getulio Vargas Foundation \\
   \texttt{ana.tenorio@fgv.br} \\   
    \And
   Tiago da Silva \\
   MBZUAI \\
   \texttt{tiago.dasilva@mbzuai.ac.ae} \\   
      \And
   Diego Mesquita \\
   Getulio Vargas Foundation \\
   \texttt{diego.mesquita@fgv.br} 
}

\begin{document}

\vspace{-9pt}
\maketitle
\vspace{-9pt}

\begin{abstract}
Graph Neural Networks (GNNs) have become the de facto standard for learning on relational data. While traditional GNNs' message passing is well suited for vector-valued node features, there are cases in which node features are better represented by probability distributions than real vectors. Concretely, when node features are Gaussians, characterized by a mean and a covariance matrix, naively concatenating their parameters into a single vector and applying standard message passing discards the geometric and algebraic structure that governs means and covariances. We propose Gaussian Sheaf Neural Networks (GSNNs), a principled framework that incorporates these inductive biases into graph-based learning. Building on the theory of cellular sheaves, we derive a new Laplacian operator that generalizes the sheaf Laplacian to this setting and preserves its key properties. We complement our theoretical contributions with experiments on synthetic and real-world data that illustrate the practical relevance of GSNNs.
\end{abstract}

\section{Introduction}

Graph neural networks (GNNs) are the \emph{de facto} standard for machine learning over networked data, with applications in molecular modeling~\citep{duvenaud2015convolutional,gilmer2017neural}, recommender systems~\citep{ying2018graph}, and physics simulation~\citep{sanchez2020learning}. While traditional message passing is well suited for vector-valued node features, many problems are more naturally posed in terms of \emph{distributions} 
attached to nodes: uncertainty-aware predictions on traffic or sensor networks, knowledge-graph entities equipped with confidence, nodes representing populations or cohorts, and Gaussian-process-style 
features over relational domains, among others. A canonical and particularly tractable instance is the case in which each node is associated with a Gaussian distribution, characterized by a mean and  a covariance matrix. Although their parameters can be flattened into 
a vector, doing so leaves on the table the geometric and algebraic structure that governs means and covariances --- structure that is highly informative as an inductive bias for learning. In this work, we observe that the recently introduced cellular sheaves provide a natural way to encode this structure in graph-based learning.

Cellular sheaves are simplified versions of sheaves defined over a graph, associating (i) a vector space, the \emph{stalk}, to each vertex and each edge, and (ii) a linear map, the \emph{restriction map}, to each incident vertex--edge pair. These constructs induce a 
continuous-time diffusion process over the graph that can be discretized into neural architectures for relational data. Early sheaf-based models~\citep{hansen2020sheaf} relied on hand-crafted 
restriction maps, while \citet{bodnar2022neural} proposed learning them end-to-end -- yielding models that avoid oversmoothing and adaptively control the Dirichlet energy, even when adequate restriction maps are not known beforehand.

Crucially, the case for sheaf-based architectures fits within a broader program in geometric deep learning that argues for respecting the intrinsic 
geometry of the data rather than forcing it into a flat Euclidean container~\citep{bronstein2021geometric,bronstein2017geometric}. A recurring lesson from this line of work is that disregarding the  structure of the underlying space --- 
whether it be a manifold of 
hierarchies~\citep{nickel2017poincare,chami2019hyperbolic} or a non-Euclidean space of mixed 
curvature~\citep{gu2018learning,xiong2022pseudo} --- 
incurs distortion and hampers expressivity. Gaussian-valued node features are a clear instance of this phenomenon: their parameter space is not a flat vector space but the statistical 
manifold $\mathcal{N}(\mathbb{R}^d)\cong\mathbb{R}^d\times\psd^d$, admitting only a restricted class of structure-preserving maps. Cellular sheaves, by attaching a stalk and a restriction map to each incidence in the graph, give us precisely the local-to-global language needed to encode such structure as an inductive bias for 
message passing --- and, as we show, this language extends gracefully beyond the vector-space setting in which it was originally formulated.

In this work, we define a Gaussian sheaf in analogy to a cellular sheaf, replacing each vector-space stalk with a space of Gaussian distributions, parameterized by a mean and a covariance. This seemingly small 
modification, however, breaks several of the categorical properties that make the standard sheaf Laplacian well-defined, and recovering 
them requires non-trivial work. Cellular sheaves over $\textbf{Vect}$ form a category equivalent to that of actual sheaves, an equivalence implicitly used 
by~\citet{hansen2020sheaf,bodnar2022neural,barbero2022sheaf} to define the sheaf Laplacian via the (coboundary) operators of sheaf cohomology. Replacing $\textbf{Vect}$ with a category $\textbf{Gauss}$ 
of Gaussian distributions raises the question of whether $\textbf{Gauss}$ is well-behaved enough for the same construction to go through. We show that it is, despite the additional difficulty that $\textbf{Gauss}$ is not \emph{abelian} --- so cohomological constructions like the coboundary map (a generalization of the incidence matrix) must draw on tools from non-abelian settings~\citep{patchkoria2006exactness,JUN2017306}. We elaborate on this in Section~\ref{sec:gcs} and provide further details in Appendix~\ref{sec:app:cohomology}. We summarize our main contributions in \Cref{tab:summaries}.

\begin{table}[!t]
\centering
\small
\caption{Summary of our main contributions.}
\label{tab:summaries}
\vspace{4pt}
\renewcommand{\arraystretch}{1.35}
\begin{tabular}{p{.68\textwidth} l}
\toprule
\textbf{Contribution} & \textbf{Reference} \\
\midrule
    Construction of the sheaf of means ($\mathbf{Vect}$) and covariances ($\mathbf{SDef}$) & \cref{def:lap_sheaf_covariances} \\ 
    Characterization of $\mathbf{SDef}$'s Laplacian ($L_{\mathcal{C}}$)  & \cref{thm:sdefLap} \\ 
    Design of the Gaussian sheaf ($\equiv \!\! \mathbf{Vect} \! \times\!  \mathbf{SDef}$) \& its Laplacian ($L_{\mathcal{F}}$) &  \cref{def:gauss_sheaf,def:lap_sheaf_gaussian} \\
    Expressivity of the Gaussian sheaf for diverse restriction map classes & Propositions \ref{prop:orbitorth}--\ref{prop:orbitgen} \\ 
    Introduction of GSNN for DoD regression on networked data & \cref{sec:GSNN} \\ 
    GSNN often outperforms baselines in both synthetic and real-world datasets & \cref{tab:results,tab:adapted_graphLap} \\ 
    GSNN is robust against oversmoothing as model depth increases & \cref{prop:lyapenergy,tab:A} \\
\bottomrule
\end{tabular}
\end{table}


\section{Notation and Background}\label{sec:prelim} 
A graph is a mathematical structure represented as $G = (V, E)$, where $V$ is the set of nodes and $E$ is the set of edges. In this paper, we consider undirected, connected, and unweighted graphs. We use $|S|$ to denote the cardinality of any finite set $S$. Our theoretical developments rely on basic notions of category theory, for which we provide a brief self-contained introduction in \cref{sec:app:catshv}.

\subsection{Cellular sheaves on a graph}
We start recalling the main structures in the Neural Sheaf Diffusion (NSD) model.

\begin{definition}\label{def:cellsheaf}
    A \textbf{cellular sheaf} $(G, \mathcal{F})$ over a (undirected) graph $G = (V, E)$ associates:\looseness=-1 
\begin{enumerate}[leftmargin=16pt, itemsep=1pt, nosep]
        \item Vector spaces $\mathcal{F}(v)$ and $\mathcal{F}(e)$ to each vertex $v \in V$ and each edge $e \in E$, called \textbf{stalks}.
        \item Linear maps $\rmapf{v}{e} : \gF(v) \to \gF(e)$ to incident vertex-edge pair $v \unlhd e$, called \textbf{restriction maps}.
    \end{enumerate}
\looseness=-1
\end{definition}
In this work, the above definition is understood as a \textbf{Vect}-valued sheaf. Formally, \textbf{Vect} is a \textit{category} whose objects are finite-dimensional vectors spaces and whose morphism are linear maps. More generally, a \textbf{D}-valued sheaf is a sheaf whose stalks and restrictions maps are in \textbf{D}.

Cellular sheaves allow us to introduce a generalized Laplacian that enhances the performance of GNNs, but first we describe the domain and codomain of a coboundary operator, $\delta$, which plays a central role in defining the sheaf Laplacian.

\begin{definition}\label{def:0cochains}
    The \textbf{space of 0-cochains} and the \textbf{space of 1-cochains} of a cellular sheaf $(G, \gF)$ are  
    \begin{align*}
        C^0(G, \gF) = \bigoplus_{v \in V} \gF(v) \mbox{ and }
        C^1(G, \gF) = \bigoplus_{e \in E} \gF(e).
    \end{align*} 
    where $\oplus$ denotes the (external) direct sum.
\end{definition}
To define the coboundary operator $\delta \!:\! C^0(G,\gF) \to C^1(G,\gF)$, we choose an orientation $e \!=\! u \!\to\! v$  for every $e \in E$ and set $(\delta \mathbf{X})_e = \rmapf{v}{e} \mathbf{x}_v - \rmapf{u}{e} \mathbf{x}_u$. The sheaf Laplacian is then defined by $L_{\gF} = \delta^{\top}\delta$, or equivalently by
\begin{equation}\label{eq:lap}
    L_{\mathcal{F}}(\mathbf{X})_v\!:=\!\!\sum_{v, u \unlhd e} \rmapf{v}{e}^{\top}\left(\!\rmapf{v}{e} \mathbf{x}_v\!-\!\rmapf{u}{e} \mathbf{x}_u\right), \forall v \in V.
\end{equation}
This generalizes the usual graph Laplacian: when stalks are all $\sR$ and restriction maps are identities, $\delta^\top$ is the incidence matrix and $L_\gF$ recovers the $n \times n$ graph Laplacian. The diagonal and non-diagonal blocks $L_{vv} = \sum_{v \unlhd e} \rmapf{v}{e}^{\top} \rmapf{v}{e}$ and $L_{uv} = L_{vu}^\top = -\rmapf{v}{e}^{\top} \rmapf{u}{e}$ define a sheaf degree matrix $D$ and a sheaf adjacency $A$, respectively.
We fix a common dimension $d$ for node and edge stalks, so $L_\gF \in \sR^{nd \times nd}$. Sheaf-based networks typically use the normalized form $\Delta_{\gF} = D^{-\frac{1}{2}}L_{\gF}D^{-\frac{1}{2}}$.



\noindent\textbf{Neural Sheaf Diffusion.} Inspired by the Sheaf Neural Network given by the equation $\dot{\mathbf{X}}(t) = -\sigma(\Delta_{\gF(t)}(\mathbf{I} \otimes \mathbf{W}_1)\mathbf{X}_t\mathbf{W}_2)$ \cite{hansen2019toward}, \citet{bodnar2022neural} proposed the NSD model: 
\begin{equation}
    \mathbf{X}_{t+1} = (1+\varepsilon)\mathbf{X}_t - \sigma(\Delta_{\gF(t)}(\mathbf{I} \otimes \mathbf{W}_1^t)\mathbf{X}_t\mathbf{W}_2^t),
\end{equation}
where $W_1$ and $W_2$ are weight matrices, $\otimes$ the Kronecker product, $\sigma$ an activation and $\varepsilon \in [-1,1]^{nd}$. Furthermore, the Laplacian is learned locally through the graph features: each restriction map $F_{v \unlhd e}$ is learned using a function $\Phi: \mathbb{R}^{2d} \to \mathbb{R}^{d \times d}$, with $\gF_{v \unlhd e=(v,u)} = \Phi(\mathbf{x}_v, \mathbf{x}_u)$. 

Finally, we recall that a global section of a cellular sheaf over $G$ is a certain choice of element for each node stalk. Formally, the space of global sections for $(G, \gF)$ is:
\begin{equation*}
        \Gamma(G, \gF) = \{\mathbf{X} \in C^0(G, \gF) \:|\: \rmapf{v}{e}\mathbf{x}_v = \rmapf{u}{e}\mathbf{x}_u \}    
\end{equation*}
It is straightforward to check that  $\Gamma(G, \gF) = \ker L_{\gF} = \ker \delta$. Observing that $\delta$ is a coboundary operator built upon a cohomology theory for cellular sheaves \cite{hansen2019toward,hansen2020laplacians, hansen2021opinion}, we also have $H^0(G, \gF) := \ker \delta$.  \looseness=-1

 \textbf{Problem statement.} We focus on distribution-on-distribution (DoD) regression over networked (i.e., non-iid) data. 
Equivalently, this task can be seen as a special case of node-level prediction in which both the node features and their respective outputs are probability distributions. For mathematical convenience, we assume that input features can be well-approximated by multivariate Gaussian measures. 
\looseness=-1

\section{Gaussian Cellular Sheaves}\label{sec:gcs} 

\begin{wrapfigure}
    [12]{r}{0.5\textwidth}
\vspace{-12pt}
\includegraphics[width=.98\linewidth]{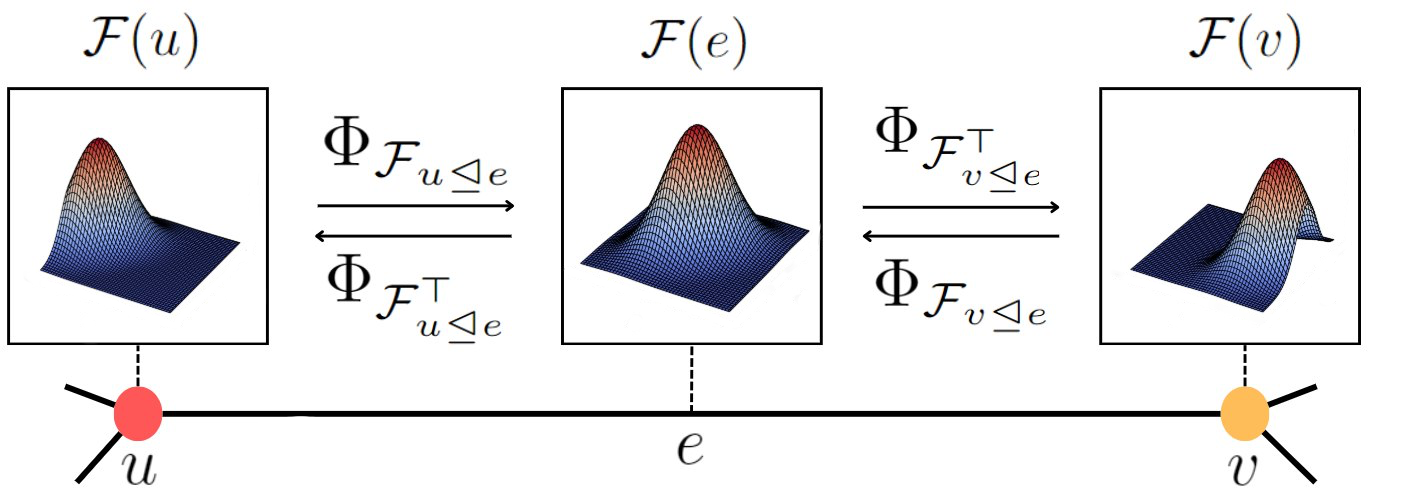}
\vspace*{-2.8mm}
    \caption{A Gaussian sheaf $(G,\gF)$ shown for a single edge $e = (u,v)$ of $G$. The stalks are $\gauss{2} \simeq \sR^2 \!\times\! \psd^2$, the space of Gaussian distributions on $\sR^2$. The distribution features move between the spaces through restriction maps.}
    \label{fig:sheaf}
    \vspace{-25pt}
\end{wrapfigure}
We propose a cellular sheaf suited for graph data whose node features are multivariate normal distributions. Briefly, each stalk of a  Gaussian sheaf is a space of Gaussian distributions, and restriction maps are structure-preserving maps between such spaces. Since  Gaussians are uniquely determined by their parameters (mean and covariance), there is an isomorphism between the space $\gauss{d}$ of Gaussian distributions on $\sR^d$ and $\sR^d \times \psd^{d}$, in which $\psd^{d}$ is the cone of $n$-dimensional positive semidefinite matrices. Figure \ref{fig:sheaf} illustrates the Gaussian sheaf.

In this section, we introduce the sheaf of means whose stalks are $\sR^d$, and the sheaf of covariances whose stalks are $\psd^{d}$, which will be melded together to define Gaussian sheaves. Readers primarily interested in the GSNN architecture can take \cref{def:gauss_sheaf} and the normalized Laplacian (\cref{def:lap_sheaf_covariances}) as the key objects: the categorical machinery developed here ensures these are well-defined.\looseness=-1

\begin{remark}\label{rmk:pushforward}
    For a Gaussian distribution $\nu = \gN(\mu, \Sigma)$ with $\mu \in \sR^d$ and $\Sigma \in \psd^d$, the pushforward of $\nu$ by a matrix $A \in \sR^{d \times d}$ is the Gaussian distribution $A\#\nu = \gN(A\mu, A\Sigma A^{\top})$. This closure under pushforward by linear maps is what inspired us to develop a Gaussian sheaf connecting the mean and covariance sheaves through their restriction maps.
\end{remark}

Observe that the sheaf of means is a standard cellular sheaf, as defined in \cref{def:cellsheaf}. The nomenclature is to reinforce that the vectors in it stalks should be interpreted as a mean in a Gaussian distribution. Thus, we dedicate our attention to introduce the sheaf of covariances, which is  the technically challenging part.

\subsection{Sheaves for covariance matrices}
\label{subsec:gauss_construction}

As an intermediate step towards Gaussian sheaves, we propose the category \textbf{SDef} whose objects are the (convex) cones $\psd^n = \{\Sigma \in \sR^{n\times n} \mid \Sigma = \Sigma^\top, \Sigma \succeq 0\}$ of positive semidefinite matrices, and whose morphisms $\varphi_A : \psd^n \to \psd^m$ act by $\varphi_A(\Sigma) = A\Sigma A^\top$ for $A \in \operatorname{Hom}(\sR^n, \sR^m)$. The family of such morphisms is closed under addition and satisfies $\varphi_B \circ \varphi_A = \varphi_{BA}$. We use the same symbol for a linear map and its matrix in a chosen basis.\looseness=-1
\vspace{-5pt}
\begin{mybox}{}
\vspace{-5pt}
\textbf{The sheaf of covariances.} Let $G = (V,E)$ be an undirected graph and $(G, \gM)$ a sheaf of means with stalk dimension $d$, and assume $\rmapm{v}{e} \in \sR^{d\times d}$ is invertible for every $v \unlhd e$. The sheaf of covariances $(G, \gC)$ has $\psd^{d}$ as stalks (for both nodes and edges), with restriction maps $\rmapc{v}{e} = \varphi_{\rmapm{v}{e}}$ induced by those of $(G, \gM)$.
\vspace{-5pt}
\end{mybox}
To define the cochain spaces of $(G, \gC)$, observe that $\gC(v)$ and $\gC(e)$ are not vector spaces, so direct sums no longer apply. Category theory provides the natural replacement: finite direct sums in $\textbf{Vect}$ correspond to categorical products, and the analogous product in $\textbf{SDef}$ for $\psd^n, \psd^m$ is the cone of block-diagonal PSD matrices $\psd^n \oplus \psd^m = \{\Sigma_1 \oplus \Sigma_2 \mid \Sigma_1 \in \psd^n, \Sigma_2 \in \psd^m\}$, where $\Sigma_1 \oplus \Sigma_2$ denotes the block-diagonal matrix with $\Sigma_1$ and $\Sigma_2$ on the diagonal. We thus define $C^0(G, \gC) = \bigoplus_{v \in V} \gC(v)$ and $C^1(G, \gC) = \bigoplus_{e \in E} \gC(e)$, where $\bigoplus$ is the product in \textbf{SDef}.

We would like to define a coboundary map $\delta_{\gC}$ and recover $H^0(G,\gC) = \ker \delta_{\gC}$, but for $A \neq 0$ the equation $A\Sigma A^\top = 0$ forces $\Sigma = 0$, so kernels in \textbf{SDef} are degenerate. We instead use \emph{equalizers}, which generalize kernels by capturing the set on which two maps agree: for $\varphi_A, \varphi_B : \psd^n \to \psd^m$, $\operatorname{Eq}(\varphi_A, \varphi_B) = \{\Sigma \in \psd^n \mid A\Sigma A^\top = B\Sigma B^\top\}$.  For comparison, in \textbf{Vect}, given a linear map $\varphi$ and the null map $0$, we have $\operatorname{Eq}(\varphi,0) = \ker(\varphi)$. Importantly, the constructions of products and equalizers extend to sub-cones of $\psd^n$; we provide a more thorough treatment in Appendix \ref{sec:app:catshv}.

Now, similarly to works on the (co)homology of semi-modules~\citep{patchkoria2018cohomology} (which, like PSD cones, lack additive inverses and thus admit only trivial kernels), we define $\delta_{\gC}$ in terms of a pair of positive and negative operators $(\delta_{\gC}^+, \delta_{\gC}^-)$ and characterize $H^{0}(G, \gC)$ as the equalizer of both. 
We also define $\delta_{\gC}^+$ and $\delta_{\gC}^-$ as a function of their counterparts $\delta_{\gM}^+$ and $\delta_{\gM}^-$ in $\gM$, where $\delta_{\gM}^+ - \delta_{\gM}^- = \delta_{\gM}$. 
This dependence captures the idea that taking linear transformations of independent random variables has a coordinated impact in both the resulting mean-vector and the covariance matrix (see \Cref{rmk:pushforward}).\looseness=-1
\begin{definition}
    Let $\{\mathbf{e}_i\}_{i \in [|E|]}$ be the standard basis of $\sR^{|E|}$, and consider a family $\{\mathbf{A}_i\}_{i \in [|E|]}$ with $\mathbf{A}_i \!=\! (\mathbf{e}_i^{\phantom{\top}}\!\mathbf{e}_i^{\top})\otimes I_d$.
    We define the operator $\delta_{\gC}\! =\!  \delta_{\gC}^+\! + \delta_{\gC}^-$ s.t. for all $\Sigma \in C^0(G,\gC)$ \looseness=-1
\begin{align*}
    &\delta_{\gC}^+(\Sigma) = \sum_i \varphi_{\mathbf{A}_i^{\phantom{+}}}\!\!\!\left(\varphi_{\delta_{\gM}^+}(\Sigma)\right) \quad \text{ and } \quad
    \delta_{\gC}^-(\Sigma) = \sum_i \varphi_{\mathbf{A}_i^{\phantom{-}}}\!\!\!\left(\varphi_{\delta_{\gM}^-}(\Sigma)\right).
\end{align*}    
\looseness=-1
\end{definition}
\vspace{-8pt}
The family of $\mathbf{A}_i$ guarantees that we only have nonzero entries on the diagonal blocks, thus $\delta_{\gC}$ preserves the block-diagonal structure in the target space. Once the operator is defined as the sum of positive operators applied on a PSD matrix, $\delta_{\gC}$ also preserves the PSD property.

We note that $\operatorname{Eq}(\delta_{\gM}^+, \delta_{\gM}^-) \!=\! \ker \delta_{\gM}$, so equalizers can be used to define $H^0(G,\gM)$ for the sheaf of means. Therefore our construction is a natural extension of the standard case.  Moreover, we can continue extending the theory of cellular sheaves valued on \textbf{Vect} to cellular sheaves valued on \textbf{SDef}. For instance, the following holds:
\begin{theorem}\label{thm:sdefH0}
      We have that $H^0(G,\gC) = \Gamma(G, \gC)$, and for an oriented edge $e = (v,u) \in E$ the output of $\delta_{\gC}$ in $(G,\gC)$ is $\delta_{\gC}(\Sigma)_e = \rmapc{v}{e}(\Sigma_v) + \rmapc{u}{e}(\Sigma_u)$. Furthermore, $\delta_{\gC}(\Sigma) = \sum_i \varphi_{\mathbf{A}_i}\varphi_{\delta_{\gM}}(\Sigma)$.
\end{theorem}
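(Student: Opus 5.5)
The plan is to compute everything blockwise, writing $\delta_{\gM}^+$ and $\delta_{\gM}^-$ as block matrices in $\sR^{|E|d \times |V|d}$. Fix the orientation $e=(v,u)$. The $e$-th block row of $\delta_{\gM}^+$ has the single nonzero block $\rmapm{v}{e}$, in column $v$. The $e$-th block row of $\delta_{\gM}^-$ has the single nonzero block $\rmapm{u}{e}$, in column $u$. With this choice, $\delta_{\gM}=\delta_{\gM}^+-\delta_{\gM}^-$ matches the coboundary of Section~\ref{sec:prelim}. An element $\Sigma \in C^0(G,\gC)$ is a block-diagonal matrix $\bigoplus_v \Sigma_v$.

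First I compute $\varphi_{\delta_{\gM}^+}(\Sigma)=\delta_{\gM}^+\Sigma(\delta_{\gM}^+)^\top$. Its $(e,e')$ block is $\rmapm{v}{e}\Sigma_v\rmapm{v'}{e'}^\top$ when $e$ and $e'$ have the same head $v=v'$, and $0$ otherwise. This matrix need not be block diagonal. However, conjugating by $\mathbf{A}_i=(\mathbf{e}_i\mathbf{e}_i^\top)\otimes I_d$ extracts exactly the $(i,i)$ block. Summing over $i$ therefore gives the block-diagonal matrix with $e$-block
\[
\rmapm{v}{e}\Sigma_v\rmapm{v}{e}^\top=\rmapc{v}{e}(\Sigma_v).
\]
The same computation gives $\delta_{\gC}^-(\Sigma)_e=\rmapc{u}{e}(\Sigma_u)$. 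Adding the two yields the stated formula $\delta_{\gC}(\Sigma)_e=\rmapc{v}{e}(\Sigma_v)+\rmapc{u}{e}(\Sigma_u)$. Here I use that morphisms of \textbf{SDef} are closed under addition, so the sum is again PSD and block diagonal.

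For the identity with $\delta_{\gM}$, I expand the $e$-th diagonal block of $\delta_{\gM}\Sigma\delta_{\gM}^\top$. The $e$-th block row of $\delta_{\gM}$ is $\rmapm{v}{e}$ at $v$ and $-\rmapm{u}{e}$ at $u$, so this block equals
\[
\rmapm{v}{e}\Sigma_v\rmapm{v}{e}^\top+\rmapm{u}{e}\Sigma_u\rmapm{u}{e}^\top-\rmapm{v}{e}\Sigma_{vu}\rmapm{u}{e}^\top-\rmapm{u}{e}\Sigma_{uv}\rmapm{v}{e}^\top.
\]
The cross terms vanish because $\Sigma$ is block diagonal and $u\neq v$, since there are no self-loops. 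Projecting with $\sum_i\varphi_{\mathbf{A}_i}$ then gives exactly $\delta_{\gC}^+(\Sigma)+\delta_{\gC}^-(\Sigma)$.

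Finally, $H^0(G,\gC)$ is by definition the equalizer $\operatorname{Eq}(\delta_{\gC}^+,\delta_{\gC}^-)$. Both sides are block diagonal, so the equality $\delta_{\gC}^+(\Sigma)=\delta_{\gC}^-(\Sigma)$ holds iff it holds block by block. That is, $\rmapc{v}{e}(\Sigma_v)=\rmapc{u}{e}(\Sigma_u)$ for every edge, which is precisely the defining condition of $\Gamma(G,\gC)$.

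The main obstacle I expect is bookkeeping rather than a conceptual step: the whole argument rests on the off-diagonal contributions being discarded. This happens in two ways. In $\varphi_{\delta^\pm_{\gM}}$, the discarding is done by the $\mathbf{A}_i$ projections, since blocks for distinct edges with a shared endpoint do not vanish on their own. In the $\delta_{\gM}$ identity, it comes from block-diagonality of $\Sigma$ together with the absence of self-loops. I will state the orientation convention and the no-self-loop assumption explicitly so that both sign and cross-term cancellations are justified.
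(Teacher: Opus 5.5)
Your proposal is correct and follows the paper's proof. Both arguments use the $\mathbf{A}_i$ to extract the diagonal blocks $\rmapc{v}{e}(\Sigma_v)$ and $\rmapc{u}{e}(\Sigma_u)$ of $\varphi_{\delta_{\gM}^\pm}(\Sigma)$, and then read off $\operatorname{Eq}(\delta_{\gC}^+,\delta_{\gC}^-)$ block by block as $\Gamma(G,\gC)$; your explicit removal of the cross terms $\rmapm{v}{e}\Sigma_{vu}\rmapm{u}{e}^\top$ via block-diagonality of $\Sigma$ fills in what the paper calls ``a similar argument'' for the $\varphi_{\delta_{\gM}}$ identity.
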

Now, we use $\delta_{\gC}$ to define the Laplacian diffusion operator.

\begin{definition}\label{def:lap_sheaf_covariances}
    The sheaf Laplacian operator for a sheaf $(G,\gC)$ valued on \textbf{SDef} over an undirected graph $G$ is the map $L_{\gC}:C^0(G,\gC) \to C^0(G,\gC)$ defined on a 0-cochain $\Sigma$ as $L_{\gC}(\Sigma) = \sum_{i=1}^n \varphi_{\mathbf{B}_i^{\phantom{+}}}\!\!\!(\varphi_{(\delta_{\gM}^+)^{\top}}(\delta_{\gC}(\Sigma)) + \varphi_{(\delta_{\gM}^-)^{\top}}(\delta_{\gC}(\Sigma)))$, with $\mathbf{B}_i = (\mathbf{e}_i^{\phantom{\top}}\!\mathbf{e}_i^{\top}) \otimes I_d \in \sR^{nd \times nd}$.  
Moreover,
    \begin{equation}
        L_{\gC}(\Sigma)_v = \sum_{v,u \unlhd e} \varphi_{\rmapm{v}{e}^{\top}}(\varphi_{\rmapm{v}{e}^{\phantom{\top}}}(\Sigma_v) + \varphi_{\rmapm{u}{e}^{\phantom{\top}}}(\Sigma_u)).
    \end{equation}\label{eq:sheaf_lap}
\end{definition}

In words, the covariance Laplacian at node $v$ aggregates pushforwards of neighboring covariances along the restriction maps, mirroring the neighborhood aggregation of the standard sheaf Laplacian $L_\gM$ but acting on the PSD cone rather than on a vector space.

We can also describe this operator in terms of $L_{\gM}$. Let $k = \max_{v \in V} |N(v)|$ and decompose $L_{\gM} = \sum_{i=1}^{k} L_i + L'$, where each $L_i$ is the block-diagonal matrix with $(L_i)_{vv} = \rmapm{v}{e_i}^{\top}\rmapm{v}{e_i}$ if $v$ has an $i$-th neighbor (and zero otherwise), and $L'$ contains the off-diagonal entries $(L')_{vu} = -\rmapm{v}{e}^{\top}\rmapm{u}{e}$. The block-diagonals satisfy $(\sum_i L_i)_{vv} = (L_{\gM})_{vv}$, so $L_{\gC}(\Sigma) = \sum_{i=1}^{k} \varphi_{L_i}(\Sigma) + \sum_{j=1}^n \varphi_{\mathbf{B}_j}(\varphi_{L'}(\Sigma))$.

We also define the  normalized covariance sheaf Laplacian:
\begin{equation}
\Delta_{\gC}(\Sigma) = \sum_{i=1}^{k} \varphi_{\Delta_i}(\Sigma) + \!\sum_{j=1}^n \varphi_{\mathbf{B}_j\Delta'}(\Sigma),    
\end{equation}
with $\Delta_i = D^{-\frac{1}{2}}L_iD^{-\frac{1}{2}}$ and $\Delta' = D^{-\frac{1}{2}}L'D^{-\frac{1}{2}}$.

\cref{thm:sdefLap} shows that the identity $\ker L_{\gM} = H^0(G,\gM)$ in the \textbf{Vect}-valued sheaf is translated to the \textbf{SDef}-valued sheaf in terms of equalizers. Concretely, $L_{\gC}$ admits a decomposition $L_{\gC} = L_{\gC}^+ + L_{\gC}^-$ into positive and negative 
parts, mirroring the analogous decomposition of $L_{\gM}$ in \textbf{Vect} (see \cref{sec:app:p}).\looseness=-1

\begin{theorem}
\label{thm:sdefLap}
    Let $G$ be a connected graph and $(G, \gC)$ be a sheaf valued in \textbf{SDef}. Then $\operatorname{Eq}(L_{\gC}^+, L_{\gC}^-) = H^0(G,\gC)$.
\end{theorem}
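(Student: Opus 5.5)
The plan is to reduce the equalizer condition to a linear equation on symmetric matrices and then run the classical energy argument, $\langle \mathbf{X}, L_{\gM}\mathbf{X}\rangle = \|\delta_{\gM}\mathbf{X}\|^2$, with the Frobenius inner product in place of the Euclidean one. I will read the decomposition $L_{\gC}=L_{\gC}^+ + L_{\gC}^-$ (referred to \cref{sec:app:p}) as the diagonal and off-diagonal parts of the vertex-wise formula in \cref{def:lap_sheaf_covariances}. With $F_{v}:=\rmapm{v}{e}$ and $P_v^e := \varphi_{F_v}(\Sigma_v) = \rmapc{v}{e}(\Sigma_v)$, this reading is
\[
L_{\gC}^+(\Sigma)_v = \sum_{v,u\unlhd e} \varphi_{F_v^{\top}}\bigl(P_v^e\bigr), \qquad L_{\gC}^-(\Sigma)_v = \sum_{v,u\unlhd e} \varphi_{F_v^{\top}}\bigl(P_u^e\bigr).
\]
This mirrors $D$ versus $-A$ in $L_{\gM}$. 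I expect this to be the main obstacle: pinning down precisely which positive and negative parts the appendix intends. If it uses $\varphi_{L_i}$ and $\varphi_{\mathbf{B}_j}\varphi_{L'}$ instead, I would first check that on each vertex block these collapse to the two displayed sums, possibly up to cross terms that I would need to show vanish because of the block-diagonal structure of $\Sigma$.

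The inclusion $H^0(G,\gC)\subseteq \operatorname{Eq}(L_{\gC}^+,L_{\gC}^-)$ is immediate. By \cref{thm:sdefH0}, $H^0(G,\gC)=\Gamma(G,\gC)$, so a global section satisfies $P_v^e=P_u^e$ for every edge $e$. Then the two sums agree term by term.

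For the converse, suppose $L_{\gC}^+(\Sigma)=L_{\gC}^-(\Sigma)$. Set $D_e := P_v^e - P_u^e$ for an oriented edge $e=(v,u)$; this is a symmetric matrix that changes sign with orientation. Since $\varphi_{F_v^\top}$ extends linearly to all symmetric matrices, the hypothesis reads
\[
\sum_{e\ni v} F_v^{\top} D_e^{(v)} F_v = 0 \quad\text{for every } v,
\]
where $D_e^{(v)}$ is $D_e$ oriented away from $v$. Now take the trace of the $v$-th equation against $\Sigma_v$ and sum over $v$. Using $\operatorname{tr}(\Sigma_v F_v^\top D F_v)=\operatorname{tr}(P_v^e D)$, each edge $e=(v,u)$ contributes
\[
\operatorname{tr}(P_v^e D_e) + \operatorname{tr}(P_u^e(-D_e)) = \operatorname{tr}(D_e^2) = \|D_e\|_F^2 .
\]
Hence $0=\sum_{e}\|D_e\|_F^2$, so $D_e=0$ for every edge, i.e.\ $\rmapc{v}{e}(\Sigma_v)=\rmapc{u}{e}(\Sigma_u)$. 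Thus $\Sigma\in\Gamma(G,\gC)=H^0(G,\gC)$, again by \cref{thm:sdefH0}.

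The only subtlety in this step is that subtraction is not available inside $\psd^d$. I will therefore do the computation in the ambient vector space of symmetric matrices, where the maps $\varphi_A$ are linear, and return to the cone only at the end. Invertibility of the restriction maps and connectedness of $G$ do not appear to be needed beyond what \cref{thm:sdefH0} already uses. I will note this, but keep the hypotheses as stated.
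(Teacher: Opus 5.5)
Your proof is correct, and it takes a genuinely different route from the paper's. The decomposition you were worried about matches your reading. The appendix defines $L_{\gC}^+(\Sigma)=\sum_{i,j}\varphi_{\mathbf{B}_i}\bigl(\varphi_{(\delta_{\gM}^+)^{\top}\mathbf{A}_j\delta_{\gM}^+}(\Sigma)+\varphi_{(\delta_{\gM}^-)^{\top}\mathbf{A}_j\delta_{\gM}^-}(\Sigma)\bigr)$, and $L_{\gC}^-$ analogously with the mixed products $(\delta_{\gM}^{\pm})^{\top}\mathbf{A}_j\delta_{\gM}^{\mp}$. Their $v$-th diagonal blocks are exactly your two sums, independent of orientation.

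The paper also reduces the equalizer condition to your vertex equation, but then argues \emph{locally}. It fixes $v$, orients all edges at $v$ away from $v$, and reverses one incident edge $e_p$ at a time. Intersecting the resulting solution sets, it concludes that each single term $\rmapm{v}{e_p}^{\top}(\delta^+_p-\delta^-_p)\rmapm{v}{e_p}$ vanishes, and invertibility of the restriction maps then gives $\delta^+_p=\delta^-_p$.

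Your argument is \emph{global}. Pairing the vertex equations with $\Sigma_v$ in the Frobenius inner product gives the identity $\langle\Sigma,\tilde\delta^{*}\tilde\delta\Sigma\rangle=\|\tilde\delta\Sigma\|_F^2$. Here $\tilde\delta(\Sigma)_e=\varphi_{\rmapm{v}{e}}(\Sigma_v)-\varphi_{\rmapm{u}{e}}(\Sigma_u)$ is a linear map on symmetric matrices, and $\varphi_{F^{\top}}$ is the Frobenius adjoint of $\varphi_F$.

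This buys more than dropping the invertibility and connectedness hypotheses. It supplies a step the local argument cannot provide. A single vertex equation only constrains the sum over the incident edges. For example, take the constant scalar sheaf on the path with vertices $u_1,v,u_2$. The equation at $v$ is $2\sigma_v=\sigma_{u_1}+\sigma_{u_2}$, which $(\sigma_{u_1},\sigma_v,\sigma_{u_2})=(0,1,2)$ satisfies. Moreover, reversing $e_p$ swaps $\delta^+_p$ and $\delta^-_p$, so $\delta^+_p-\delta^-_p$ changes sign, and the two equations the paper intersects are actually the same equation. Separating the edge terms requires coupling the equations at different vertices, which is exactly what your energy identity does.
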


%
\vspace{-5pt}
\begin{mybox}{}
\vspace{-5pt}
\textbf{In summary}, The sheaf of covariances is a \textbf{SDef}-valued cellular sheaf, whose respective notions of coboundary map and sheaf Laplacian are induced by \textbf{Vect}-valued cellular sheaf valued. Moreover, central results regarding the standard sheaf Laplacian are extended to a sheaf Laplacian in \textbf{SDef}.\looseness=-1
\vspace{-5pt}
\end{mybox}

\subsection{Gaussian sheaves}\label{subsection:means+covariances}

We now define Gaussian sheaves combining sheaves of means (in \textbf{Vect}) and of covariances (in \textbf{SDef}). \looseness=-1 
\begin{definition}\label{def:gauss_sheaf}
    Given an undirected graph $G=(V,E)$, a Gaussian (cellular) sheaf $(G, \gF)$ associates: 
     \vspace{-4pt}
\begin{enumerate}[leftmargin=16pt, itemsep=1pt, nosep]
        \item  A product space $\gF(v) \coloneqq (\sR^{d_v}, \psd^{d_v})$ for every $v \in V$.
        \item  A product space $\gF(e) \coloneqq (\sR^{d_e}, \psd^{d_e})$ for every $e \in E$.
        \item  A map $\Phi_{\rmapf{v}{e}}\!\!:\! \gF(v) \!\!\to\!\! \gF(e)$ for every incident pair $v \unlhd e$, w/ $\Phi_{\rmapf{v}{e}}(\mu, \Sigma) =  (\rmapf{v}{e}\mu,\varphi_{\rmapf{v}{e}}(\Sigma))$. \looseness=-1
    \end{enumerate}
    \vspace{-5pt}
\end{definition}
Note that $H^0(G, \gF) \!=\! H^0(G, \gM) \!\times\! H^0(G, \gC)$.
We denote by $(G,\gF)$ the Gaussian sheaf over $G$ and from now on this will be our main object of interest. We fix a common dimension $d_v = d_e = d$ for all stalks in our sheaf. \looseness=-1

These constructions can be described as a sheaf valued in a category of Gaussian distributions, since there is a bijection $\gauss{d} \simeq \sR^d \times \psd^d$, where $\gauss{d}$ denotes the space of Gaussian distributions on $\sR^d$. Both descriptions are equivalent --- see Appendix \ref{sec:app:gauss_sheaf} for details.
The sheaf Laplacian for the Gaussian sheaf is obtained by the combination of the means and covariances' Laplacian.
\begin{definition}\label{def:lap_sheaf_gaussian}
    For every $(\mu,\Sigma) \in C^0(G,\gF)$, the Gaussian sheaf Laplacian is the map $L_{\gF}\!:\! C^0(G,\gF) \!\to \!C^0(G,\gF)$  given by \looseness-1
    \begin{equation*}
        L_{\gF}(\mu,\Sigma) = (L_{\gM}\mu,L_{\gC}(\Sigma)).
    \end{equation*}
\end{definition}

Similarly, we define the normalized Gaussian sheaf Laplacian as $\Delta_{\gF}(\mu,\Sigma) = (\Delta_{\gM}\mu, \Delta_{\gC}(\Sigma))$.

\begin{example}
    Consider the constant sheaf on $G$, i.e., a sheaf whose stalks are $(\sR, \sR_+)$ and whose restriction maps $\rmapm{v}{e}$ are identities (providing $\rmapc{v}{e}$ are identities too). On the mean side, the Laplacian $L_{\gM}$ is the graph Laplacian $L_G$. On the covariance side we obtain $L_{\gC}(\Sigma)_v = \deg(v)\Sigma_v + \sum_{u \in N(v)} \Sigma_u$ or, equivalently, $L_{\gC}(\Sigma) = \sum_j \mathbf{B}_jL_{G}\Sigma L_G^{\top}\mathbf{B}_j^{\top}$. 
\end{example}
The Laplacian for Gaussian sheaves generalizes the graph Laplacian in the sense that the constant sheaf valued on $\textbf{Vect} \times \textbf{SDef}$ can be made into the constant sheaf valued on \textbf{Vect} by letting all stalks be $(\sR,\{0\})$, where it holds that $\textbf{Vect} \times \mathbf{0}$ is isomorphic to $\textbf{Vect}$, and $\mathbf{0}$ denotes the trivial category with only one object. We could also retrieve the cellular sheaf of \citet{bodnar2022neural} if we let the stalk spaces be $(\sR^d, \{0\})$. Notice that both last examples would describe Dirac delta distributions. \looseness=-1

\section{Analyzing classes of restriction maps}\label{subsec:diff_restmaps}
Since $\mathcal{G}(\mathbb{R}^d)$ is closed under linear maps, we can describe their action on Gaussians as group actions $\mathfrak{g} \acts \mathcal{G}(\mathbb{R}^d)$, with $\mathfrak{g}$ the group containing the restriction maps. Following \citet{bodnar2022neural}, we consider $\mathfrak{g} \in \{GL(d), O(d), D(d)\}$ and analyze the consequences of each choice.

In what follows, we fix a Gaussian $\vartheta \in \mathcal{G}(\mathbb{R}^d)$ with parameters $(\mu, \Sigma)$ and define, for a path $\gamma = (v, v_1, \dots, v_{\ell}, u)$, the transport $\mathbf{P}^\gamma_{v \to u}: \mathcal{F}(v) \to \mathcal{F}(u)$ as the composition $\Phi_{\rmapf{u}{e_1}^{\top}} \circ \Phi_{\rmapf{v_{\ell}}{e_1}} \circ \cdots \circ \Phi_{\rmapf{v_1}{e_{\ell+1}}^{\top}} \circ \Phi_{\rmapf{v}{e_{\ell+1}}}$ along the edges of $\gamma$.

\begin{proposition}\label{prop:orbitorth}
    A Gaussian $\varrho \in \mathcal{G}(\mathbb{R}^d)$ with parameters $(\mu', \Sigma')$ is in the orbit of $\vartheta$ under the action of $O(d)$ only if $\|\mu'\|_2 = \|\mu\|_2$ and $\Sigma'$ has the same eigenvalues as $\Sigma$.
\end{proposition}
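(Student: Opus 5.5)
The plan is to make the action of $O(d)$ on $\mathcal{G}(\mathbb{R}^d)$ explicit and then read off the two invariants. By \Cref{rmk:pushforward}, a matrix $Q \in O(d)$ acts on $\vartheta = \mathcal{N}(\mu,\Sigma)$ by pushforward, giving $Q\#\vartheta = \mathcal{N}(Q\mu, Q\Sigma Q^\top)$. Under the identification $\mathcal{G}(\mathbb{R}^d) \simeq \sR^d \times \psd^d$ this is the map $\Phi_Q(\mu,\Sigma) = (Q\mu, \varphi_Q(\Sigma))$ of \cref{def:gauss_sheaf}, with $\varphi_Q$ the \textbf{SDef} morphism.

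We first check that this is a group action, so that the orbit is exactly $\{\Phi_Q(\mu,\Sigma) : Q \in O(d)\}$. The identity matrix acts trivially. Compatibility with composition follows from $\varphi_B \circ \varphi_A = \varphi_{BA}$ on the covariance side and from linearity on the mean side. In particular, orbits are single images: any composite of group elements (for instance a transport $\mathbf{P}^\gamma$ built from orthogonal restriction maps and their transposes, which are their inverses) is again a single $Q \in O(d)$.

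Now suppose $\varrho = (\mu',\Sigma')$ lies in the orbit. Then there is some $Q \in O(d)$ with $\mu' = Q\mu$ and $\Sigma' = Q\Sigma Q^\top$.
\begin{itemize}
\item For the mean, $\|\mu'\|_2^2 = \mu^\top Q^\top Q \mu = \|\mu\|_2^2$, since $Q^\top Q = I_d$.
\item For the covariance, $Q^\top = Q^{-1}$, so $\Sigma' = Q\Sigma Q^{-1}$ is similar to $\Sigma$. Hence the two have the same characteristic polynomial and thus the same eigenvalues, with multiplicities. Equivalently, if $\Sigma = U\Lambda U^\top$ is a spectral decomposition, then $\Sigma' = (QU)\Lambda(QU)^\top$ with $QU$ orthogonal, which is a spectral decomposition with the same $\Lambda$.
\end{itemize}
This gives both necessary conditions.

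No step is a real obstacle. The only point needing care is the first one: the "orbit under $O(d)$" must be read as the simultaneous action on the pair $(\mu,\Sigma)$ by the same $Q$, rather than as independent actions on mean and covariance. With that reading the implication is immediate.

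It is also worth recording that the converse fails, which is why the statement says "only if". For $d=2$, take $\mu = e_1$, $\Sigma = \mathrm{diag}(1,2)$, $\mu' = e_1$ and $\Sigma' = \mathrm{diag}(2,1)$. Any $Q$ with $Q e_1 = e_1$ is $\mathrm{diag}(1,\pm 1)$, and such $Q$ fix $\Sigma$, so $\Sigma'$ is never reached even though both invariants agree.
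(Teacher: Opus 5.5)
Your proof is correct and matches the paper's argument: write $\varrho = \mathcal{N}(Q\mu, Q\Sigma Q^{\top})$ for some $Q \in O(d)$, then $\|Q\mu\|_2 = \|\mu\|_2$, and $Q\Sigma Q^{\top} = Q\Sigma Q^{-1}$ is similar to $\Sigma$. Your extra remarks go slightly beyond what the paper records, and both are correct: the check that this is a genuine group action, and the $d=2$ counterexample showing that the converse fails.
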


Hence $O(d)$ maps transport $\nu_v$ only to Gaussians whose mean lies on the sphere of radius $\|\mu_v\|$ and whose covariance is similar to $\Sigma_v$. Global sections then exist when $\|\mu_v\| = c$ and $\Sigma_v \in [\Sigma^{\ast}]$ for all $v$, with some $c \in \sR$ and $\Sigma^{\ast} \in \psd^d$.

\begin{proposition}\label{prop:staborth}
    Let $\lambda_1, \dots, \lambda_k$ be the eigenvalues of $\Sigma$ with multiplicities $m_1, \dots, m_k$. If we write $\mu = (\mu_1, \dots, \mu_k)$ where $\mu_j \in \sR^{m_j}$, the stabilizer of $\vartheta$ is in bijection with $\prod_{j=1}^k O(m_j-\mathbbm{1}_{\{\mu_j \neq 0\}})$. If all eigenvalues are different, the stabilizer is in bijection with $(\sZ_2)^{|\{j \:|\: \mu_j = 0\}|}$.
\end{proposition}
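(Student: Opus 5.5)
The plan is to compute the stabilizer directly from the action in \cref{rmk:pushforward}. For $Q \in O(d)$ we have $Q\#\vartheta = \gN(Q\mu, Q\Sigma Q^{\top})$. Since Gaussians are determined by their parameters, $Q$ stabilizes $\vartheta$ if and only if
\begin{equation*}
Q\mu = \mu \quad\text{and}\quad Q\Sigma Q^{\top} = \Sigma.
\end{equation*}
Because $Q^{\top} = Q^{-1}$, the second condition is equivalent to $Q\Sigma = \Sigma Q$. The proof then has two steps: solve the commutation condition, then impose the condition on the mean.

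First I reduce to the eigenbasis. By the spectral theorem, $\Sigma = U \Lambda U^{\top}$ with $U \in O(d)$ and $\Lambda = \lambda_1 I_{m_1} \oplus \cdots \oplus \lambda_k I_{m_k}$. Conjugation by $U$ maps the stabilizer of $(\mu,\Sigma)$ isomorphically onto the stabilizer of $(U^{\top}\mu, \Lambda)$. I will therefore assume $\Sigma = \Lambda$, and read the decomposition $\mu = (\mu_1,\dots,\mu_k)$, $\mu_j \in \sR^{m_j}$, in this eigenbasis; this is the interpretation the statement needs. With this normalization, $Q\Lambda = \Lambda Q$ holds if and only if $Q$ preserves each eigenspace $\ker(\Lambda - \lambda_j I)$. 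This is the key step: if $\Lambda x = \lambda_j x$, then $\Lambda Qx = Q\Lambda x = \lambda_j Qx$. Since the $\lambda_j$ are distinct, the eigenspaces decompose $\sR^d$, so $Q = Q_1 \oplus \cdots \oplus Q_k$ is block diagonal. Each block $Q_j$ is orthogonal because $Q^{\top}Q = I$ holds blockwise. Conversely, every such block matrix commutes with $\Lambda$. Hence the stabilizer of $\Sigma$ alone is $\prod_j O(m_j)$.

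Next I impose $Q\mu = \mu$, which splits blockwise into $Q_j\mu_j = \mu_j$ for each $j$. If $\mu_j = 0$, this places no constraint, leaving $O(m_j)$. If $\mu_j \neq 0$, $Q_j$ must fix the line spanned by $\mu_j$ pointwise. Being orthogonal, $Q_j$ then also preserves its orthogonal complement $\mu_j^{\perp} \cong \sR^{m_j-1}$. Conversely, any orthogonal map of $\mu_j^{\perp}$ extends by the identity on $\mu_j$ to an element of $O(m_j)$ fixing $\mu_j$. This gives a bijection, in fact a group isomorphism, with $O(m_j - 1)$, where $O(0)$ is the trivial group. Combining the blocks yields $\prod_{j=1}^k O(m_j - \mathbbm{1}_{\{\mu_j \neq 0\}})$.

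For the final claim, distinct eigenvalues mean every $m_j = 1$. Then $O(1) = \{\pm 1\} \cong \sZ_2$ when $\mu_j = 0$, and $O(0)$ is trivial when $\mu_j \neq 0$. The product is therefore $(\sZ_2)^{|\{j \,:\, \mu_j = 0\}|}$.

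The only delicate point is the bookkeeping that $\mu_j$ must be taken in eigenbasis coordinates. The rest is routine; I expect the eigenspace-preservation argument to be the main substantive step.
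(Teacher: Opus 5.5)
Your proof is correct and follows the same route as the paper's: diagonalize $\Sigma$, show that the stabilizer of $\Lambda$ is the block-diagonal group $\prod_j O(m_j)$, then impose $Q_j\mu_j=\mu_j$ blockwise to obtain $O(m_j-\mathbbm{1}_{\{\mu_j\neq 0\}})$. You also justify the eigenspace-preservation step and state explicitly that the blocks $\mu_j$ must be read in eigenbasis coordinates (i.e.\ as blocks of $U^{\top}\mu$); the paper's proof leaves both points implicit.
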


Therefore, a transported $\nu_v$ with distinct covariance eigenvalues has a strict stabilizer and will in general differ from its transport into $\gF(u)$. 
\cref{prop:section_existence} gives us a clean characterization of when global sections exist.


\begin{proposition}\label{prop:section_existence}
    Let $(G,\gF)$ be a Gaussian sheaf with $O(d)$ maps over a connected graph $G$. Then $\exists \nu \in H^0(G,\gF)$ with $\nu_v = \vartheta$ if, and only if, for every cycle $\gamma$ based at $v$ holds that $\mathbf{P}^{\gamma}_{v \to v} \in \operatorname{Stab}(\vartheta)$.
\end{proposition}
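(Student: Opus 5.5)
The plan is to reduce global sections to edge-wise equalities and then propagate values from $v$ along paths, using that $O(d)$ maps are invertible with $\rmapf{u}{e}^{-1} = \rmapf{u}{e}^{\top}$. By \cref{thm:sdefH0} and the remark that $H^0(G,\gF) = H^0(G,\gM)\times H^0(G,\gC)$, a $0$-cochain $\nu = (\nu_w)_{w\in V}$ is a global section if and only if, for every edge $e$ with endpoints $w,w'$, we have $\Phi_{\rmapf{w}{e}}(\nu_w) = \Phi_{\rmapf{w'}{e}}(\nu_{w'})$. Since $\rmapf{w'}{e}$ is orthogonal, this equation is equivalent to
\begin{equation*}
\nu_{w'} = \Phi_{\rmapf{w'}{e}^{\top}}\Phi_{\rmapf{w}{e}}(\nu_w).
\end{equation*}
Here I use $\varphi_{B}\circ\varphi_A=\varphi_{BA}$ and $R^{\top}R = I$, so that pushing forward by $\rmapf{w'}{e}^{\top}$ inverts pushing forward by $\rmapf{w'}{e}$ on both the mean and the covariance. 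The right-hand side is exactly the one-edge transport $\mathbf{P}^{(w,w')}_{w\to w'}$. By induction along a path, any global section therefore satisfies $\nu_u = \mathbf{P}^{\gamma}_{v\to u}(\nu_v)$ for every path $\gamma$ from $v$ to $u$.

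For necessity, suppose $\nu\in H^0(G,\gF)$ has $\nu_v=\vartheta$, and let $\gamma$ be a cycle based at $v$. Applying the path identity with $u=v$ gives $\mathbf{P}^{\gamma}_{v\to v}(\vartheta) = \vartheta$. The transport $\mathbf{P}^\gamma_{v\to v}$ is the pushforward $\Phi_Q$ by a product $Q$ of orthogonal matrices, so $Q\in O(d)$. The equality then says $Q\in\operatorname{Stab}(\vartheta)$, which is exactly the claimed condition.

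For sufficiency, I use that $G$ is connected and fix, for each vertex $u$, a path $\gamma_u$ from $v$ to $u$; a spanning tree rooted at $v$ gives a canonical choice. I then define $\nu_u := \mathbf{P}^{\gamma_u}_{v\to u}(\vartheta)$. The first step is to show this is independent of the path. If $\gamma,\gamma'$ both run from $v$ to $u$, then $\gamma$ followed by the reverse of $\gamma'$ is a cycle at $v$. The transport along a reversed edge is the inverse map, because $(R_2^{\top}R_1)^{-1} = R_1^{\top}R_2$. Hence $(\mathbf{P}^{\gamma'}_{v\to u})^{-1}\mathbf{P}^{\gamma}_{v\to u}$ is the transport around this cycle, and by hypothesis it fixes $\vartheta$; this gives $\mathbf{P}^{\gamma}_{v\to u}(\vartheta) = \mathbf{P}^{\gamma'}_{v\to u}(\vartheta)$. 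The second step is to check the edge condition. For an edge $e=\{w,w'\}$, extending $\gamma_w$ by $e$ gives a path to $w'$, so path-independence yields $\nu_{w'} = \Phi_{\rmapf{w'}{e}^{\top}}\Phi_{\rmapf{w}{e}}(\nu_w)$. By the first paragraph this is the section condition on $e$. Finally, $\nu_v=\vartheta$ because we may take the trivial path at $v$.

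The main obstacle is bookkeeping rather than depth. Transport composes contravariantly with path concatenation, and the stabilizer must be handled as a subgroup of $O(d)$ acting on $\gauss{d}$ through $Q\mapsto\Phi_Q$, which is a group action by \cref{rmk:pushforward}. One point needs care: the cycle condition is only imposed at $v$, so I will state explicitly that cycles at other vertices are never needed, since every comparison can be conjugated back to a cycle at $v$ via the chosen paths. I would also note that cycles with backtracking are harmless, because an edge traversed forth and back contributes the identity.
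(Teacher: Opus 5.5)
Your proposal is correct and follows the paper's route. For necessity, you propagate the edge-wise section equations around a closed walk at $v$. The paper proves this step as a separate lemma (\cref{prop:cycleorth}); you inline it using $\rmapf{w'}{e}^{\top}\rmapf{w'}{e}=I$. For sufficiency, you define $\nu_u$ by transporting $\vartheta$ from $v$, get path-independence from the loop $\varpi^{-1}\circ\varsigma$ at $v$, and check each edge with a one-edge extension, exactly as the paper does. You also read ``cycle'' as an arbitrary closed walk based at $v$, backtracking allowed, which is what the paper implicitly uses. That reading is necessary: with only simple cycles through $v$, a pendant vertex attached to a triangle would make the hypothesis vacuous while a section can still fail to exist.
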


When stabilizers are trivial, the only way for $\mathbf{P}^{\gamma}_{v \to v}$ to fix $\nu_v$ is for it to be the identity, i.e.\ $\rmapf{v}{e_{\ell}}^{\top}\rmapf{v_{\ell}}{e_{\ell}} \cdots \rmapf{v_1}{e_{0}}^{\top}\rmapf{v}{e_{0}} = I_d$. A single node with trivially stabilized distribution thus imposes a stringent condition on every cycle through it. When the transport is path-independent, this product is automatically the identity and global sections always exist; we now characterize them in this regime.\looseness=-1

\begin{proposition}\label{prop:section_bijection}
    Let $(G,\gF)$ be a Gaussian sheaf with $O(d)$ maps over a connected graph $G$ where the transport is path-independent. Then $H^0(G,\gF)$ and $\gG(\sR^d)$ are in  bijection.
\end{proposition}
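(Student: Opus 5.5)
Fix a base vertex $v_0 \in V$ and consider the evaluation map
\[
\mathrm{ev}_{v_0}: H^0(G,\gF) \to \gG(\sR^d), \qquad \nu \mapsto \nu_{v_0}.
\]
We claim $\mathrm{ev}_{v_0}$ is a bijection. The argument uses three facts about $O(d)$ restriction maps:
\begin{enumerate}
\item $\rmapf{v}{e}^{\top} = \rmapf{v}{e}^{-1}$.
\item $\Phi_{\rmapf{v}{e}^{\top}}$ inverts $\Phi_{\rmapf{v}{e}}$. This follows from $\varphi_{B}\circ\varphi_A=\varphi_{BA}$.
\item The section condition $\Phi_{\rmapf{v}{e}}(\nu_v)=\Phi_{\rmapf{u}{e}}(\nu_u)$ on an edge $e=(v,u)$ is equivalent to $\nu_u = \Phi_{\rmapf{u}{e}^{\top}}\circ\Phi_{\rmapf{v}{e}}(\nu_v)$, i.e.\ to $\nu_u = \mathbf{P}^{e}_{v\to u}(\nu_v)$.
\end{enumerate}
Recall also that $H^0(G,\gF)=H^0(G,\gM)\times H^0(G,\gC)$ and that $H^0(G,\gC)=\Gamma(G,\gC)$ by \cref{thm:sdefH0}. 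Hence a global section is exactly a pair $(\mu,\Sigma)$ with $\rmapf{v}{e}\mu_v=\rmapf{u}{e}\mu_u$ and $\varphi_{\rmapf{v}{e}}(\Sigma_v)=\varphi_{\rmapf{u}{e}}(\Sigma_u)$ on every edge.

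\textbf{Injectivity.} Let $\nu,\nu'$ be sections with $\nu_{v_0}=\nu'_{v_0}$, and let $u$ be any vertex. Since $G$ is connected, choose a path $\gamma$ from $v_0$ to $u$. Applying fact 3 along each edge and inducting on path length gives
\[
\nu_u=\mathbf{P}^\gamma_{v_0\to u}(\nu_{v_0}) = \mathbf{P}^\gamma_{v_0\to u}(\nu'_{v_0})=\nu'_u .
\]
So $\nu=\nu'$.

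\textbf{Surjectivity.} Given $\vartheta\in\gG(\sR^d)$, define $\nu_u := \mathbf{P}^\gamma_{v_0\to u}(\vartheta)$ for any path $\gamma$ from $v_0$ to $u$; path-independence makes this well defined. To check the section condition on an edge $e=(u,w)$, extend a path $\gamma$ from $v_0$ to $u$ by $e$. By path-independence,
\[
\nu_w = \mathbf{P}^e_{u\to w}(\nu_u),
\]
and fact 3 converts this back into $\Phi_{\rmapf{u}{e}}(\nu_u)=\Phi_{\rmapf{w}{e}}(\nu_w)$. Thus $\nu\in H^0(G,\gF)$ with $\nu_{v_0}=\vartheta$. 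Alternatively, surjectivity follows from \cref{prop:section_existence}: path-independence makes every cycle transport the identity, which lies in $\operatorname{Stab}(\vartheta)$. The construction above, however, also supplies uniqueness.

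\textbf{Main obstacle.} The delicate step is making "path-independent transport" precise so that it applies to paths differing by one appended edge, including backtracking. With orthogonal maps, traversing $e$ forward and back is the identity by fact 2, so the natural reading is: transport depends only on endpoints. I will state that interpretation explicitly. I will also verify fact 2 on the covariance component, namely
\[
\varphi_{F^\top}\bigl(\varphi_F(\Sigma)\bigr)=F^\top F\,\Sigma\, F^\top F=\Sigma .
\]
This identity is where orthogonality, and not mere invertibility, is used to match the transpose-based definition of $\mathbf{P}^\gamma$.
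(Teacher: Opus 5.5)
Your proof is correct and follows essentially the same route as the paper. Both arguments evaluate at a fixed vertex, prove injectivity by transporting $\nu_{v_0}$ along a path to every vertex, and prove surjectivity by defining $\nu_u=\mathbf{P}^{\gamma}_{v_0\to u}(\vartheta)$ via path-independence and checking the edge condition after appending one edge. Your explicit use of orthogonality to turn $\nu_w=\mathbf{P}^{e}_{u\to w}(\nu_u)$ back into the restriction-map equality, and your remark about backtracking walks, fill in details the paper leaves implicit.
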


\cref{prop:section_bijection} generalizes the result of \citep{bodnar2022neural} that under path-independent transport $H^0(G,\gM) \cong \sR^d$.

\begin{proposition}\label{prop:orbitdiag}
    The orbit of $\vartheta$ under the action of $D(d)$ is isomorphic to $D(d)$ and the stabilizer is always trivial, if $\mu_i \neq 0$ for all $i = 1,\dots, n$.
\end{proposition}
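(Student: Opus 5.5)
The plan is to make the action explicit and then use the orbit--stabilizer correspondence. Here $D(d)$ is the group of invertible diagonal matrices, and it acts on $\gauss{d}\simeq\sR^d\times\psd^d$ through the pushforward of \cref{rmk:pushforward}:
\[
D\cdot(\mu,\Sigma)=(D\mu,\,D\Sigma D^{\top}).
\]
Writing $D=\operatorname{diag}(a_1,\dots,a_d)$ with every $a_i\neq 0$, the mean transforms coordinatewise as $(D\mu)_i=a_i\mu_i$, and the covariance transforms entrywise as $(D\Sigma D)_{ij}=a_ia_j\Sigma_{ij}$. I read the index $n$ in the hypothesis as $d$, so the assumption is $\mu_i\neq 0$ for $i=1,\dots,d$.

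First I would compute the stabilizer. Suppose $D\cdot\vartheta=\vartheta$. Comparing means gives $a_i\mu_i=\mu_i$ for each $i$. Since $\mu_i\neq 0$, this forces $a_i=1$ for all $i$, so $D=I_d$. The stabilizer is therefore trivial, and this already settles the second claim. I would also note that the covariance condition adds nothing: the mean alone determines the element. It is worth remarking that the hypothesis is needed. If some $\mu_i=0$ and $\Sigma$ is, say, diagonal, then the sign flip $a_i=-1$ fixes $\vartheta$, in the same spirit as the $\sZ_2$ factors in \cref{prop:staborth}.

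Next I would identify the orbit with the group. Consider the orbit map
\[
\mathrm{ev}_\vartheta: D(d)\to\gauss{d},\qquad D\mapsto D\#\vartheta.
\]
It is surjective onto the orbit by definition. It is injective because $D\#\vartheta=D'\#\vartheta$ implies $D'^{-1}D\in\operatorname{Stab}(\vartheta)=\{I_d\}$. This is exactly the orbit--stabilizer bijection $D(d)/\operatorname{Stab}(\vartheta)\cong D(d)\vartheta$. To make "isomorphic" mean more than a set bijection, I would check two further points. The map is $D(d)$-equivariant, since $D'\cdot(D\#\vartheta)=(D'D)\#\vartheta$, so the orbit is a principal homogeneous $D(d)$-space. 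It is also a homeomorphism onto its image, because the inverse is continuous and explicit: it reads off $a_i=\mu'_i/\mu_i$ from the mean $\mu'$ of the image point.

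The main obstacle is only deciding what "isomorphic" should mean, whether as sets, as $D(d)$-sets, or as topological or smooth spaces. The explicit inverse $(\mu',\Sigma')\mapsto\operatorname{diag}(\mu'_i/\mu_i)$ handles all of these at once, so I would state the result as an equivariant diffeomorphism onto the orbit. The underlying argument is short.
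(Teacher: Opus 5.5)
Your proof is correct and follows essentially the same route as the paper, which also uses $a_i\mu_i=\mu_i$ to show the stabilizer is trivial and reads off $D=\operatorname{diag}(\mu'_i/\mu_i)$ from the mean to identify the orbit with $D(d)$. Your orbit--stabilizer framing, the equivariance and topological remarks, and your reading of the index $n$ as $d$ make the meaning of ``isomorphic'' more precise, but they package the same argument.
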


This, for each $\mu'$ there is a unique $\varrho \in D(d) \cdot \vartheta$ with mean $\mu'$, whose covariance is determined by $\mu, \mu'$, and $\Sigma$. Restriction maps in $GL(d)$ reach a wider class of Gaussians at the cost of more parameters. \looseness=-1

\begin{proposition}\label{prop:orbitgen}
    Under the action of $GL(d)$ we can achieve any mean vector by restricting the covariances to a specific set. Moreover, if $\Sigma$ is positive-definite, by restricting the means to a specific set then we can also achieve any positive-definite covariance matrix. 
\end{proposition}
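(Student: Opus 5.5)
The statement is vague, so the plan starts by fixing a precise reading. The action is $A\cdot(\mu,\Sigma)=(A\mu, A\Sigma A^\top)$ for $A\in GL(d)$, with $\vartheta=(\mu,\Sigma)$ fixed.

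The first claim will be read as follows: for $\mu\neq 0$ and any target $\mu'\neq 0$, the set $\{A\in GL(d)\mid A\mu=\mu'\}$ is nonempty. Its image on the covariance side, $\{A\Sigma A^\top\mid A\mu=\mu'\}$, is the "specific set" to which the covariances are restricted.

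The second claim will be read as follows: for $\Sigma\succ 0$ and any target $\Sigma'\succ 0$, the set $\{A\in GL(d)\mid A\Sigma A^\top=\Sigma'\}$ is nonempty. Its image $\{A\mu\}$ is the set to which the means are restricted.

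The degenerate case $\mu=0$ should be noted explicitly: then only $\mu'=0$ is reachable, since $A$ is invertible, so the first claim concerns nonzero $\mu'$.

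\textbf{Means.} Complete $\mu$ to a basis $(\mu,b_2,\dots,b_d)$ of $\sR^d$, and complete $\mu'$ to a basis $(\mu',b'_2,\dots,b'_d)$. Let $A$ be the linear map sending $\mu\mapsto\mu'$ and $b_i\mapsto b'_i$. Then $A$ is invertible and $A\mu=\mu'$. The full fiber over $\mu'$ is a coset $A\cdot\operatorname{Stab}_{GL(d)}(\mu)$, where the stabilizer consists of matrices fixing $\mu$. The reachable covariances are therefore exactly
\[
\{A S\Sigma S^\top A^\top \mid S\mu=\mu,\ S\in GL(d)\}.
\]
This is the explicit "specific set". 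Each of its elements is PSD and has the same rank as $\Sigma$.

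\textbf{Covariances.} With $\Sigma,\Sigma'\succ 0$, take $A_0=\Sigma'^{1/2}\Sigma^{-1/2}$. This is invertible and satisfies $A_0\Sigma A_0^\top=\Sigma'$. The general solution is $A=\Sigma'^{1/2}Q\Sigma^{-1/2}$ with $Q\in O(d)$. To see this, note that $A\Sigma A^\top=\Sigma'$ holds iff $(\Sigma'^{-1/2}A\Sigma^{1/2})(\Sigma'^{-1/2}A\Sigma^{1/2})^\top=I$, i.e. iff $\Sigma'^{-1/2}A\Sigma^{1/2}$ is orthogonal. The achievable means are therefore
\[
\{\Sigma'^{1/2}Q\Sigma^{-1/2}\mu\mid Q\in O(d)\}.
\]
By \cref{prop:orbitorth}-type reasoning, this is the ellipsoid $\{m : m^\top\Sigma'^{-1}m=\mu^\top\Sigma^{-1}\mu\}$. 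Indeed, $Q\Sigma^{-1/2}\mu$ ranges over the whole sphere of radius $\|\Sigma^{-1/2}\mu\|$, because $O(d)$ acts transitively on spheres. This gives a clean description of the restriction set for the means.

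\textbf{Main obstacle.} The main obstacle is stating the restriction sets precisely, since the mathematics itself is elementary. I would present the invariant $\mu^\top\Sigma^{-1}\mu$, the Mahalanobis norm, as the key quantity: it is preserved by the action, since $(A\mu)^\top(A\Sigma A^\top)^{-1}(A\mu)=\mu^\top\Sigma^{-1}\mu$. Consequently, the orbit of a nondegenerate $\vartheta$ with $\mu\neq 0$ is exactly $\{(\mu',\Sigma')\mid \Sigma'\succ 0,\ \mu'^\top\Sigma'^{-1}\mu'=\mu^\top\Sigma^{-1}\mu\}$, and both claims of the proposition follow as slices of this orbit.
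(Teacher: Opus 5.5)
Your proposal is correct and takes essentially the same route as the paper. The paper also uses the fiber $\{A : A\mu=\mu'\}$ for the means, and parametrizes the covariance side as $A=C^{\top}Q(B^{\top})^{-1}$ with $Q\in O(d)$, where $\Sigma=B^{\top}B$ and $\Sigma'=C^{\top}C$; your $\Sigma'^{1/2}Q\Sigma^{-1/2}$ is the symmetric-square-root instance of this. Your version is sharper than the paper's sketch in two ways. It notes that invertibility of $A$ makes a zero $\mu$ or a zero target $\mu'$ unreachable from the other, which the paper's ``any mean vector'' glosses over. It also identifies the reachable means explicitly as the ellipsoid $\{m : m^{\top}\Sigma'^{-1}m=\mu^{\top}\Sigma^{-1}\mu\}$ via the Mahalanobis invariant.
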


\section{Gaussian Sheaf Neural Networks}\label{sec:GSNN}

We have established the foundations of Gaussian sheaves and their Laplacian operators.
Building upon this, we now introduce Gaussian Sheaf Neural Network (GSNN), the first method for DoD regression on networked data. 
Towards this objective, let $G = (V, E)$ be a graph with nodes $V = \{v_{1}, \dots, v_{n}\}$, and assume each $v \in V$ is equipped with a $s$-variate Gaussian distribution $\nu_v = \mathcal{N}_{s}(\mu_{v}, \Sigma_{v})$. 
Also, let $I$ be the $n$-dimensional identity matrix, and let $h$ and $d$ be the hidden and stalk dimensions.
In practice, $\mathcal{N}_{s}(\mu_{v}, \Sigma_{v})$ is estimated from a node-wise empirical observations via maximum likelihood. 

As described next, GSNN takes as input a graph $G = (V, E)$ with distributional features $\{N_{s}(\mu_{v}, \Sigma_{v}) \colon v \in V\}$ and learns to output samples from a (possibly non-Gaussian) distribution. 
\looseness=-1 
\vspace{-3pt}
\begin{mybox}{}
\vspace{-6pt}


\textbf{Gaussian Sheaf Neural Networks.} The $\ell$-layer GSNN with weights $W_{o} \in \mathbb{R}^{h \times d \times s}$, $W_{1} \in \mathbb{R}^{d \times d}$, and $W_{2} \in \mathbb{R}^{h \times h}$ is defined by the following operations. \looseness=-1 
\begin{enumerate}[leftmargin=20pt]
    \item Let $\oplus$ be the concatenation operator. We initially embed each $\nu_{s}$ into the sheaf stalks, 
    \begin{equation*}
        \mu_0 \gets \bigoplus_{i \in [n]} W_o \mu_{v_{i}}, \qquad \Sigma_0 \gets \bigoplus_{i\in [n]} W_{o} \Sigma_{v_{i}} W_{o}^{\top},
        \end{equation*}
    where we broadcast tensor-matrix products fixing the first dimension of the tensor. We then reshape into $\mu \in \mathbb{R}^{nd \times h}$ and $\Sigma \in \mathbb{R}^{nd \times nd \times h}$.  
    Intuitively, each node $v_{i}$ carries $h$ copies of a $d$-dimensional Gaussian distribution.
    \item We apply the weights on both parameters, broadcasting the application of $\varphi_{I_n \otimes W_1}$ on $\Sigma$ over its last dimension $h$:
    \begin{equation*}
        \mu_0 \gets (I_n \otimes W_1)\mu_0W_2, \qquad \Sigma_0 \gets (I_n \otimes W_1)\Sigma_0(I_n \otimes W_1)^{\top}W_2
    \end{equation*}
    \item Let $\Delta_{\gM}$ and $\Delta_{\gC}$ be the normalized Laplacians of the sheaf of means $\gM$ and of covariances $\gC$, respectively. We compute \looseness=-1
    \begin{equation*}
        \mu_{\ell} = (I - \Delta_{\mathcal{M}})^{\ell}\mu_0, \qquad \Sigma_{\ell} = (I + \Delta_{\gC})^{\ell}\Sigma_0
    \end{equation*}
    \item  As a final step, we obtain samples from the approximate target distribution for each node $v \in V$ by drawing $T$ samples from the learned distribution $\mathcal{N}(\mu^\prime_v, \Sigma^\prime_v)$, $(\nu_v^\prime)_{0}, \ldots, (\nu_v^\prime)_{T}$, for each node $v$, and pushing them through an MLP.
    \vspace{-6pt}
\end{enumerate}





\end{mybox}
GSNN takes the $\ell$-th power of the Laplacian rather than interleaving $\ell$ 1-hop iterations with non-linearities, in the spirit of simple graph convolution~\cite{wu2019simplifying}. This is computationally cheaper and, importantly, preserves positive-semidefiniteness of $\Sigma^\prime$.

Notably, we train our model using the 2-Wasserstein distance as a loss function. More specifically, we compute the 2-Wasserstein distance between GSNN output samples and those observed for the nodes in the training split, using the Sinkhorn algorithm \citep{bonneel2011displacement}. 
We further discuss the merits of this choice in \cref{sec:app:model}.
 


Additionally, restriction maps are parametrized for each incident pair $v \unlhd e \coloneqq (v,u)$ as
\vspace{-2pt}
\begin{equation}\label{eq:gauss_rmap}
    F_{v \unlhd e\coloneqq(v,u)} = \sigma(\Psi(\concat{[\concat{\mu_{v}}{\mu_{u}}]}{[\concat{\Sigma_{v}}{\Sigma_{u}}]})),
\end{equation}
\vspace{-2pt}
in which $\sigma$ is a non-linear activation function and $\Psi$ an MLP. 
Importantly, \citet{bodnar2022neural} show MLPs can learn any sheaf $(G, \mathcal{F})$ whose stalks are vector spaces given $(\mathbf{x}_v, \mathbf{x}_u) \!\neq\! (\mathbf{x}_w, \mathbf{x}_z)$ for all $(v,u) \!\neq\! (w,z) \!\in\! E$. 
Since the covariance sheaf restriction maps are determined by those of the mean sheaf, $\Psi$ may learn any Gaussian sheaf (recall \Cref{def:lap_sheaf_covariances}).


\section{Experiments} \label{sec:experiments}
In this section, we provide an empirical illustration of GSNN's behavior, showing that it produces lower 2-Wasserstein distance to the target distribution than vector-based baselines on a small set of synthetic and real-world datasets. In \cref{subsec:ablation}, we introduce two additional baselines to highlight the importance of aligning model design with the underlying data structure, as well as the benefits of learning the sheaf Laplacian. Overall, we position these experiments as empirical validations that complement our theoretical contributions,  than a comprehensive empirical study. 

\subsection{Distribution-on-distribution regression over graphs}

\noindent \textbf{Simulated data.} We simulate four datasets, each consisting of a graph with 200 nodes linked using  the Barabasi-Albert or the Watts-Strogatz models. For the Barabasi-Albert model, we set the attachment parameter $m\!\in\! \{25, 50\}$. For the Watts-Strogatz model, mean degree $k\! \in \!\{25, 45\}$ and rewiring probability $p=\{0.3, 0.5\}$.
To define node $v$'s distributions, we draw a mean vector $\mu_{\nu_v} \sim \mathcal{N}(U_1, U_2U_2^T)$ and a covariance matrix $\Sigma_{\nu_v} \sim \textit{Wishart-Inv}(U_2U_2^T)$, where $U_1 \in \mathbb{R}^s$ and $U_2 \in \mathbb{R}^{s \times s}$ are drawn point-wise from a uniform with support $[-1,1]$.
Denoting by $N(v)$ the $1$-hop of vertex $v$ and $\nu_v$ its associated Gaussian distribution, we create the target distribution $y_v$ as $\nu_v \ast \Conv_{u \in N(v)} \alpha_u \nu_u$, where $*, \Conv$ denote the convolution operation. The constants $\alpha_u \! = \! \nicefrac{D_{KL}(\mathbf{x}_v||\mathbf{x}_u)^{-1}\!}{\!\max\limits_{u \in N(v)} \! D_{KL}(\mathbf{x}_v||\mathbf{x}_u)^{-1}},\forall u \!\in \! N(v)$ weigh each neighbor's  contribution to the output distribution. \looseness=-1

\noindent \textbf{Real-world data.} We consider two real-world datasets consisting of daily weather data from cities in Canada from \href{https://www.weatherstats.ca}{Canada Weather Stats},  with 200/12524 and 201/12630 nodes/edges. The neighbors of a given city are those cities that lie within a circle of radius of $200$km centered at it. For the first dataset we use the joint distribution of average temperature and dew point for each day in the spring of 2018 to predict the average humidity and pressure station for each day in the summer of 2018. In the second dataset, we use the average dew point and pressure station to predict the average temperature and humidity, for the same period. For both datasets, we recover the mean vector and covariance matrix from the daily measurements to use as input for GSNN, and use the raw measurements as input for the vector-based neural networks.

\begin{table*}[b]
    \centering
    \caption{Mean and standard deviation of the 2-Wasserstein distance for each model and dataset. The two best performances are in {\bf \color{ForestGreen}{green}} and {\bf\color{DarkOrchid}{purple}} respectively. Our model (\textbf{GSNN}) shows the best performance in 5 out of 6 datasets.} 
    \vspace{-1pt}
\begin{adjustbox}{width=\textwidth,center}
\begin{tabular}{ccccccc}
    \toprule
      {} & Barabasi-Albert & Barabasi-Albert & Watts-Strogatz & Watts-Strogatz & Weather1 & Weather2 \\
    Parameters & m = 25 & m = 50 & k = 25, p = 0.3 & k = 45, p = 0.5 & - & - \\
    \midrule
    MLP & $\values{10.04}{0.39}$ & $\values{12.85}{0.63}$ & $\values{13.37}{1.39}$ & $\values{13.37}{1.39}$ & $\values{7.80}{0.57}$ & $\valuess{3.23}{0.21}$  \\
    GCN & $\values{10.12}{0.89}$ & $\values{13.90}{0.34}$ & $\values{13.41}{0.42}$ & $\values{13.41}{0.42}$ & $\values{7.40}{0.77}$ & $\values{4.74}{0.25}$ \\
    \midrule
    Diag-NSD & $\values{9.86}{1.36}$ & $\values{9.86}{1.36}$ & $\values{11.27}{1.00}$ & $\valuesf{10.85}{0.47}$ & $\values{5.79}{0.29}$ & $\values{5.05}{2.06}$ \\
    O(d)-NSD & $\values{9.91}{1.27}$ & $\values{11.12}{0.55}$ & $\valuess{9.87}{1.27}$ & $\values{11.91}{1.20}$ & $\values{5.83}{1.04}$ & $\values{4.43}{1.52}$\\
    Gen-NSD & $\values{9.63}{1.79}$ & $\values{9.63}{1.79}$ & $\values{11.52}{0.89}$ & $\values{11.46}{0.87}$ & $\values{5.86}{0.59}$ & $\values{6.38}{4.46}$ \\
    \midrule
    \textbf{Diag-GSNN} & $\values{8.66}{0.72}$ & $\values{8.76}{0.56}$ & $\values{12.22}{0.55}$ & $\values{12.22}{0.55}$ & $\valuess{5.46}{0.11}$ & $\values{3.92}{0.19}$ \\
    \textbf{O(d)-GSNN} & $\valuesf{8.45}{0.61}$ & $\valuess{8.63}{0.62}$ & $\values{12.29}{1.29}$ & $\values{11.80}{0.70}$ & $\valuesf{5.41}{1.03}$ & $\values{3.41}{0.41}$ \\
    \textbf{Gen-GSNN} & $\valuess{8.48}{0.47}$ & $\valuesf{8.48}{0.47}$ & $\valuesf{8.87}{0.42}$ & $\valuess{10.87}{0.42}$ & $\values{5.55}{0.78}$ & $\valuesf{2.64}{0.30}$ \\
    \bottomrule
    \end{tabular}
\end{adjustbox}
  \label{tab:results}
   \vspace{-12pt}
\end{table*}

\noindent \textbf{Baselines.} To our best knowledge, GSNN is the first method for distribution-on-distribution regression over graphs. Therefore, we propose five baselines to compare GSNN against. The core idea of the baselines is to sample values from the node distributions and push them through well-established predictive models, along with the graph skeleton if the model permits. More specifically, we consider MLPs, GCNs, and NSD with different types of restriction maps. Similar to our model, all baselines are trained using the 2-Wasserstein distance to the target node distributions as loss. \looseness=-1

\noindent \textbf{Experimental setup.} We use a $60\%/20\%/20\%$ node split for train, test, and validation. All experiments were repeated ten times with different seeds. We provide more details on hyperparameter search in \cref{tab:hyp}, and the runtime is available at \cref{tab:runtime}.

\noindent \textbf{Results.} \Cref{tab:results} reports the results for GSNN and all baselines in terms of average 2-Wasserstein distance and standard deviation. Notably, GSNN achieves the best performance on five out of six datasets. As expected from \cref{prop:orbitgen}, general restriction maps (Gen-GSNN) typically lead to better results, yielding the best or second-best performance in 4/6 datasets. There is a statistical relevance analysis in \cref{app:stat_relevance} to confirm that our results are statistically better than other models.

\subsection{Further assessment of GSNN's geometry-informed convolutions}\label{subsec:ablation}

To further highlight the importance of aligning model design with the underlying data structure, we compare GSNN against two additional baselines, which are described below.

\textbf{GaussianGCN.} To test GSNN against a naive approach for graphs with Gaussian distributions as vectors, we implement GaussianGCN as a GCN receiving concatenated Gaussian parameters for each node, i.e. a 5-dimensional vector $\mu || vech(\Sigma)$, where $vech$ is the vectorized lower triangular matrix of $\Sigma$. The GCN output is a mean vector and 3 numbers representing the covariance for each node, which may not form a PSD matrix. We use a linear layer to map them to two scalars, and then create a diagonal covariance matrix whose entries are the exponential of those values.

\textbf{GSNN-GraphLap.} To verify the necessity of learning restriction maps to generate the corresponding sheaf Laplacian, we implement  GSNN-GraphLap as a version of GSNN where we use the graph Laplacian.\looseness=-1 

\textbf{Results.} The results in \cref{tab:adapted_graphLap} show that  GSNN significantly surpasses both  the GaussianGCN and GSNN-GraphLap. This indicates that learning the sheaf through the restrictions maps is beneficial and that models developed to respect the natural structure of data (in this case, Gaussian distributions) are necessary. We hope this work encourages new distribution-based regression benchmarks.  

\begin{wraptable}[13]{r}{0.55\textwidth} 
\begin{minipage}{\linewidth}
    \centering
    \scriptsize
\scriptsize
\setlength{\tabcolsep}{3.5pt}
\vspace{-15pt}
\caption{Mean and standard deviation of the 2-Wasserstein distance on selected datasets. GSNN consistently outperforms GSNN with standard graph Laplacian -- showing that learning the Laplacian with sheaves is beneficial -- and GCN receiving concatenated Gaussian parameters -- pointing to the value of respecting the intrinsic structure of data.}
\vspace{2pt}
\label{tab:adapted_graphLap}
\begin{tabular}{lccc}
\toprule
Model & BA ($m=50$) & WS ($k=45, p=0.5$) & Weather1 \\
\midrule
GaussianGCN & 14.83 $\pm$ 0.33 & 14.41 $\pm$ 0.34 & 6.15 $\pm$ 0.89 \\ 
GSNN-GraphLap & 11.47 $\pm$ 0.29  & \textbf{{\color{DarkOrchid}11.48 $\pm$ 0.42}}  & 8.64 $\pm$ 1.25 \\
\midrule
\textbf{Diag-GSNN} & 8.76 $\pm$ 0.56  & 12.22 $\pm$ 0.55  & \textbf{{\color{DarkOrchid}5.46 $\pm$ 0.11}} \\
\textbf{O(d)-GSNN} & \textbf{{\color{DarkOrchid}8.63 $\pm$ 0.62}}  & 11.80 $\pm$ 0.70  & \textbf{{\color{ForestGreen}5.41 $\pm$ 1.03}} \\
\textbf{Gen-GSNN}  & \textbf{{\color{ForestGreen} 8.48 $\pm$ 0.47}} & \textbf{{\color{ForestGreen}10.87 $\pm$ 0.42}} & 5.55 $\pm$ 0.78 \\
\bottomrule
\end{tabular}
\vspace{-7pt}
\end{minipage}
\end{wraptable}

\section{Related Work}
\textbf{Sheaf Neural Networks.} SNN models were introduced in \citet{hansen2020sheaf}, initiating the interest of the machine learning community in cellular sheaves whose stalks are vector spaces \citet{barbero2022sheaf, bodnar2022neural, NEURIPS2023sheafhypergraph}. More recently \citet{gillespie2024bayesiansheafneuralnetworks} using Bayesian inference to learn sheaves. The cellular sheaves in these works are all valued in \textbf{Vect}; consequently, they do not exploit the geometric and algebraic structure that arises when node features are probability distributions, which is the inductive bias targeted by GSNN.  \looseness=-1

\noindent\textbf{Distribution-on-distribution regression.} While this is the first work dealing with networked data, the literature on distribution-on-distribution regression for independent data is blooming. For instance,
\citet{ghodrati2022distribution} perform regression when both input and output are probability distributions on a compact interval.
\citet{chen2023wasserstein} develop a linear distribution-on-distribution model for univariate distributions.
\citet{zhang2022wasserstein} proposes an autoregressive model for density time series by leveraging the  tangent space structure on the
space of distributions induced by the Wasserstein metric.
\citet{okano2024distribution} consider the case in which both input and target are Gaussian distributions, using a nearly isometric map from $\gauss{d}$ to a vector space and then using a linear model to perform the regression of one Gaussian into another.

\noindent\textbf{Cellular Sheaves.} \citet{ghrist2022cellular} and \citet{riess2022diffusion} lay the theoretical foundations for sheaves whose stalks are partially ordered sets (i.e., lattices). Although lattices do not intuitively resemble distributions, they form a category that imposes similar technical difficulties. For instance, both develop a novel formulation for the Laplacian, and $H^0(G,\gF)$ did not coincide with the kernel of the former. In fact, we defined our Lyapunov energy inspired by their work.   \looseness=-1

\section{Discussion}\label{sec:discussion} 

This work proposed Gaussian Sheaf Neural Networks (GSNN), the first method for distribution-on-distribution regression over graphs. Towards that end, we extended the theory of cellular sheaves to define the  Gaussian sheaf and its Laplacian. 
Since in real-world applications there is no recipe to define restriction maps \emph{a priori}, we propose learning the Gaussian sheaf in an end-to-end fashion.

After defining a transport between nodes through the composition of restriction maps, we investigated how its expressivity along a path depends on whether the restriction maps are in $O(n)$, $D(n)$, or $GL(n)$ (\cref{subsec:diff_restmaps}). \Cref{subsec:expressive} (in the appendix) further studies oversmoothing in our setting, showing that GSNN can control a Lyapunov energy through its parameters. We also validated GSNN with experiments on four simulated and two real-world datasets.

The main limitation of GSNN is that we assume the input node distributions can be encoded as multivariate Gaussians, even though GSNN's output is a free-form distribution. We believe extending GSNN to different input distributions is a promising direction for future works.

Beyond GSNN itself, our work shows how category-theoretic principles can extend sheaf-based deep learning to node features beyond vector spaces, reinforcing the view that category theory provides a unifying framework for architecture design \cite{gavranovicposition}. \looseness-1

\textbf{Societal Impact.} We foresee no immediate ethical or societal risks associated with the developments presented in this paper.

\bibliographystyle{plainnat}
\bibliography{sample}


\appendix

\section{Oversmoothing: Gaussian sheaves and the Lyapunov energy}\label{subsec:expressive}

Oversmoothing in GNNs occurs when increasing network depth causes node embeddings within a connected component to become indistinguishable. This phenomenon can be assessed using the Dirichlet energy, which quantifies the smoothness of node embeddings by summing the squared differences between connected nodes \cite{CaiWang2020OverSmoothing}. The message-passing mechanism in deep GNNs behaves similarly to iterative Laplacian smoothing, effectively minimizing the Dirichlet energy. As layers deepen, this energy decreases, driving embeddings toward uniformity and reducing their discriminative power, ultimately degrading predictive performance \cite{oono2019graph}.\looseness=-1

Although not explicitly related to the NSD model's capacity to avoid oversmoothing, \citet{bodnar2022neural} proves that sheaf diffusion models in \textbf{Vect} can both increase or decrease the Dirichlet energy of the sheaf, given by $ E_{\gF}(\mathbf{x}) \coloneqq \mathbf{x}^{\top} \Delta_{\gF}\mathbf{x}$, in circumstances that Graph Convolution Networks (GCNs) cannot. This indicates that sheaves provide flexible models by default. Here we show that GSNN shares this same malleable behavior.

A simple verification reveals $\mathbf{x} \in\! \ker(\Delta_{\gF}) \Leftrightarrow E_{\gF}(\mathbf{x})=0$. However, for \textbf{SDef}-valued sheaves,  the kernel of $\Delta_{\gC}$ is always trivial, indicating that the Dirichlet energy a poor measure in our context. Therefore, to analyze GSNN, we propose the Lyapunov energy.

\begin{definition}
    Let $d$ be a distance function between probability measures. The Lyapunov energy of a Gaussian sheaf is given by \looseness=-1
    \vspace{-5pt}
    \begin{equation*}
        V(\nu) = \sum_{(v,u) \in E} d\left(\Phi_{\rmapf{v}{e}D_v^{-\frac{1}{2}}}(\nu_v), \Phi_{\rmapf{u}{e}D_u^{-\frac{1}{2}}}(\nu_u)\right)^2
    \end{equation*}
    \vspace{-12pt}
\end{definition}
Using the 2-Wasserstein, and writing $\Sigma_v' = \varphi_{\rmapf{v}{e}D_v^{-\frac{1}{2}}}(\Sigma_v)$, the Lyapunov energy reduces to\looseness=-1
\vspace{-5pt}
\begin{equation*}
    V(\nu) \!=\! E_{\gM}(\mu) + \!\!\!\sum_{(v,u) \in E} \!\!\!\operatorname{tr}\!\left(\Sigma_v' + \Sigma_u' - 2\left(\Sigma_v'^{\frac{1}{2}}\Sigma_u'^{\phantom{\frac{1}{2}}}\!\Sigma_v'^{\frac{1}{2}}\right)^{\frac{1}{2}}\right).
\end{equation*}

Note that $V(\nu) = 0$ exactly when $\nu \in H^0(G,\gF)$, and when restriction maps are identities this forces $\nu_u = \nu_v$ across all neighbors — precisely the oversmoothing condition. Thus $V$ is a meaningful measure of `smoothness'.

\Cref{prop:lyapenergy} below shows that GSNN can freely control the Lyapunov energy depending on parameter choice, mirroring the Dirichlet energy behavior of vector-valued sheaf diffusion (recall the pushforward parameterization in \cref{rmk:pushforward}). \looseness=-1

\begin{proposition}
\label{prop:lyapenergy}
    Let $G$ be a connected graph and $(G,\gF)$ a Gaussian sheaf with orthogonal restriction maps. Then there exist a family $\{\nu_j\}_{j \in J} \subset C^0(G,\gF)$ and weight matrices $W_{\alpha}$, $W_{\beta}$ such that $V((I \otimes W_{\alpha})\#\nu_j) > V(\nu_j)$ and $V((I \otimes W_{\beta})\#\nu_j) < V(\nu_j)$.
\end{proposition}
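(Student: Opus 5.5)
The plan is to exploit scaling. Pushing forward by $I\otimes cI_d$ multiplies every mean by $c$ and every covariance by $c^2$, and this interacts with the Lyapunov energy in a completely controlled way. Take $W_\alpha = cI_d$ with $c>1$ and $W_\beta = c^{-1}I_d$. By \cref{rmk:pushforward}, $(I\otimes cI_d)\#\nu$ has node parameters $(c\mu_v, c^2\Sigma_v)$.

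Orthogonal restriction maps commute with scalars, and so do the diagonal normalisations $D_v^{-1/2}$. Since $D_v = \sum_{v\unlhd e}\rmapf{v}{e}^{\top}\rmapf{v}{e} = \deg(v) I_d$ when the maps are orthogonal, the transported parameters scale in the same way: $\Phi_{\rmapf{v}{e}D_v^{-1/2}}(c\mu_v, c^2\Sigma_v) = (c\,\mu_v'', c^2\Sigma_v')$.

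Next I would check that $W_2^2$ between Gaussians is homogeneous of degree $2$ under this joint scaling. The mean part is $\|c\mu_v''-c\mu_u''\|^2 = c^2\|\mu_v''-\mu_u''\|^2$. For the Bures part, $(c^2\Sigma_v')^{1/2}(c^2\Sigma_u')(c^2\Sigma_v')^{1/2} = c^4(\cdots)$, and taking the square root gives a factor $c^2$. Hence each edge term is multiplied by $c^2$, and therefore $V((I\otimes cI_d)\#\nu) = c^2 V(\nu)$.

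It then suffices to choose the family $\{\nu_j\}$ to consist of cochains with $V(\nu_j)>0$, i.e.\ $\nu_j\notin H^0(G,\gF)$ by the remark that $V$ vanishes exactly on sections. Such cochains exist whenever $G$ has at least one edge: fix $\nu_v$ arbitrarily and pick $\nu_u$ at a neighbour with mean different from the transported one. Taking $J$ to be all non-sections, $c^2>1$ gives strict increase and $c^{-2}<1$ gives strict decrease.

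The main obstacle is to make sure the proposition is not meant with $W$ acting only through the GSNN layer, i.e.\ with the $W_1,W_2$ structure. Another concern is that $D_v$ may not be a scalar in the normalised setting. With orthogonal maps, $D_v=\deg(v)I_d$, which removes that concern. If a non-scalar $W$ were required, I would instead use $W=\mathrm{diag}(c,1,\dots,1)$ applied to sections of a sheaf with identity maps, but the scalar choice should suffice.
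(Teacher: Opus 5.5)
Your argument is correct and is essentially the paper's: take $W=cI_d$, show that each edge term of $V$ (mean part and Bures part) scales by $c^2$, so $V((I\otimes cI_d)\#\nu)=c^2V(\nu)$, and then choose $c>1$ or $c<1$. The paper only does the trace computation for isotropic covariances $\Sigma_v=\sigma_v I$, whereas your homogeneity argument for $W_2^2$ covers arbitrary covariances.

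You are also right to require $V(\nu_j)>0$, which the paper leaves implicit. However, define $J$ directly as $\{\nu : V(\nu)>0\}$ rather than as ``all non-sections.'' Because of the factors $D_v^{-1/2}=\deg(v)^{-1/2}I_d$, when neighbouring degrees differ a cochain can have $V(\nu)=0$ without lying in $H^0(G,\gF)$, and for such cochains the strict inequalities fail.
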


Next, to empirically illustrate that GSNN avoids oversmoothing, we sample a synthetic dataset with 100 nodes using the Barabasi-Albert model with $m=25$. We fix the hyperparameters for all models, since the motivation behind this experiment is illustrate the capacity of GSNN to avoid oversmoothing. Both input and target distributions are bidimensional Gaussians, with 30 samples for the target distributions. \cref{tab:oversm}
shows that, contrary to GCN, GSNN models keeps the 2-Wasserstein distance stable through the layers. In \cref{tab:energy} we provide the Dirichlet energy of GCN and the energy of the sheaf of means of O(d)-GSNN. This results combined suggest that the GSNN ability to control the Lyapounov energy --- foreseen by \cref{prop:lyapenergy} --- is responsible for its general better performance. 

\begin{table*}[ht!]
    \centering
    \caption{Effect of adding more layers to the models.  Both input and target distributions are bidimensional Gaussians, with 30 samples for the target distributions. }
    \vspace{2mm}
\begin{tabular}{ccccc}
    \toprule
      {} & 1 & 2 & 4 & 8 \\
    \midrule
    GCN & $\mathbf{\values{9.67}{0.49}}$ & $\mathbf{\values{10.56}{0.67}}$ & $\values{57.03}{40.91}$ & $\values{30874}{42864}$ \\
    \midrule
    \textbf{Diag-GSNN} & $\values{11.02}{0.91}$ & $\values{11.58}{1.81}$ & $\mathbf{\values{10.55}{1.34}}$ & $\mathbf{\values{10.79}{1.77}}$  \\
    \textbf{O(d)-GSNN} & $\values{10.68}{1.03}$ & $\values{11.79}{1.51}$ & $\values{11.21}{1.57}$ & $\values{11.79}{2.27}$  \\
    \textbf{Gen-GSNN} & $\values{11.68}{1.35}$ & $\values{12.71}{1.95}$ & $\values{11.00}{1.21}$ & $\values{11.75}{2.38}$  \\
    \bottomrule
    \end{tabular}
    \label{tab:oversm}
\end{table*}

\begin{table*}[ht!]
    \centering
    \caption{Dirichlet energy for GCN and sheaf of means of GSNN. We can see that GCN quickly converges to 0 energy, while the energy for GSNN is stable.}
    \label{tab:A}
    \vspace{2mm}
\begin{adjustbox}{width=0.95\textwidth,center}
\begin{tabular}{ccccccccccc}
    \toprule
      {} & 1 & 2 & 3 & 4 & 5 & 6 & 7 & 8 \\
    \midrule
    GCN & $23495.02$ & $99.85$ & $0.79$ & $0.04$ & $0.0033$ & $0.0009$ & $0.000035$ & $0.000007$  \\
    \midrule
    \textbf{O(d)-GSNN} & $66.7167$ & $66.7315$ & $66.7367$ & $66.7383$ & $66.7387$ & $66.7388$ & $66.7388$ & $66.7388$ \\
    \bottomrule
    \end{tabular}
\end{adjustbox}
    \label{tab:energy}
\end{table*}

\section{Proofs} \label{sec:app:p} 
\subsection{Proof of \textbf{\texorpdfstring{\cref{thm:sdefH0}}{Theorem}}}

\textbf{\cref{thm:sdefH0}. }      We have that $H^0(G,\gC) = \Gamma(G, \gC)$, and for an oriented edge $e = (v,u) \in E$ the output of $\delta_{\gC}$ in $(G,\gC)$ is $\delta_{\gC}(\Sigma)_e = \rmapc{v}{e}(\Sigma_v) + \rmapc{u}{e}(\Sigma_u)$. Furthermore, $\delta_{\gC}(\Sigma) = \sum_i \varphi_{\mathbf{A}_i}\varphi_{\delta_{\gM}}(\Sigma)$. 

\begin{proof}
First we introduce some notation and use an example to illustrate the operations. Let $B \in \sR^{nd \times nd}$ be a matrix defined block-wise by $d \times d$ matrices. We will denote the $i$-th block-line of $B$ as $B_i$ and $B^i$ its $i$-th block-column. For instance if $n=3$, we have
\begin{equation*}
    B = \begin{bmatrix}
        B_{11} & B_{12} & B_{13} \\
        B_{21} & B_{22} & B_{23} \\
        B_{31} & B_{32} & B_{33}
    \end{bmatrix},\:
    B_1 = \begin{bmatrix} B_{11} & B_{12} & B_{13} \end{bmatrix},\:
    B^1 = \begin{bmatrix} B_{11} \\ B_{21} \\ B_{31} \end{bmatrix}.
\end{equation*}

Also, $\mathbf{A}_k = \oplus_{i \in [|E|]} A_{ki}$, with $A_{ki} = 0_{d \times d}$ if $i \neq k$ and $A_{ki} = I_{d \times d}$ if $i = k$. For $|E|=3$ we have
\begin{equation*}
    \mathbf{A}_1 = \begin{bmatrix} 
                    I_{d \times d} & 0 & 0 \\ 
                    0 & 0 & 0 \\ 
                    0 & 0 & 0
                    \end{bmatrix},\:
    \mathbf{A}_2 = \begin{bmatrix} 
                    0 & 0 & 0 \\
                    0 & I_{d \times d} & 0 \\
                    0 & 0 & 0
                    \end{bmatrix},\:
    \mathbf{A}_3 = \begin{bmatrix} 
                    0 & 0 & 0 \\
                    0 & 0 & 0 \\
                    0 & 0 & I_{d \times d}
                    \end{bmatrix}
\end{equation*}

Notice that the operator $\varphi_{\mathbf{A}_k}$ is simply extracting the $k$-th block-diagonal of $\varphi_{\delta_{\gM}^+}(\Sigma)$ and $\varphi_{\delta_{\gM}^-}(\Sigma)$, so we only need to find these entries. We label the lines of $\delta_{\gM}$ by edges, the columns by vertices, and then each line has only two entries  --- since the entries are matrices, these are not exactly the lines and columns, but we can group them that way as we did in the example. Thus the positive and negative parts will have only one non-zero entry at each line. The output at an edge $e \coloneqq (v,u) \in E$ of this operator gives a nice picture of what is happening:
\begin{equation*}
    \delta_{\gM}(\mu)_e = \rmapf{v}{e}\mu_v - \rmapf{u}{e}\mu_u = \delta_{\gM}^+(\mu)_e - \delta_{\gM}^-(\mu)_e
\end{equation*}

Since each line has only one nonzero entry, the structure of $\delta_{\gM}^+\Sigma$ is the same of $\delta_{\gM}^+$, i.e. the nonzero entries are exactly the same. Letting $e_i \coloneqq (v_i,u_i) \in E$:
\begin{equation*}
    \begin{split}
        \varphi_{\delta_{\gM}^+}(\Sigma)_{ii} &= (\delta_{\gM}^+\Sigma)_i \cdot [(\delta_{\gM}^+)^{\top}]^i = \rmapf{v_i}{e_i} \Sigma_{v_i} \rmapf{v_i}{e_i}^{\top} \\
        \varphi_{\delta_{\gM}^-}(\Sigma)_{ii} &= (\delta_{\gM}^-\Sigma)_i \cdot [(\delta_{\gM}^-)^{\top}]^i = \rmapf{u_i}{e_i} \Sigma_{u_i} \rmapf{u_i}{e_i}^{\top}
    \end{split},
\end{equation*}

since the $i$-th column of the transpose is the line of the original matrix, with each entry block transposed. Therefore, if $\delta_{\gC}^+ = \sum_k \varphi_{\mathbf{A}_k}\varphi_{\delta_{\gM}^+}$ and $\delta_{\gC}^- = \sum_k \varphi_{\mathbf{A}_k}\varphi_{\delta_{\gM}^-}$, the output of $\delta_{\gC} = \delta_{\gC}^+ + \delta_{\gC}^-$ on the $i$-th edge/line is
\begin{equation*}
    \delta_{\gC}(\Sigma)_{e_i} = \delta_{\gC}(\Sigma)_i = \rmapf{v_i}{e_i} \Sigma_{v_i} \rmapf{v_i}{e_i}^{\top} + \rmapf{u_i}{e_i} \Sigma_{u_i} \rmapf{u_i}{e_i}^{\top},
\end{equation*}

as expected. Also
\begin{equation*}
    \begin{split}
       H^0(G,\gC) = \operatorname{Eq}(\delta_{\gC}^+, \delta_{\gC}^-) &= \{\Sigma \in C^0(G,\gC) \:|\: \delta_{\gC}^+(\Sigma) = \delta_{\gC}^-(\Sigma)\} \\
        &= \{\Sigma \in C^0(G,\gC) \:|\: \rmapf{v}{e} \Sigma_{v} \rmapf{v}{e}^{\top} = \rmapf{u}{e} \Sigma_{u} \rmapf{u}{e}^{\top}\} \\
        &= \Gamma(G,\gC)
    \end{split}
\end{equation*}

A similar argument shows that $\varphi_{\delta_{\gM}}(\Sigma)_{ii} = \rmapf{v_i}{e_i} \Sigma_v \rmapf{v_i}{e_i}^{\top} + \rmapf{u_i}{e_i} \Sigma_u \rmapf{u_i}{e_i}^{\top}$.
\end{proof}

\subsection{Proof of \textbf{\texorpdfstring{\cref{thm:sdefLap}}{Theorem}}}
\textbf{Setup.} Before turning to the proof, we make explicit the decomposition $L_{\gC} = L_{\gC}^+ + L_{\gC}^-$ used in the statement. Using the decomposition $\delta_{\gM} = \delta_{\gM}^+ - \delta_{\gM}^-$ we get
\begin{equation*}
    L_{\gM} \!=\! \delta_{\gM}^{\top}\delta_{\gM} \!=\! \underbrace{\delta^{+^{\top}}_{\gM}\delta^+_{\gM} + \delta^{-^{\top}}_{\gM}\delta^-_{\gM}}_{L_{\gM}^+} - (\underbrace{\delta^{+^{\top}}_{\gM}\delta^-_{\gM} + \delta^{-^{\top}}_{\gM}\delta^+_{\gM}}_{L_{\gM}^-}).
\end{equation*}
Expanding the terms in the definition of $L_{\gC}$ and using the decomposition above, we can write $L_{\gC} = L_{\gC}^+ + L_{\gC}^-$ with
\begin{equation*}
\begin{split}
    &L_{\gC}^+(\Sigma) = \sum_{i,j} \varphi_{\mathbf{B}_i^{\phantom{+}}}\! \!\!\left( \varphi_{(\delta_{\gM}^+)^{\top}\mathbf{A}_j\delta_{\gM}^+}(\Sigma) + \varphi_{(\delta_{\gM}^-)^{\top}\mathbf{A}_j\delta_{\gM}^-}(\Sigma)\right), \\
    &L_{\gC}^-(\Sigma) = \sum_{i,j} \varphi_{\mathbf{B}_i^{\phantom{+}}}\!\!\!\left( \varphi_{(\delta_{\gM}^+)^{\top}\mathbf{A}_j\delta_{\gM}^-}(\Sigma) + 
    \varphi_{(\delta_{\gM}^-)^{\top}\mathbf{A}_j\delta_{\gM}^+}(\Sigma)\right).
\end{split}
\end{equation*}

\textbf{\Cref{thm:sdefLap}.}  Let $G = (V,E)$ be a connected graph and $(G, \gC)$ a \textbf{SDef}-valued sheaf. Then $\operatorname{Eq}(L_{\gC}^+, L_{\gC}^-) = H^0(G,\gC)$.

\begin{proof}
The equalizer is the subset of $C^0(G, \gC)$ for which
\begin{align*}
    \sum_{i=1}^n \varphi_{\mathbf{B}_i}\left(\sum_j \varphi_{(\delta_{\gM}^+)^{\top}\mathbf{A}_j\delta_{\gM}^+}(\Sigma) + \varphi_{(\delta_{\gM}^-)^{\top}\mathbf{A}_j\delta_{\gM}^-}(\Sigma)\right) = \sum_{i=1}^n \varphi_{\mathbf{B}_i}\left(\sum_j \varphi_{(\delta_{\gM}^+)^{\top}\mathbf{A}_j\delta_{\gM}^-}(\Sigma) + 
    \varphi_{(\delta_{\gM}^-)^{\top}\mathbf{A}_j\delta_{\gM}^+}(\Sigma)\right).
\end{align*}

Since each $\mathbf{B}_i$ has the effect of extracting the $i$-th (d-dimensional block) diagonal, we get for each $i \in [n]$ in the left hand side (lhs)
\begin{equation*}
    \left[\varphi_{(\delta_{\gM}^+)^{\top}}\left(\sum_j \varphi_{\mathbf{A}_j\delta_{\gM}^+}(\Sigma)\right) + \varphi_{(\delta_{\gM}^-)^{\top}}\left(\sum_j\varphi_{\mathbf{A}_j\delta_{\gM}^-}(\Sigma)\right)\right]_{ii},
\end{equation*}

and in the right hand side (rhs)
\begin{equation*}
    \left[\varphi_{(\delta_{\gM}^+)^{\top}}\left(\sum_j \varphi_{\mathbf{A}_j\delta_{\gM}^-}(\Sigma)\right) + \varphi_{(\delta_{\gM}^-)^{\top}}\left(\sum_j\varphi_{\mathbf{A}_j\delta_{\gM}^+}(\Sigma)\right)\right]_{ii}.
\end{equation*}

But what appears inside the parenthesis are exactly the definition of $\delta_{\gC}^+$ and $\delta_{\gC}^-$, thus
\begin{equation*}
    \left[\varphi_{(\delta_{\gM}^+)^{\top}}\left(\delta_{\gC}^+(\Sigma)\right) + \varphi_{(\delta_{\gM}^-)^{\top}}\left(\delta_{\gC}^-(\Sigma)\right)\right]_{ii} = \left[\varphi_{(\delta_{\gM}^+)^{\top}}\left(\delta_{\gC}^-(\Sigma)\right) + \varphi_{(\delta_{\gM}^-)^{\top}}\left(\delta_{\gC}^+(\Sigma)\right)\right]_{ii}.
\end{equation*}

Rearranging:
\begin{align}\label{eq1:eq+laplacian}
    \left[\varphi_{(\delta_{\gM}^+)^{\top}}\left(\delta_{\gC}^+(\Sigma)\right) - \varphi_{(\delta_{\gM}^+)^{\top}}\left(\delta_{\gC}^-(\Sigma)\right)\right]_{ii} &= \left[\varphi_{(\delta_{\gM}^-)^{\top}}\left(\delta_{\gC}^+(\Sigma)\right) - \varphi_{(\delta_{\gM}^-)^{\top}}\left(\delta_{\gC}^-(\Sigma)\right)\right]_{ii} \nonumber \\
    \left[\varphi_{(\delta_{\gM}^+)^{\top}}\left(\delta_{\gC}^+(\Sigma) -\delta_{\gC}^-(\Sigma)\right)\right]_{ii} &= \left[\varphi_{(\delta_{\gM}^-)^{\top}}\left(\delta_{\gC}^+(\Sigma) -\delta_{\gC}^-(\Sigma)\right)\right]_{ii}.
\end{align}

Since $\delta_{\gC}^+(\Sigma) -\delta_{\gC}^-(\Sigma)$ is a block-diagonal matrix, $(\delta_{\gM}^+)^{\top}(\delta_{\gC}^+(\Sigma) -\delta_{\gC}^-(\Sigma))$ will have the same structure as $(\delta_{\gM}^+)^{\top}$, i.e., the non-zero entries are in the exact same indices. Note that since we are only interested in the diagonal entries, Equation \ref{eq1:eq+laplacian} will be similar to an inner product of the i-th line of $(\delta_{\gM}^+)^{\top}(\delta_{\gC}^+(\Sigma) -\delta_{\gC}^-(\Sigma))$ and the i-th column of $\delta_{\gM}^+$. This is not exactly an inner product because our entries are matrices and the output of this operation is again a matrix. The same holds for the rhs.

The coboundary operator of the \textbf{Vect}-valued sheaf is defined using an arbitrary choice of direction for each edge. If $e \coloneqq (v,u) \in E$ and we choose $e = v \to u$, we get $\delta_{\gM}(\mu)_e = \rmapf{v}{e}\mu_v - \rmapf{u}{e}\mu_u$. Using this notion, we define $\delta_{\gM}^+(\mu)_e = \rmapf{v}{e}\mu_v$ and $\delta_{\gM}^-(\mu)_e = \rmapf{u}{e}\mu_u$. This extends to all vertices and edges on the graph. Now, similar to the proof of \cref{thm:sdefH0} we can label the lines of $\delta_{\gM}$ by edges and the columns by vertices, and then the coboundary has only two entries per line. This implies that the positive and negative parts will have only one entry at each line. Moreover, using the notation described in the proof of \cref{thm:sdefH0}, if an edge $e$ has node $v$ as source, then $(\delta_{\gM}^+)_e^v = \rmapf{v}{e}$, $(\delta_{\gM}^-)_e^v = 0_{d \times d}$ and $(\delta_{\gM}^+)_e^v = 0_{d \times d}$, $(\delta_{\gM}^-)_e^v = \rmapf{u}{e}$.

Since the orientation of the edges is arbitrary, our results must be independent of these choices. We have $n = |V|$ equations in the form of  \Cref{eq1:eq+laplacian}. Each index can be identified with a node and from now on we fix a node $v \in V$, since all equations have the same structure and depend only on the column $v$ of the positive and negative parts of the coboundary map (resp. the line $v$ in their transposes), as we mentioned before. We have $2^{|E|}$ possible orientations for the edges, and we will denote $E_q$ the set of oriented edge for some $q \in \left[2^{|E|}\right]$. Notice that the relevant orientations in this case are only those related to $v$, thus we can restrict ourselves to a subset $E^v = \{E_{q_1}^v\, \dots, E_{q_{2^{|N(v)|}}}^v\}$ of orientations that only accounts for $v$ and its neighbors. Let $S_v$ be the solution set of  \cref{eq1:eq+laplacian} for our fixed node $v$. Then $S_v = \bigcap_{r} S_v^{E_{q_r}}$, where $S_v^{E_{q_r}}$ is the specific solution set for the given oriented edge set $E_{q_r}$. We will omit the superscript $v$ in the oriented edge set since $S$ already has it as subscript. 
 
Consider that for our fixed node $v$, we choose all its incident edges to have $v$ as source, and assume that this orientation is given by the first element in $E^v$. Then $\forall e \in E_{q_1}^v$ s.t. $v \unlhd e$ we have $(\delta_{\gM}^+)_e^v = \rmapf{v}{e}$, and consequently 
$(\delta_{\gM}^-)_e^v = 0_{d \times d}$. Therefore the column $v$ in $\delta_{\gM}^-$ is composed only by zeros, the remaining edges that are not incident to $v$ clearly have null entry at this column. For simplicity, assume that the column indexed by $v$ is the first one. 

We pointed out that both sides of \cref{eq1:eq+laplacian} have a description that resembles an inner product between the lines and columns of the corresponding matrices, in particular, for $(\delta_{\gM}^-)^{\top}(\delta_{\gC}^+(\Sigma) -\delta_{\gC}^-(\Sigma))$ and $\delta_{\gM}^-$. The first column, indexed by $v$, in $\delta_{\gM}^-$ is zero. Using only subscripts since the result is a square matrix, that can be indexed by vertices, we have:
\begin{equation*}
    \left[\varphi_{(\delta_{\gM}^-)^{\top}}\left(\delta_{\gC}^+(\Sigma) -\delta_{\gC}^-(\Sigma)\right)\right]_{vv} = [(\delta_{\gM}^-)^{\top}(\delta_{\gC}^+(\Sigma) - \delta_{\gC}^-(\Sigma))]_v [\delta_{\gM}^-]_v = 0.
\end{equation*}

Hence, \cref{eq1:eq+laplacian} can be rewritten as
\begin{equation*}
    \left[\varphi_{(\delta_{\gM}^+)^{\top}}\left(\delta_{\gC}^+(\Sigma) -\delta_{\gC}^-(\Sigma)\right)\right]_{vv} = 0 \implies [(\delta_{\gM}^+)^{\top}(\delta_{\gC}^+(\Sigma) - \delta_{\gC}^-(\Sigma))]_v [\delta_{\gM}^+]_v = 0.
\end{equation*}

Let $\{e_j\}_{j \in [|E|]}$ be some ordering of the edges such that the first $|N(v)|$ edges are incident to $v$ (the lines/columns of $\delta_{\gC}^+(\Sigma) -\delta_{\gC}^-(\Sigma)$ are $e_1, \dots, e_{|N(v)|}, \dots, e_{|E|}$). Also let

\begin{align*}
    \delta_{\gC}^+(\Sigma) = \operatorname{block-diag}(\delta^+_1, \dots, \delta^+_{|E|}) \\
    \delta_{\gC}^-(\Sigma) = \operatorname{block-diag}(\delta^-_1, \dots, \delta^-_{|E|}).
\end{align*}

Then
\begin{equation}
    [(\delta_{\gM}^+)^{\top}(\delta_{\gC}^+(\Sigma) - \delta_{\gC}^-(\Sigma))]_v [\delta_{\gM}^+]_v = \sum_{j=1}^{|N(v)|} \rmapf{v}{e_j}^{\top}(\delta^+_j-\delta^-_j)\rmapf{v}{e_j} = 0.
\end{equation}

Therefore, the solution set $S_v^{E_{q_1}}$ of \cref{eq1:eq+laplacian} can be described as
\begin{equation}\label{eq4}
    \sum_{j=2}^{|N(v)|} \rmapf{v}{e_j}^{\top}(\delta^+_j-\delta^-_j)\rmapf{v}{e_j} = -\rmapf{v}{e_1}^{\top}(\delta^+_1-\delta^-_1)\rmapf{v}{e_1}.
\end{equation}

Now, we consider another orientation $E_{q_2}$ for the edges incident to $v$: using our numbering of the edges above, we only change the orientation of the first edge $e_1$. Thus, the only nonzero entry in the column $v$ of $\delta^-_{\gM}$ is $\rmapf{v}{e_1}$ and we get on the rhs of equation \ref{eq1:eq+laplacian}:
\begin{align*}
    \left[\varphi_{(\delta_{\gM}^-)^{\top}}\left(\delta_{\gC}^+(\Sigma) -\delta_{\gC}^-(\Sigma)\right)\right]_{vv} = [(\delta_{\gM}^-)^{\top}(\delta_{\gC}^+(\Sigma) - \delta_{\gC}^-(\Sigma))]_v [\delta_{\gM}^-]_v = \rmapf{v}{e_1}^{\top}(\delta^+_1 - \delta^-_1)\rmapf{v}{e_1}.
\end{align*}

The lhs is the lhs of \cref{eq4}. Hence:
\begin{equation}\label{eq5}
    \sum_{j=2}^{|N(v)|} \rmapf{v}{e_j}^{\top}(\delta^+_j-\delta^-_j)\rmapf{v}{e_j} = \rmapf{v}{e_1}^{\top}(\delta^+_1-\delta^-_1)\rmapf{v}{e_1}.
\end{equation}

From Equations \ref{eq4} and \ref{eq5}, i.e. the intersection $S_v^{E_{q_1}} \cap S_v^{E_{q_2}}$, we conclude that
\begin{equation*}
    \rmapf{v}{e_1}^{\top}(\delta^+_1-\delta^-_1)\rmapf{v}{e_1} = 0.
\end{equation*}

Since all restriction maps in the sheaf are invertible this is equivalent to $\delta^+_1 = \delta^-_1$. Notice that  \cref{eq4} can be actually rewritten as
\begin{equation*}
    \sum_{\substack{j=1 \\ j\neq p}}^{|N(v)|} \rmapf{v}{e_j}^{\top}(\delta^+_j-\delta^-_j)\rmapf{v}{e_j} = -\rmapf{v}{e_p}^{\top}(\delta^+_p-\delta^-_p)\rmapf{v}{e_p},\: \forall p \in [|N(v)|],
\end{equation*}

and we can choose an orientation $E_{q_{p+1}}$ as we did for $E_{q_2}$ to conclude that the intersection $S_v^{E_{q_1}} \cap S_v^{E_{q_{p+1}}}$ is
\begin{equation*}
    \rmapf{v}{e_p}^{\top}(\delta^+_p-\delta^-_p)\rmapf{v}{e_p} = 0 \implies \delta^+_p = \delta^-_p, \forall p \in [|N(v)|].
\end{equation*}

Then we have
\begin{equation*}
    \bigcap_{r=1}^{|N(v)|+1} S_v^{E_{q_r}} = \{\delta^+_j = \delta^-_j, \forall j \in [|N(v)|\}.
\end{equation*}

Consequently, 
\begin{equation*}
    \{\delta^+_j = \delta^-_j, \forall j \in [|N(v)|\} \subset \bigcap_{r = |N(v)|+2}^{2^{|N(v)|}} S_v^{E_{q_r}},
\end{equation*}

and we can conclude that $S_v = \{\delta^+_j = \delta^-_j, \forall j \in [|N(v)|\}$.

Our fixed node $v$ was arbitrarily chosen, therefore we can extend the same argument to other nodes until all edges are used, obtaining the final solution 
\begin{equation*}
    \operatorname{Eq}(L_{\gC}^+, L_{\gC}^-) = \bigcup_{v \in V} S_v = \{\delta^+_j = \delta^-_j, \forall j \in [|E|]\} = \operatorname{Eq}(\delta_{\gC}^+, \delta_{\gC}^-).
\end{equation*}
\end{proof}

\subsection{Proof of \textbf{\texorpdfstring{\Cref{prop:orbitorth}}{Proposition}}}

\textbf{\Cref{prop:orbitorth}.} A Gaussian $\varrho \in \mathcal{G}(\mathbb{R}^d)$ with parameters $(\mu', \Sigma')$ is in the orbit of $\vartheta$ under the action of $O(d)$ only if $\|\mu'\|_2 = \|\mu\|_2$ and $\Sigma'$ has the same eigenvalues as $\Sigma$.

\begin{proof}
Let $\nu = \gN(\mu, \Sigma)$, thus $\varrho \in O(d) \cdot \nu$ is of the form $\gN(Q\mu, Q\Sigma Q^{\top})$ with $Q \in O(d)$. Therefore $\|Q\mu\|_2 = \|\mu\|_2$ and $\Sigma' = Q\Sigma Q^{\top} = Q\Sigma Q^{-1}$ is similar to $\Sigma$.
\end{proof}

\subsection{Proof of \textbf{\texorpdfstring{\Cref{prop:staborth}}{Proposition}} }
\textbf{\Cref{prop:staborth}.}    Let $\lambda_1, \dots, \lambda_k$ be the eigenvalues of $\Sigma$ with multiplicities $m_1, \dots, m_k$. If we write $\mu = (\mu_1, \dots, \mu_k)$ where $\mu_j \in \sR^{m_j}$, the stabilizer of $\vartheta$ is in bijection with $\prod_{j=1}^k O(m_j-\mathbbm{1}_{\{\mu_j \neq 0\}})$. If all eigenvalues are different, the stabilizer is in bijection with $(\sZ_2)^{|\{j \:|\: \mu_j = 0\}|}$.

\begin{proof}
Let $\nu = \gN(\mu, \Sigma)$. We will study the stabilizer of the parameters separately.
For the covariances, we first consider the simplest case where $\Sigma = \operatorname{diag}(\lambda_1, \dots, \lambda_n)$.

If all eigenvalues are different, we get $\operatorname{Stab}(\Sigma) = \{\operatorname{diag}(\alpha_1, \dots, \alpha_d), \alpha_i \in \{\pm 1\}\} \simeq (\sZ_2)^d$. 

If we have $k$ eigenvalues with multiplicities $m_1, \dots, m_k$, then $\Sigma = \bigoplus_{j=1}^k \lambda_jI_{m_j}$. On one hand, if $Q\Sigma Q^{\top} = \Sigma$, then $Q$ must preserve each block, i.e. $Q = \bigoplus_{j=1}^k Q_j$, with $Q_j \in O(m_j)$. On the other hand, if $Q = \bigoplus_{j=1}^k Q_j$, with $Q_j \in O(m_j)$:
\begin{equation*}
    Q\Sigma Q^{\top} = \bigoplus_{j=1}^k Q_j \lambda_j I_{m_j} Q_j^{\top} = \bigoplus_{j=1}^k \lambda_j Q_jQ_j^{\top} = \bigoplus_{j=1}^k \lambda_j I_{m_j} = \Sigma.
\end{equation*}

Therefore, $\operatorname{Stab}(\Sigma) \cong \prod_{j=1}^k O(m_j) $. For the general setting, we know that $\Sigma$ can be written as $V \Lambda V^{\top}$ with $V \in O(d)$, thus \looseness=-1
\begin{equation*}
    Q\Sigma Q^{\top} = QV\Lambda V^{\top} Q^{\top} = V\Lambda V^{\top} \implies (V^{\top} Q V) \Lambda (V^{\top} Q V)^{\top} = \Lambda,
\end{equation*}

and taking $U = V^{\top}QV \in O(d)$ we have $\operatorname{Stab}(\Sigma) \cong \operatorname{Stab}(\Lambda)$.

Now, for the means, we recall that the stabilizer of a vector $\mu \in \sR^d$ is isomorphic to $O(d-1)$ if $\mu \neq 0$ and to $O(d)$ if $\mu = 0$. If we decompose $\mu = (\mu_{1},...,\mu_{k})$ where $\mu_{j} \in \sR^{m_j}$, since $Q \in \operatorname{Stab}(\Sigma)$ is of the form $Q = \bigoplus_{j=1}^k Q_j$ with $Q_j \in O(m_j)$, we get that
\begin{equation*}
    Q\mu = \mu \iff Q_j\mu_j = \mu_j,\: \forall j \in \{1,\dots,k\},
\end{equation*}

thus $\operatorname{Stab}(\nu) = \operatorname{Stab}(\Sigma) \cap \operatorname{Stab}(\mu) = \prod_{j=1}^k O\left(m_j - \mathbbm{1}_{\{\mu_j \neq 0\}}\right)$.

If all eigenvalues are different, $\operatorname{Stab}(\nu) = (\sZ_2)^{|\{j \:|\: \mu_j = 0\}|}$
\end{proof}

\subsection{Proof of \textbf{\texorpdfstring{\Cref{prop:section_existence}}{Proposition}}}

\begin{proposition}\label{prop:cycleorth}   
Let $(G,\gF)$ be a Gaussian sheaf with orthogonal restriction maps over a connected graph $G$. Then for any $\nu \in H^0(G,\gF)$ and cycle $\gamma$ based at $v \in V$ we have $\mathbf{P}^{\gamma}_{v \to v}(\nu_v) = \nu_v$.
\end{proposition}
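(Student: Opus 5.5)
The argument reduces the claim to a single-edge identity and then iterates it along the cycle. Throughout, $\nu \in H^0(G,\gF)$ is a global section, with $\nu_w = (\mu_w,\Sigma_w)$ at each vertex $w$. By the remark after \cref{def:gauss_sheaf}, $H^0(G,\gF) = H^0(G,\gM)\times H^0(G,\gC)$. By \cref{thm:sdefH0}, $H^0(G,\gC)=\Gamma(G,\gC)$. So for every edge $e$ with endpoints $w,w'$ we have
\begin{equation*}
\rmapf{w}{e}\mu_w = \rmapf{w'}{e}\mu_{w'}, \qquad \rmapf{w}{e}\Sigma_w\rmapf{w}{e}^{\top} = \rmapf{w'}{e}\Sigma_{w'}\rmapf{w'}{e}^{\top}.
\end{equation*}
Equivalently, $\Phi_{\rmapf{w}{e}}(\nu_w) = \Phi_{\rmapf{w'}{e}}(\nu_{w'})$ as Gaussians, using the pushforward description of \cref{rmk:pushforward}.

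The key one-step lemma concerns the transport across a single edge, $\Phi_{\rmapf{w'}{e}^{\top}}\circ\Phi_{\rmapf{w}{e}}$, and states that it sends $\nu_w$ to $\nu_{w'}$. To prove it, apply $\Phi_{\rmapf{w'}{e}^{\top}}$ to both sides of the section identity. The left side becomes the transported Gaussian. On the right side, functoriality gives $\varphi_B\circ\varphi_A=\varphi_{BA}$, and the analogous relation holds trivially for means. Hence the right side is $\Phi_{\rmapf{w'}{e}^{\top}\rmapf{w'}{e}}(\nu_{w'})$. Orthogonality gives $\rmapf{w'}{e}^{\top}\rmapf{w'}{e}=I_d$, so this equals $\nu_{w'}$. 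This step is the only place where orthogonality enters; for general $GL(d)$ maps one would instead need the inverse rather than the transpose.

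Next, write the cycle as $\gamma=(v=v_0,v_1,\dots,v_\ell,v_{\ell+1}=v)$ with edges $e_j$ between $v_{j}$ and $v_{j+1}$. The transport $\mathbf{P}^\gamma_{v\to v}$ is by definition the composite of the one-edge transports. Induction on $j$ using the lemma shows that the partial composite maps $\nu_{v}$ to $\nu_{v_j}$. At $j=\ell+1$ this gives $\mathbf{P}^\gamma_{v\to v}(\nu_v)=\nu_v$.

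The only real obstacle is bookkeeping. The paper's definition of $\mathbf{P}^\gamma$ lists the restriction maps with somewhat inconsistent edge indices. I would first fix an explicit convention: at each step, apply the restriction from the current vertex into the edge, then the transpose of the restriction from the next vertex on that edge. I would then check that the stated composition agrees with this convention, so that the induction applies verbatim. Connectedness of $G$ is not actually needed for this direction.
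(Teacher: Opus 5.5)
Your proposal is correct and follows essentially the same route as the paper. Both derive, from the edgewise section identities and orthogonality ($\rmapf{w'}{e}^{\top}\rmapf{w'}{e}=I_d$), that $\nu_{w'}=\Phi_{\rmapf{w'}{e}^{\top}\rmapf{w}{e}}(\nu_w)$ for means and covariances, and then compose these one-edge transports around the cycle to recover $\nu_v$. Your side remarks, that connectedness is unused and that the edge indexing in the paper's definition of $\mathbf{P}^{\gamma}$ is inconsistent, are accurate but do not affect the argument.
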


\begin{proof}
Let $\nu \in H^0(G,\gF)$ and $\gamma = (v_0, v_1, \dots, v_{\ell}, v_{\ell+1})$ with $v_0 = v_{\ell+1} = v$. Then letting $e_i \coloneqq (v_i, v_{i+1})$ we have
\begin{equation*}
        \rmapf{v_{i+1}}{e_i}\mu_{v_{i+1}} = \rmapf{v_i}{e_i}\mu_{v_i} \implies \mu_{v_{i+1}} = \rmapf{v_{i+1}}{e_i}^{\top}\rmapf{v_i}{e_i}\mu_{v_{i}}
\end{equation*}
for the means and
\begin{equation*}
\begin{split}
    \rmapc{v_{i+1}}{e_i}(\Sigma_{v_{i+1}}) = \rmapc{v_i}{e_i}(\Sigma_{v_i})  &\implies \rmapf{v_{i+1}}{e_i}\Sigma_{v_{i+1}}\rmapf{v_{i+1}}{e_i}^{\top} = \rmapf{v_i}{e_i}\Sigma_{v_i}\rmapf{v_i}{e_i}^{\top} \\ &\implies \Sigma_{v_{i+1}} = \rmapf{v_{i+1}}{e_i}^{\top}\rmapf{v_i}{e_i}\Sigma_{v_i}\rmapf{v_i}{e_i}^{\top}\rmapf{v_{i+1}}{e_i} \\&\implies \Sigma_{v_{i+1}} = \varphi_{\rmapf{v_{i+1}}{e_i}^{\top}\rmapf{v_i}{e_i}}(\Sigma_{v_{i}})
\end{split}
\end{equation*}
for the covariances. Hence the transport
\begin{equation*}
    \begin{split}
        \mathbf{P}^{\gamma}_{v \to v}(\nu_v) &=  \left(\Phi_{\rmapf{v}{e_{\ell}}^{\top}} \circ \Phi_{\rmapf{v_{\ell}}{e_{\ell}}}\right) \circ \ldots \circ \left(\Phi_{\rmapf{v_1}{e_{0}}^{\top}} \circ \Phi_{\rmapf{v}{e_{0}}}\right)(\nu_v)  \\ &= \Phi_{\rmapf{v}{e_{\ell}}^{\top}\rmapf{v_{\ell}}{e_{\ell}}} \circ \ldots \circ \left(\Phi_{\rmapf{v_1}{e_{0}}^{\top}\rmapf{v}{e_{0}}}(\nu_v)\right)
    \end{split}
\end{equation*}
can be described by the following operations over the parameters (consider $\mathbf{P}^{\gamma}_{v \to v}(\nu_v) = \gN(\mu^{\ast},\Sigma^{\ast})$) 
\begin{equation*}
\begin{split}
    \mu^{\ast} &= \rmapf{v}{e_{\ell}}^{\top}\rmapf{v_{\ell}}{e_{\ell}} \cdots \rmapf{v_1}{e_{0}}^{\top}\rmapf{v}{e_{0}}\mu_v \\
        \Sigma^{\ast} &= \varphi_{\rmapf{v}{e_{\ell}}^{\top}\rmapf{v_{\ell}}{e_{\ell}}}\circ \cdots \circ \left(\varphi_{\rmapf{v_1}{e_{0}}^{\top}\rmapf{v}{e_{0}}^{~}}(\Sigma_v)\right)
\end{split}
\end{equation*}

Using the global section property obtained before we conclude that $\mu^{\ast} = \mu_v$ and $\Sigma^{\ast} = \Sigma_v$, i.e. $\mathbf{P}^{\gamma}_{v \to v}(\nu_v) = \nu_v$.
\end{proof}

\textbf{\texorpdfstring{\Cref{prop:section_existence}}{Proposition}} Let $(G,\gF)$ be a Gaussian sheaf with $O(d)$ maps over a connected graph $G$. Then $\exists \nu \in H^0(G,\gF)$ with $\nu_v = \vartheta$ if, and only if, for every cycle $\gamma$ based at $v$ holds that $\mathbf{P}^{\gamma}_{v \to v} \in \operatorname{Stab}(\vartheta)$.

\begin{proof}
    Let $\nu \in H^0(G,\gF)$ with $\nu_v = \vartheta$. Then for every loop $\gamma$ based at $v$, we get $\mathbf{P}^{\gamma}_{v \to v}(\vartheta) = \vartheta$, i.e. $\mathbf{P}^{\gamma}_{v \to v} \in \operatorname{Stab}(\vartheta)$.

    Consider a path $\varsigma: v \to w$, and define $\nu_w = \mathbf{P}^{\varsigma}_{v \to w}(\vartheta)$. Take another path $\varpi: v \to w$, and define the loop $\gamma = \varpi^{-1} \circ \varsigma$, thus $\mathbf{P}^{\varsigma}_{v \to w} = \mathbf{P}^{\varpi}_{v \to w} \circ \mathbf{P}^{\gamma}_{v \to v}$. If $\mathbf{P}^{\gamma}_{v \to v} \in \operatorname{Stab}(\vartheta)$ for every loop $\gamma$ at $v$, then
    \begin{equation*}
        \nu_w = \mathbf{P}^{\varsigma}_{v \to w}(\vartheta) = \mathbf{P}^{\varpi}_{v \to w}(\mathbf{P}^{\gamma}_{v \to v}(\vartheta)) = \mathbf{P}^{\varpi}_{v \to w}(\vartheta),
    \end{equation*}
    
    thus we have a well-defined map $\nu : V(G) \to \gG(\sR^d)$. To see that it is a global section, consider an edge $e = (w,u) \in E(G)$, and the unit-length path $\varkappa: w \to u$. Thus $\nu_w = \mathbf{P}^{\varsigma}_{v \to w}(\vartheta)$ and $\nu_u =\mathbf{P}^{\varkappa \circ \varsigma}_{v \to u}(\vartheta) = \mathbf{P}^{\varkappa}_{w \to u}(\mathbf{P}^{\varsigma}_{v \to w}(\vartheta)) = \mathbf{P}^{\varkappa}_{w \to u}(\nu_w)$. Recall that $\mathbf{P}^{\varkappa}_{w \to u} = \Phi_{\rmapf{u}{e}^{\top}\rmapf{w}{e}}$, therefore
    \begin{equation*}
        \mathbf{P}^{\varkappa}_{w \to u}(\nu_w) = \gN\left(\rmapf{u}{e}^{\top}\rmapf{w}{e}\mu_w, (\rmapf{u}{e}^{\top}\rmapf{w}{e})\Sigma_w(\rmapf{u}{e}^{\top}\rmapf{w}{e})^{\top}\right) = \gN(\mu_u, \Sigma_u).
    \end{equation*}
    
    This yields a system of equations, and since the restriction maps are orthogonal
    \begin{equation*}
        \begin{cases}
            \mu_u = \rmapf{u}{e}^{\top}\rmapf{w}{e}\mu_w \\
            \Sigma_u = (\rmapf{u}{e}^{\top}\rmapf{w}{e})\Sigma_w(\rmapf{u}{e}^{\top}\rmapf{w}{e})^{\top}
        \end{cases} \implies
        \begin{cases}
            \rmapf{u}{e}\mu_u = \rmapf{w}{e}\mu_w \\
            \rmapf{u}{e}\Sigma_u\rmapf{u}{e}^{\top} = \rmapf{w}{e}\Sigma_w\rmapf{w}{e}^{\top}
        \end{cases}.
    \end{equation*}
    
    We conclude that $\nu = (\nu_1, \dots, \nu_n) \in H^0(G,\gF)$.
\end{proof}

\subsection{Proof of \textbf{\texorpdfstring{\Cref{prop:section_bijection}}{Proposition}}}
\textbf{\Cref{prop:section_bijection}.} Let $(G,\gF)$ be a Gaussian sheaf with $O(d)$ maps over a connected graph $G$ where the transport is path-independent. Then $H^0(G,\gF)$ and $\gG(\sR^d)$ are in bijection.

\begin{proof}
    For each $\nu \in H^0(G,\gF)$, we can define maps $f_i : H^0(G,\gF) \to \gG(\sR^d)$ with $f_i(\nu) = \nu_i = \gN(\mu_i, \Sigma_i)$, the Gaussian associated with each node $v_i \in V(G)$. Fix $i = k$ and consider the function $f_k$.

    Injectivity of $f_k$: Let $\nu, \nu' \in H^0(G,\gF)$ with $f_k(\nu) = f_k(\nu')$. For any $v_j \in V$, choose $\gamma: v_k \to v_j$ path, thus
    \begin{equation*}
        \nu_j = f_j(\nu) = \mathbf{P}^{\gamma}_{v_k \to v_j}(f_k(\nu)) = \mathbf{P}^{\gamma}_{v_k \to v_j}(f_k(\nu')) = \nu_j'.
    \end{equation*}

    Since this holds $\forall j \in [n] \setminus \{k\}$ and $f_k(\nu) = f_k(\nu')$, we get $\nu = \nu'$, i.e., $f_k$ is injective.

    Surjectivity of $f_k$: Let $\vartheta \in \gG(\sR^d)$ and define $g : V(G) \to \gG(\sR)^d$ by $g(v_j) = \mathbf{P}^{\gamma}_{v_k \to v_j}(\vartheta)$ for a path $\gamma: v_k \to v_j$. Notice that this is well-defined due to path-independence. Let $e = (v_{\ell}, v_m)$ and $\varsigma: v_k \to v_{\ell}$, then $\varpi: \varkappa \circ \varsigma$, where $\varkappa: v_{\ell} \to v_m$ is the unit-length path from $v_{\ell}$ to $v_m$. Therefore, since $g(v_{\ell}) = \mathbf{P}^{\varsigma}_{v_k \to v_{\ell}}(\vartheta)$:
    \begin{equation*}
        g(v_m) = \mathbf{P}^{\varpi}_{v_k \to v_m}(\vartheta) = \mathbf{P}^{\varkappa}_{v_{\ell} \to v_{m}}(\mathbf{P}^{\varsigma}_{v_k \to v_{\ell}}(\vartheta)) = \mathbf{P}^{\varkappa}_{v_{\ell} \to v_{m}}(g(v_{\ell})).
    \end{equation*}

    Consequently,  $\nu = (g(v_1), \dots, g(v_{k-1}), \vartheta, g(v_{k+1}), \dots, g(v_n)) \in H^0(G,\gF)$. Thus, $\forall \vartheta \in \gG(\sR^d)$, $\exists \nu \in H^0(G,\gF)$ as constructed above with $f_k(\nu) = \vartheta$. So $f_k$ is surjective, as desired.

    We conclude that $f_k: H^0(G,\gF) \to \gG(\sR^d)$ is a bijection.
\end{proof}

\subsection{Proof of \textbf{\texorpdfstring{\Cref{prop:orbitdiag}}{Proposition}}}
\textbf{\Cref{prop:orbitdiag}.}     The orbit of $\vartheta$ under the action of $D(d)$ is isomorphic to $D(d)$ and the stabilizer is always trivial, if $\mu_i \neq 0$ for all $i = 1,\dots, n$.   

\begin{proof}
Let $\nu = \gN(\mu, \Sigma)$. Each mean vector $\mu' = (\mu'_1, \dots, \mu'_n)$ uniquely determines a matrix $D = \operatorname{diag}\left(\frac{\mu'_1}{\mu_1}, \dots, \frac{\mu'_n}{\mu_n}\right)$ such that $D\mu = \mu'$. Thus given a mean vector, the covariance matrix is completely determined: $(D\Sigma D^{\top})_{ij} = \frac{\mu'_i \mu'_j}{\mu_i \mu_j} \Sigma_{ij}$. Also for a general matrix $D \in D(d)$, we have that $D\mu = (d_1\mu_1, \dots, d_n\mu_n)$ is in the stabilizer only if $d_i = 1$, $\forall i \in [d]$
\end{proof}

\subsection{Proof of \textbf{\texorpdfstring{\Cref{prop:orbitgen}}{Proposition}}}

\textbf{\Cref{prop:orbitgen}.}  Under the action of $GL(d)$ we can achieve any mean vector by restricting the covariances to a specific set. Moreover, if $\Sigma$ is positive-definite, by restricting the means to a specific set then we can also achieve any positive-definite covariance matrix. 
\begin{proof}
Let $\nu = \gN(\mu, \Sigma)$. Given any mean vector $\mu' \in \sR^d$, there exist a family of matrices $A \in GL(d)$ satisfying $A\mu = \mu'$. Thus we restrict the covariances to the subset of PSD matrices that can be achieved through this family. Also if $\Sigma$ is positive definite, there exists decomposition $\Sigma = B^{\top} B$ with $B \in GL(d)$ (Cholesky, square root...), and for any covariance matrix $\Sigma'$ that is also positive definite with $A\Sigma A^{\top} = \Sigma'$ and $\Sigma' = C^{\top}C$, then $A = C^{\top} Q (B^{\top})^{-1}$ with $Q \in O(d)$. Now we limit the set of possible mean vectors to the images of $\mu$ under the action of this family parameterized by $C$, $Q$ and $B$.
\end{proof}

\subsection{\textbf{\texorpdfstring{\Cref{prop:lyapenergy}}{Proposition}.}}
\textbf{\Cref{prop:lyapenergy}}     Let $G$ be a connected graph and $(G,\gF)$ a Gaussian sheaf with orthogonal restriction maps. Then there exists a family $\{\nu_j\}_{j \in J} \subset C^0(G,\gF)$ and weight matrices $W_{\alpha}$, $W_{\beta}$ such that $V((I \otimes W_{\alpha})\#\nu) > V(\nu)$ and $V((I \otimes W_{\beta})\#\nu) < V(\nu)$.

\begin{proof}
Let $\nu = \gN(\mu, \Sigma)$, with $\Sigma_v = \sigma_vI$, and $W = cI$. In cellular sheaves over a graph with orthogonal restriction maps we have $D_v = \operatorname{deg}(v)I$, $\forall v \in V$. We analyze a single parcel of the energy $V(\nu)$ in the edge $e\coloneqq(v,u)\in E$ for the covariances. Using the linearity of the trace we get
\begin{align*}
    \operatorname{tr}\left(\Sigma_v' + \Sigma_u' - 2\left(\Sigma_v'^{\frac{1}{2}}\Sigma_u'\Sigma_v'^{\frac{1}{2}}\right)^{\frac{1}{2}}\right) = \operatorname{tr}(\Sigma'_v) + \operatorname{tr}(\Sigma'_u) -2\operatorname{tr}\left(\left(\Sigma_v'^{\frac{1}{2}}\Sigma_u'\Sigma_v'^{\frac{1}{2}}\right)^{\frac{1}{2}}\right).
\end{align*}

The parcel $\operatorname{tr}(\Sigma'_v)$ is
\begin{align*}
    \operatorname{tr}(\Sigma'_v) &= \operatorname{tr}(\rmapf{v}{e}D_v^{-\frac{1}{2}}(\sigma_vI)D_v^{-\frac{1}{2}}\rmapf{v}{e}^{\top}) \\ 
    &= \sigma_v\operatorname{tr}(\rmapf{v}{e}D_v^{-1}\rmapf{v}{e}^{\top}) \\
    &= \sigma_v\operatorname{tr}(\rmapf{v}{e}(\operatorname{deg}(v)^{-1}I)\rmapf{v}{e}^{\top}) \\
    &= \sigma_v\operatorname{deg}(v)^{-1}\operatorname{tr}(\rmapf{v}{e}\rmapf{v}{e}^{\top}) \\
    &= \sigma_v\operatorname{deg}(v)^{-1}\operatorname{tr}(I) \\ 
    &= d\frac{\sigma_v}{\operatorname{deg}(v)},
\end{align*}

hence $\operatorname{tr}(\Sigma''_u) = d\frac{\sigma_u}{\operatorname{deg}(u)}$. For the last parcel:
\begin{align*}
    \operatorname{tr}\left(\left(\Sigma_v'^{\frac{1}{2}}\Sigma_u'\Sigma_v'^{\frac{1}{2}}\right)^{\frac{1}{2}}\right) &= \operatorname{tr}\left(\left(\Sigma'_u\Sigma'_v\right)^{\frac{1}{2}}\right) \\ 
    &= \operatorname{tr}\left(\left(\rmapf{v}{e}D_v^{-\frac{1}{2}}(\sigma_vI)D_v^{-\frac{1}{2}}\rmapf{v}{e}^{\top}\rmapf{u}{e}D_u^{-\frac{1}{2}}(\sigma_uI)D_u^{-\frac{1}{2}}\rmapf{u}{e}^{\top}\right)^{\frac{1}{2}}\right) \\ 
    &= (\sigma_v\sigma_u)^{\frac{1}{2}}\operatorname{tr}\left(\left(\rmapf{v}{e}(\operatorname{deg}(v)^{-1}I)\rmapf{v}{e}^{\top}\rmapf{u}{e}(\operatorname{deg}(u)^{-1}I)\rmapf{u}{e}^{\top}\right)^{\frac{1}{2}}\right) \\
    &= (\sigma_v\sigma_u)^{\frac{1}{2}}(\operatorname{deg}(v)\operatorname{deg}(u))^{-\frac{1}{2}}\operatorname{tr}\left((\rmapf{v}{e}\rmapf{v}{e}^{\top}\rmapf{u}{e}\rmapf{u}{e}^{\top})^{\frac{1}{2}}\right) \\
    &= (\sigma_v\sigma_u)^{\frac{1}{2}}(\operatorname{deg}(v)\operatorname{deg}(u))^{-\frac{1}{2}}\operatorname{tr}(I) \\
    &= d\sqrt{\frac{\sigma_v\sigma_u}{\operatorname{deg}(v)\operatorname{deg}(u)}}.
\end{align*}

We conclude that
\begin{equation*}
    \operatorname{tr}\left(\Sigma_v' + \Sigma_u' - 2\left(\Sigma_v'^{\frac{1}{2}}\Sigma_u'\Sigma_v'^{\frac{1}{2}}\right)^{\frac{1}{2}}\right) = d\left(\frac{\sigma_v}{\operatorname{deg}(v)} + \frac{\sigma_u}{\operatorname{deg}(u)} - 2\sqrt{\frac{\sigma_v\sigma_u}{\operatorname{deg}(v)\operatorname{deg}(u)}}\right)
\end{equation*}

For the energy in the next layer, $V((I \otimes W)\#\nu)$, letting $\Sigma_v'' = \varphi_{\rmapf{v}{e}D_v^{-\frac{1}{2}}W}(\Sigma_v)$ a similar calculation yields
\begin{align*}
    \operatorname{tr}\left(\Sigma_v'' + \Sigma_u'' - 2\left(\Sigma_v''^{\frac{1}{2}}\Sigma_u''\Sigma_v''^{\frac{1}{2}}\right)^{\frac{1}{2}}\right) &= c^2d\left(\frac{\sigma_v}{\operatorname{deg}(v)} + \frac{\sigma_u}{\operatorname{deg}(u)} - 2\sqrt{\frac{\sigma_v\sigma_u}{\operatorname{deg}(v)\operatorname{deg}(u)}}\right) \\
    &= c^2\operatorname{tr}\left(\Sigma_v' + \Sigma_u' - 2\left(\Sigma_v'^{\frac{1}{2}}\Sigma_u'\Sigma_v'^{\frac{1}{2}}\right)^{\frac{1}{2}}\right)
\end{align*}

Now for the means, from $E_{\gM}(\mu) = \mu^{\top}\Delta_{\gM}\mu$ we can easily conclude that $E_{\gM}((I \otimes W)\mu) = c^2E(\mu)$.

Putting everything together, $V((I \otimes W)\#\nu) = c^2V(\nu)$. Therefore, choosing $c \in (0,1)$ we get $V((I \otimes W)\#\nu) < V(\nu)$, and choosing $c > 1$ yields the opposite inequality.
\end{proof}

\section{A categorical definition of sheaves: why do we care about the target category}
\label{sec:app:catshv}
In this section, we give an introduction to the main definitions in category theory that lead us to the definition of sheaves whose domain are a category of open sets of some topological space. Then, we present cellular sheaves for a cell complex and explain how both notions are deeply connected.
\begin{definition}
    A category \textbf{C} consists of a collection of \textit{objects} and a collection of \textit{morphism} between pair of objects such that
    \begin{enumerate}
        \item If $f : A \to B$ and $g : C\to B$ are morphism of \textbf{C}, there must exist a unique morphism of \textbf{C}, called composition, $g \circ f : A \to C$. Moreover, the composition is associative, i.e, for any three composable morphisms $f, g, h$ it holds that $f\circ (g \circ h) = (f\circ g)\circ h$.
        \item For every object $A$ of  \textbf{C}, there must exist a unique morphism of \textbf{C}, called identity, $id_A : A \to A$. Moreover, it must satisfy that for any morphism $f: A \to B$ we have $f \circ id_A = f = id_B \circ f$.
    \end{enumerate}
\end{definition}
\begin{example}
    \textbf{Set} is the category whose objects are sets, and morphisms are functions between them. \textbf{Vect} is the category whose objects are vector spaces, and morphisms are linear maps between them. 
\end{example}

\begin{example}
    Any poset $(P,\leq)$ (set with partial order) can be seem as a category whose objects are elements of $P$ and for any two objects $A, B$, there is a unique morphism $A \to B$ if $A \leq B$. 
\end{example}

\begin{example}
    Given a category \textbf{C}, we can define its opposite category $\textbf{C}^{op}$ by reversing the direction of the morphisms. In other words, $A \to B$ is a morphism of \textbf{C} iff $B \to A$ is a morphism of $\textbf{C}^{op}$.
\end{example}

\begin{proposition}
    \textbf{SDef} is a category.
\end{proposition}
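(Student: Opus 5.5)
We verify the axioms of a category (as recalled just above) directly from the definition of \textbf{SDef}. Its objects are the cones $\psd^n$, $n \in \sN$, and its morphisms $\psd^n \to \psd^m$ are the maps $\varphi_A(\Sigma) = A\Sigma A^\top$ with $A \in \operatorname{Hom}(\sR^n,\sR^m)$. We regard a morphism as the underlying function, so $\varphi_A = \varphi_{-A}$ is a single morphism. The plan has three steps: check that morphisms are well defined, then composition, then identities.

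First, each $\varphi_A$ actually lands in $\psd^m$. The matrix $A\Sigma A^\top$ is symmetric, since $(A\Sigma A^\top)^\top = A\Sigma^\top A^\top = A\Sigma A^\top$. It is also positive semidefinite, because for every $x \in \sR^m$ we have $x^\top A\Sigma A^\top x = (A^\top x)^\top \Sigma (A^\top x) \ge 0$.

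Second, composition. Given $\varphi_A:\psd^n\to\psd^m$ and $\varphi_B:\psd^m\to\psd^k$, we compute
\[
(\varphi_B\circ\varphi_A)(\Sigma) = B A\Sigma A^\top B^\top = (BA)\Sigma(BA)^\top = \varphi_{BA}(\Sigma).
\]
This is the identity $\varphi_B\circ\varphi_A=\varphi_{BA}$ already noted in the paper, and $BA\in\operatorname{Hom}(\sR^n,\sR^k)$, so the composite is again a morphism of \textbf{SDef}. Composition is ordinary function composition, which is unique and associative. Equivalently, associativity follows from associativity of matrix multiplication: $\varphi_C\circ(\varphi_B\circ\varphi_A)=\varphi_{C(BA)}=\varphi_{(CB)A}$.

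Third, identities. We take $\mathrm{id}_{\psd^n} = \varphi_{I_n}$, which is the identity function on $\psd^n$. Then $\varphi_A\circ\varphi_{I_n} = \varphi_{AI_n}=\varphi_A$ and $\varphi_{I_m}\circ\varphi_A=\varphi_A$. Uniqueness is automatic: if $\iota$ and $\iota'$ both satisfy the two-sided unit law, then $\iota = \iota\circ\iota' = \iota'$.

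The only delicate point is well-definedness of morphisms, since distinct matrices can induce the same map. For example, $\varphi_A=\varphi_{AQ}$ whenever $Q$ is orthogonal on $\psd^n$, because $AQ\Sigma Q^\top A^\top$ need not equal $A\Sigma A^\top$ in general but coincides for $\Sigma$ commuting with $Q$; more simply, $\varphi_A=\varphi_{-A}$ always. Treating morphisms as functions, and not as matrices, avoids this issue. All the computations above are written in terms of representatives $A$ and $B$ but only use the resulting functions, so they are independent of the choice of representative. Hence \textbf{SDef} is a category.
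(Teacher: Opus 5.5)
Your proof is correct and takes essentially the paper's route: $\varphi_B\circ\varphi_A=\varphi_{BA}$ gives closure and associativity, and $\varphi_{I}$ gives identities. You also add the check that $A\Sigma A^\top\in\psd^m$, and by treating morphisms as functions you sidestep the paper's inaccurate claim that the representing matrix is unique. The one flaw is your aside that $\varphi_A=\varphi_{AQ}$ for orthogonal $Q$: testing on $\Sigma=xx^\top$ shows that $\varphi_A=\varphi_B$ on $\psd^n$ forces $B=\pm A$, so drop that example and keep only $\varphi_A=\varphi_{-A}$; your argument never uses it.
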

\begin{proof}
    Recall \textbf{SDef} denotes the category of positive semidefinite matrices whose objects are the (convex) cones of positive semidefinite matrices. A map $\varphi : C \to C'$ in \textbf{SDef} is defined by a unique linear operator $A$ through $\varphi_A(\Sigma) = A\Sigma A^{\top}$.

    Since $\varphi_A \circ \varphi_B 
 = \varphi_{AB}$, we have:  $$(\varphi_A \circ \varphi_B) \circ \varphi_C = \varphi_{AB} \circ \varphi_C = \varphi_{ABC} = \varphi_{A}\circ \varphi_{BC}    = \varphi_A \circ (\varphi_B \circ \varphi_C)$$ 
 So the composition is associative. If we take sum of maps in \textbf{SDef}, the composition still is associative. 

 For any cone $C$ of positive semidefinite matrices, the identity map is uniquely determined by the identity matrix, since $\varphi_I(\Sigma) = I\Sigma I^{\top} = \Sigma$, for every $\Sigma \in C$. 
\end{proof}

\begin{definition}
    Given \textbf{C} and \textbf{D} two categories. A functor $F: \textbf{C} \to \textbf{D}$  maps
each object and morphism of \textbf{C} to a correspondent object and morphism of \textbf{D} in such way that:
\begin{enumerate}
    \item $F(id_A) = id_{F(A)}$ for every object $A$ of \textbf{C}.
    \item $F(g \circ f) = F(g) \circ F(f)$ for every composable morphisms $f,g$ of \textbf{C}.
\end{enumerate}
\end{definition}
\begin{example}
    Let $X$ be a topological space and denote by $\mathcal{O}(X)$ the category associated with the poset  of all open sets of $X$, i.e. whose objects are open sets of $X$ and whose morphisms are given by the inclusion relation. Consider $F:\mathcal{O}(X)^{op} \to \textbf{Set}$  such that for each open $U$ we define $F(U) = \{f: U \to \sR | f \mbox{ is continuous} \}$ and for each $U \subseteq V$ -- which corresponds to the arrow $V \to U$ in $\mathcal{O}(X)^{op}$ -- we have that $f\in F(V)$ is sent to its restriction $f_{|_U} \in F(U)$. This functor is the presheaf of continuous functions and, actually, it is a classic example of a sheaf. 
\end{example}

\begin{definition}
    The equalizer of a pair of morphism 
\begin{tikzcd}[ampersand replacement=\&]
	A \& B
	\arrow["f", shift left=2, from=1-1, to=1-2]
	\arrow["g"', shift right=2, from=1-1, to=1-2]
\end{tikzcd}  of \textbf{C} consists of an object $Eq(f,g)$ and a morphism $e: Eq(f,g) \to A$ in \textbf{C} satisfying $f \circ e = g \circ e$ and that given an object $Z$ and any morphism $z: Z \to A$ of \textbf{C} satisfying $f \circ z = g \circ z$ then we have a unique morphism $u: Z \to E$ such that $e \circ u = z$. This definition is represented by the following diagram:
\[\begin{tikzcd}[ampersand replacement=\&]
	Z \\
	E \& A \& B
	\arrow["u"', dashed, from=1-1, to=2-1]
	\arrow["z", from=1-1, to=2-2]
	\arrow["e"', from=2-1, to=2-2]
	\arrow["f", shift left=2, from=2-2, to=2-3]
	\arrow["g"', shift right=2, from=2-2, to=2-3]
\end{tikzcd}\]

\end{definition}

\begin{example}
    In \textbf{Vect}, the kernel of a morphism $f: A \to B$ is the equalizer of $f$ and the trivial linear map $0: A \to B$.
\end{example}

\begin{proposition}
    The category \textbf{SDef} has equalizers for all pair of morphisms.
\end{proposition}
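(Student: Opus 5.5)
The plan is to build the equalizer explicitly as a sub-cone and then check the universal property directly. The paper says that products and equalizers extend to sub-cones of $\psd^n$, so I read \textbf{SDef} as containing sub-cones of PSD cones as objects. Morphisms are still the maps $\varphi_A$, restricted to those sub-cones.

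Fix $\varphi_A,\varphi_B:\psd^n\to\psd^m$, or more generally maps between sub-cones $C\subseteq\psd^n$ and $C'\subseteq\psd^m$. Define
$$E=\operatorname{Eq}(\varphi_A,\varphi_B)=\{\Sigma\in C \mid A\Sigma A^\top=B\Sigma B^\top\},$$
and let $e:E\to C$ be the inclusion, $e=\varphi_{I_n}|_E$.

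\textbf{Step 1: $E$ is a valid object.} I will show $E$ is a convex cone inside $\psd^n$. The defining condition $A\Sigma A^\top-B\Sigma B^\top=0$ is linear in $\Sigma$. Hence $E$ is the intersection of $C$ with a linear subspace of symmetric matrices. It is therefore closed under nonnegative combinations and contains $0$, so it is a sub-cone.

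\textbf{Step 2: $e$ is a morphism and equalizes the pair.} The inclusion is induced by the identity matrix, so it is a morphism. By construction $\varphi_A\circ e=\varphi_B\circ e$.

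\textbf{Step 3: universal property.} Let $z=\varphi_M:Z\to C$ be a morphism with $\varphi_A\circ z=\varphi_B\circ z$. Then for every $X\in Z$ we have $AMXM^\top A^\top=BMXM^\top B^\top$, so $z(X)\in E$. Define $u:Z\to E$ as $\varphi_M$ with its codomain restricted to $E$. This is again a morphism induced by $M$, and $e\circ u=z$.

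For uniqueness, suppose $u'=\varphi_{M'}$ also satisfies $e\circ u'=z$. Since $e$ is an inclusion, $u'(X)=e(u'(X))=z(X)=u(X)$ for all $X$, so $u'=u$ as maps. This is enough provided morphisms are identified with the maps they define rather than with the matrices inducing them. That is the correct reading, because $\varphi_M=\varphi_{-M}$ already shows the inducing matrix is not unique.

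\textbf{Main obstacle.} The delicate point is foundational rather than computational. One has to make precise that sub-cones are objects and that morphisms are functions of the form $\Sigma\mapsto A\Sigma A^\top$, identified as functions. Otherwise two things can fail: the equalizer $E$ need not be a full cone $\psd^k$, and uniqueness fails at the level of matrices.

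If one insists that the only objects are full cones $\psd^k$, I would try to show that $E$ is isomorphic to some $\psd^k$, but this is false in general. For example, with $A=I$ and $B=\operatorname{diag}(1,-1)$, the set $E$ is the set of diagonal PSD matrices, $\sR_+^2$, which is not of the form $\psd^k$. So I would adopt the sub-cone convention explicitly, as the paper indicates.
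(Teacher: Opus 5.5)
Your proof is correct and follows the paper's own argument. The paper likewise takes $\operatorname{Eq}(\varphi_A,\varphi_B)=\{\Sigma\in C \mid A\Sigma A^\top=B\Sigma B^\top\}$ with the inclusion, corestricts an equalizing $\varphi_X$ to get $u$, and obtains uniqueness from $v = i\circ v = \varphi_X = i\circ u = u$. Your additional checks fill in points the paper leaves implicit: that the equalizer is itself a sub-cone, and that morphisms must be identified as functions rather than matrices since $\varphi_M=\varphi_{-M}$. Your $\operatorname{diag}(1,-1)$ example correctly shows why the sub-cone convention, which the paper also needs for products, is required.
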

\begin{proof}
   Given a pair of morphisms 
\begin{tikzcd}[ampersand replacement=\&]
	C \& {C'}
	\arrow["{\varphi_A}", shift left, from=1-1, to=1-2]
	\arrow["{\varphi_B}"', shift right, from=1-1, to=1-2]
\end{tikzcd}, if $\operatorname{Eq}(\varphi_A,\varphi_B) = \{\Sigma \in C \,|\, A \Sigma A^{\top} =   B \Sigma B^{\top}\}$ and  $i: \operatorname{Eq}(\varphi_A,\varphi_B) \to C$ is the inclusion, it is clear that $\varphi_A \circ  i = \varphi_B \circ  i$. 

Given a cone in \textbf{SDef} and a morphism $\varphi_X : Z \to C$ such that $\varphi_A \circ  \varphi_X = \varphi_B \circ  \varphi_X$, we can define a morphism $u : Z \to \operatorname{Eq}(\varphi_A,\varphi_B)$ by $u(\Sigma) = X \Sigma X^{\top}$, for all $\Sigma \in Z$. In this way,  $u(\Sigma) \in  \operatorname{Eq}(\varphi_A,\varphi_B)$ because  $\varphi_A \circ  \varphi_X = \varphi_B \circ  \varphi_X$ and $i \circ u = \varphi_X$. Moreover, $u$ is the unique morphism such that $i \circ u = \varphi_X$. Indeed, suppose $v : Z \to \operatorname{Eq}(\varphi_A,\varphi_B)$ is such that $i \circ v = \varphi_X$, then $v = i \circ v = \varphi_X = i \circ u = u.$
\end{proof}

\begin{definition}
    The product of two objects $A$ and $B$ of \textbf{C} is an object of \textbf{C} denoted by $A \times B$ equipped with projection morphisms $\pi_1 : A \times B \to A $ and $\pi_2 : A \times B \to B $ satisfying that for any object $Z$ equipped with morphisms $z_1: Z \to A$ and $z_2: Z \to B$, there exists a unique morphism $u: Z \to A \times B$ such that $\pi_2 \circ u = z_2$ and $\pi_1 \circ u = z_1$. Diagrammatically:
\[\begin{tikzcd}[ampersand replacement=\&]
	\& Z \\
	A \& {A\times B} \& B
	\arrow["{z_1}"', from=1-2, to=2-1]
	\arrow["u", dotted, from=1-2, to=2-2]
	\arrow["{z_2}", from=1-2, to=2-3]
	\arrow["{\pi_1}", from=2-2, to=2-1]
	\arrow["{\pi_2}"', from=2-2, to=2-3]
\end{tikzcd}\]   
\end{definition}
It is also possible to define product of an arbitrary (not necessarily finite) family. In this case, the product in \textbf{Vect} is the cartesian product with addition defined component-wise, and the scalar multiplication distributing over all the components; thus, is the direct product. Since we are interested only in vector spaces of finite dimension, we can use that the finite direct product  is  isomorphic to the finite direct sum. Then the direct sum is the product for the category of finite vector spaces. In the infinite case, the direct sum is a dual notion called coprodut.

\begin{proposition}
    The category \textbf{SDef} has all finite products.
\end{proposition}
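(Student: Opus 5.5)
The plan is to mirror the proof of the preceding proposition on equalizers: write down an explicit candidate object and projections, and then verify the universal property by hand. It suffices to handle binary products $\psd^n \times \psd^m$, since finite products then follow by induction (iterating binary products), together with a terminal object for the empty product. The candidate product of $\psd^n$ and $\psd^m$ is the cone of block-diagonal matrices $\psd^n \oplus \psd^m = \{\Sigma_1 \oplus \Sigma_2\}$ already introduced in \cref{subsec:gauss_construction}. The projections are $\pi_1 = \varphi_{P_1}$ and $\pi_2 = \varphi_{P_2}$, where $P_1 = [\,I_n \;\; 0\,] \in \sR^{n \times (n+m)}$ and $P_2 = [\,0 \;\; I_m\,]$. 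On a block-diagonal matrix these return $\Sigma_1$ and $\Sigma_2$ respectively, so they are genuine \textbf{SDef} morphisms. For the terminal object I would take the zero cone $\psd^0 = \{0\}$; every object has exactly one map into it, namely $\varphi_0$.

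For the universal property, suppose we are given an object $Z$ with morphisms $z_1 = \varphi_X : Z \to \psd^n$ and $z_2 = \varphi_Y : Z \to \psd^m$. The natural mediating map is $u(\Sigma) = X\Sigma X^\top \oplus Y\Sigma Y^\top$, and the identities $\pi_1 \circ u = z_1$ and $\pi_2 \circ u = z_2$ are immediate. Uniqueness I would argue exactly as in the equalizer proof: any $v$ with $\pi_i \circ v = z_i$ has block-diagonal output whose diagonal blocks are forced, so $v = u$ as functions.

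The main obstacle is that $u$ need not be of the form $\varphi_A$ for a single matrix $A$. The obvious choice $A = \begin{bmatrix} X \\ Y \end{bmatrix}$ gives $A\Sigma A^\top$ with off-diagonal blocks $X\Sigma Y^\top \neq 0$. Instead, $u = \varphi_{\tilde P_1 X} + \varphi_{\tilde P_2 Y}$, where $\tilde P_1 = P_1^\top$ and $\tilde P_2 = P_2^\top$ are the block embeddings. This is a \emph{sum} of morphisms of the form $\varphi_A$. I would therefore invoke the stated convention that the family of morphisms is closed under addition; this convention was already used when $\delta_{\gC}$ was defined as a sum of $\varphi$'s. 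I would then check that sums of such maps still compose associatively and that the projections compose correctly with sums. The latter follows because $\varphi_{P_1}(\varphi_{\tilde P_1 X}(\Sigma) + \varphi_{\tilde P_2 Y}(\Sigma)) = X\Sigma X^\top + 0$.

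Finally, I would remark that the same construction restricts to sub-cones: if $Z$, $C$ and $C'$ are sub-cones, the image of $u$ lies in $C \oplus C'$, which is what is needed for the cochain spaces $C^0(G,\gC)$ and $C^1(G,\gC)$.
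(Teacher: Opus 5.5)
Your proposal is correct and follows essentially the same route as the paper. Both use the block-diagonal cone $C \oplus C'$ with projections $\varphi_{[I\;0]}$ and $\varphi_{[0\;I]}$, justify the mediating map $u(\Sigma) = z_1(\Sigma) \oplus z_2(\Sigma)$ as a morphism via closure under sums, and get uniqueness because the diagonal blocks of any competing $v$ are forced; your decomposition $u = \varphi_{P_1^\top X} + \varphi_{P_2^\top Y}$ is a cleaner justification than the paper's displayed identity, which only rewrites $u(\Sigma)$ in terms of projections of $u(\Sigma)$ itself, and your use of the terminal object $\{0\}$ for the empty product makes explicit a case the paper leaves implicit.
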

\begin{proof}
    We will show that if $C$ and $C'$ are cones in \textbf{SDef}, then the direct sum $C \oplus C'$ is the product in \textbf{SDef}. The proof is the same if we have a finite direct sum of cones of PSD matrices. 

    Suppose $C$ is formed by PSD matrices in $M_n(\sR)$ and $C'$ is formed by PSD matrices in $M_m(\sR)$. Then
    $$C \oplus C' = \left\{\Sigma_1 \oplus \Sigma_2 = \begin{bmatrix} \Sigma_1 & 0_{n \times m} \\ 0_{m \times n} & \Sigma_2 \end{bmatrix} \:\Big|\: \Sigma_1 \in C, \Sigma_2 \in C'\right\}.$$

    The resulting matrices $\Sigma_1 \oplus \Sigma_2$ are still PSD: if $v \in \real^{n+m}$ then $v^{\top}(\Sigma_1 \oplus \Sigma_2)v = v_1\Sigma_1v_1 + v_2\Sigma_2v_2 \geq 0$, with $v_1 \in \real^n, v_2 \in \real^m$ and $v = v_1 \oplus v_2$. 
    
    The projection morphism $ \varphi_{\pi_1} :  C \oplus C' \to C$ and $\varphi_{\pi_2} :  C \oplus C' \to C $ are given by 
\begin{equation*}
    \begin{split}
        \varphi_{\pi_1}(\Sigma_1 \oplus \Sigma_2) = \begin{bmatrix} I_{n \times n} & 0_{n \times m} \end{bmatrix} (\Sigma_1 &\oplus \Sigma_2) \begin{bmatrix} I_{n \times n} & 0_{n \times m} \end{bmatrix}^{\top} \\
        \varphi_{\pi_2}(\Sigma_1 \oplus \Sigma_2) = \begin{bmatrix} 0_{m \times n} & I_{m \times m} \end{bmatrix} (\Sigma_1 &\oplus \Sigma_2) \begin{bmatrix} 0_{m \times n} & I_{m \times m} \end{bmatrix}^{\top},
    \end{split},
\end{equation*}

If $Z$ is any other cone in \textbf{SDef} equipped with morphisms $\varphi_{X_1} : Z \to C$ and $\varphi_{X_2} : Z \to C'$, define $u: Z \to C \oplus C'$ by $u(\Sigma) = \varphi_{X_1}(\Sigma) \oplus \varphi_{X_2}(\Sigma) $, then $\varphi_{\pi_1} \circ u = \varphi_{X_1}$ and $\varphi_{\pi_2} \circ u = \varphi_{X_2}$. Note that $u$ is a morphism of \textbf{SDef} since 
$$u(\Sigma) = \begin{bmatrix} \varphi_{X_1}(\Sigma) & 0 \\ 0 & \varphi_{X_2}(\Sigma) \end{bmatrix}  = \begin{bmatrix} I & 0 \\ 0 & 0 \end{bmatrix}\begin{bmatrix} \varphi_{X_1}(\Sigma) & 0 \\ 0 & \varphi_{X_2}(\Sigma) \end{bmatrix}\begin{bmatrix} I & 0 \\ 0 & 0 \end{bmatrix}^{\top} + \begin{bmatrix} 0 & 0 \\ 0 & I \end{bmatrix}\begin{bmatrix} \varphi_{X_1}(\Sigma) & 0 \\ 0 & \varphi_{X_2}(\Sigma) \end{bmatrix}\begin{bmatrix} 0 & 0 \\ 0 & I \end{bmatrix}^{\top} $$

Note that such $u$ is unique: if $v: Z \to C \oplus C'$ satisfies $\varphi_{\pi_1} \circ v = \varphi_{X_1}$ and $\varphi_{\pi_2} \circ v = \varphi_{X_2}$. Since $v(\Sigma) \in C \oplus C'$, we can write $v(\Sigma) = v_1(\Sigma) \oplus v_2(\Sigma)$ such that $v_1(\Sigma) \in C$ and $v_2(\Sigma) \in C'$. Thus,
$u(\Sigma) = \varphi_{\pi_1}(v(\Sigma)) \oplus \varphi_{\pi_2}(v(\Sigma)) = v_1(\Sigma) \oplus v_2(\Sigma) = v(\Sigma).$
\end{proof}

\begin{definition}
    An object $T$ is terminal if for every object $A$ in \textbf{C}, there exist precisely one morphism $A \to T$. Dually, an object $I$ is initial if for every object $A$ in \textbf{C}, there exist precisely one morphism $I \to A$. 
\end{definition}

\begin{example}
    In \textbf{Vect}, the trivial vector space is both initial and terminal. In \textbf{Set}, the empty set is the initial object and every singleton is a terminal object. 
\end{example}

\begin{proposition}
The category \textbf{SDef} has terminal and initial object.    
\end{proposition}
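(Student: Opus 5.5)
The natural candidate for both the terminal and the initial object is the zero cone $\psd^0 = \{0\}$, i.e.\ the cone of PSD matrices of size $0 \times 0$; equivalently, the trivial cone $\{0_{n\times n}\}$ viewed as a sub-cone (the paper allows sub-cones of $\psd^n$ as objects). The plan is to verify the two universal properties directly from the description of morphisms as maps $\varphi_A(\Sigma) = A\Sigma A^{\top}$. This mirrors the \textbf{Vect} example, where the zero space is a zero object.

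\textbf{Terminal.} Fix a cone $C \subseteq \psd^n$ and consider morphisms $C \to \psd^0$. Such a morphism is $\varphi_A$ with $A \in \operatorname{Hom}(\sR^n,\sR^0)$, and there is exactly one such linear map, the empty map. Hence $\varphi_A$ sends every $\Sigma$ to the unique element $0$. If one prefers the model $\{0_{m\times m}\}$ with $m>0$, the same conclusion holds at the level of functions: any $\varphi_A$ landing in $\{0\}$ is the constant map to $0$. In either model, existence and uniqueness of the function $C \to \{0\}$ are immediate.

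\textbf{Initial.} For morphisms $\psd^0 \to C$, any $\varphi_A$ with $A \in \operatorname{Hom}(\sR^0,\sR^n)$ sends $0 \mapsto A\,0\,A^{\top} = 0_{n\times n}$. Two things need checking:
\begin{enumerate}
\item \emph{Existence.} We need $0 \in C$, which holds for every cone since cones are closed under nonnegative scaling.
\item \emph{Uniqueness.} All such $\varphi_A$ coincide as functions, because the domain is a single point.
\end{enumerate}

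\textbf{Main obstacle.} The delicate step is what equality of morphisms means. The paper says a morphism is \emph{defined by} a matrix $A$, yet $\varphi_A = \varphi_{-A}$. So morphisms must be identified as functions on cones (equivalently, matrices up to the relevant equivalence). Otherwise uniqueness fails for the model $\{0_{m\times m}\}$ with $m>0$, since many matrices $A$ give the same map.

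I would therefore state explicitly that morphisms are taken as maps of cones, which is the convention already used implicitly in the equalizer and product proofs, where uniqueness was argued pointwise. Alternatively, I would use the $0\times 0$ cone, for which the Hom-sets of matrices are themselves singletons. Either choice makes both arguments one-line checks, and it shows that $\psd^0$ is a zero object of \textbf{SDef}.
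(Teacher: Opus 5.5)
Your proposal is correct and follows essentially the same route as the paper, which takes the trivial cone (identified with the trivial vector space) as both the terminal and the initial object and checks directly that exactly one map goes into it and exactly one goes out of it. Your convention that morphisms are compared as functions on cones is what the paper's argument silently relies on, both for uniqueness and for its claimed isomorphism $\{0_{n\times n}\}\cong\{0_{1\times 1}\}$. Your initiality check against an arbitrary target cone $C$ (using $0\in C$) also covers a case the paper only states for $\psd^n$ of the same size.
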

\begin{proof}
    The trivial vector space is the terminal and the initial object.  
    Indeed, there is only one map that takes $\psd^n$ to $\{0_{n \times n}\}$, and only one map from the latter space to the former, taking it to itself (notice that the zero matrix is PSD). But $\{0_{n \times n}\}$ is isomorphic to $\{0_{1 \times 1}\}$, which is isomorphic to the trivial vector space $\{0\}$.
\end{proof}
Again, we denote by $\mathcal{O}(X)$ the category associated with the poset  of all open sets of a topological space $X$.  A \textit{presheaf of sets} on $X$ is a functor $F: \mathcal{O}(X)^{op} \rightarrow \textbf{Set}$. Given inclusions $U \subseteq V$, we use $s_{|^V_U}$ (or just $s_{|_U}$) to denote the ``restriction map'' from $F(V)$ to $F(U)$.  

\begin{definition}
A presheaf $F:\mathcal{O}(X)^{op} \rightarrow \textbf{Set}$ is a \textit{sheaf} (of sets) if for any open $U$ and any open cover $U = \bigcup\limits_{i\in I} U_i$ the following diagram is an equalizer in the category \textbf{Set}
\begin{center}
     \begin{tikzcd}
F (U) \arrow[r, "e"] & \prod\limits_{i\in I}F (U_i) \arrow[r, "p", shift left=1 ex] 
\arrow[r, "q"', shift right=0.5 ex]  & {\prod\limits_{(i,j) \in I \times I}F (U_i \cap U_j)}
\end{tikzcd}
 \end{center}
 
 where:
 \begin{enumerate}
     \item $e(t) = \{t_{|_{U_i}} \enspace | \enspace i \in I\}, \enspace t \in F (U)$ 
     \item     $p((t_k)_{ k \in I}) = (t_{i_{|_{U_i \cap U_j}}})_{(i,j)\in I\times I}$ \\ $q((t_k)_{k \in I}) = (t_{j_{|_{U_i \cap U_j}}})_{(i,j)\in I\times I}, \enspace (t_k)_{k \in I} \in \prod\limits_{k\in I}F (U_k)$
 \end{enumerate}\label{sheaf}
\end{definition}
The above equalizer reads as follows: if $t_i \in F(U_i)$  is a \textit{compatible family}, i.e., $t_{i_{|_{U_i \cap U_j}}} = t_{j_{|_{U_i \cap U_j}}}$ for all $i,j \in I$, there is a unique  $t \in F(U)$ such that $t_{|_{U_i}} = t_i, i \in I$.

This definition makes sense for sets because \textbf{Set} has equalizers for all pairs of morphisms and products for a collection of sets, provided the collection is itself a set\footnote{Saying that the collection is a set is implicitly given us a notion of size. In general, we need \textit{small} products but we prefer to not address details about it here, since the results we prove are for finite vector spaces and then we only consider finite products.}. In other words, \textbf{Set} is a complete category. Then this definition makes sense for any complete category, such as \textbf{Vect}. 

The terminal object in \textbf{Set} plays an implicit role in the definition of a sheaf: take $U$ as the empty subspace of $X$ and $I$ the empty set of indices. Then a necessary condition for a presheaf $F$ be a sheaf is that $F(\emptyset)$ is a singleton.

We already argued that the trivial vector space is the terminal and initial object in \textbf{SDef}. Consequently, we can define sheaves of positive semidefinite matrices. 

Now, we briefly relate sheaves on a topological space with cellular sheaves. 

Using $\Bar{X}$ to denote the closure of $X$:
\begin{definition} \cite{curry2014sheaves}
    A cell complex is a space $X$ with a partition into pieces $\{X_\sigma\}_{\sigma \in P_X}$ satisfying:
    \begin{enumerate}
        \item  Each point $x \in X $ has an open neighborhood $ U$ intersecting only finitely many $X_\sigma$.
        \item $X_\sigma$ is homeomorphic to $\sR^k$, for some $k$  (where $\sR^0$ is one point).
        \item If $\Bar{X_\tau} \cap X_\sigma$ is non-empty, then $X_\sigma \subseteq \Bar{X_\tau}$.  When this occurs we say the pair are incident or that $X_\sigma$ is a face of $ X_\tau$. The face relation makes the
indexing set $P_X$ into a poset by declaring $\sigma \leq \tau$.
    \end{enumerate}
\end{definition}

\begin{definition}  \cite{curry2014sheaves}
    A cellular sheaf $F$ valued in $\textbf{D}$ on a cell complex $X$ is a functor $F: P_X \to \textbf{D}$. 
\end{definition}

Note that a graph $G = (V,E)$ is a cell complex of dimension $1$, where the set of indices is $P_G = V \cup E$. Explicitly, 
\begin{equation}
    x \trianglelefteq y \iff
        \begin{cases}
            x \in y, & \text{if } x \in V, y \in E \\
            x = y, & \text{otherwise}
        \end{cases}
\end{equation}
is the preorder over $P_G$, providing that $(P_G, \unlhd)$ is a poset.  We can now define an Alexandrov topology on this poset:
\begin{equation}
    \tau=\{U \subseteq P_G: \forall x, y \in P_G, \: (x \in U \land x \trianglelefteq y) \implies y \in U\},
\end{equation}

for which the following sets (called open stars at $x$) form a basis:
\begin{equation}
    U_x = \{ y \in P_G \:|\: x \trianglelefteq y\},
\end{equation}

It is easily seen that for two open stars $U_x$ and $U_y$, we have
$U_x \subseteq U_y \iff y \trianglelefteq x$.

Finally, 
\begin{proposition}
    Let \textbf{D} be a complete category and $(X, P_X)$ be a cell complex. A cellular sheaf on $X $ is a sheaf on
    $P_X$ equipped with the Alexandrov topology. Such a sheaf is uniquely determined by a functor $F : P_X \to \textbf{D}$.
\end{proposition}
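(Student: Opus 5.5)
The plan is to follow the cited characterization of sheaves on Alexandrov spaces (as in \cite{curry2014sheaves}) and to build mutually inverse constructions between functors $F : P_X \to \textbf{D}$ and sheaves on $(P_X,\tau)$, where $\tau$ is the Alexandrov topology generated by the open stars $U_x$. The single fact doing all the work is that $U_x$ is the \emph{smallest} open set containing $x$. Indeed, any open $U \ni x$ is upward closed, so it contains every $y$ with $x \trianglelefteq y$.

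\textbf{From a sheaf to a functor.} Given a sheaf $\mathcal{S}$, we set $F(x) := \mathcal{S}(U_x)$. Since $x \trianglelefteq y$ is equivalent to $U_y \subseteq U_x$, the restriction map $\mathcal{S}(U_x) \to \mathcal{S}(U_y)$ gives the arrow $F(x) \to F(y)$. Functoriality of $F$ is inherited directly from the presheaf axioms.

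\textbf{From a functor to a sheaf.} Given $F$, we define for each open $U$
\[
\mathcal{S}(U) := \lim_{x \in U} F(x),
\]
where the limit is taken over the subposet $U \subseteq P_X$. This limit exists because $\textbf{D}$ is complete. For $V \subseteq U$, the restriction map is the canonical map between limits induced by restricting the diagram. To check that this recovers $F$, observe that $x$ is the initial object of the subposet $U_x$. A limit over a poset with an initial element is the value at that element, so $\mathcal{S}(U_x) \cong F(x)$. The same observation shows that the two constructions are inverse to each other up to isomorphism, which gives the uniqueness clause: a sheaf is determined by its values on the basis $\{U_x\}$.

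\textbf{The gluing axiom.} Let $U = \bigcup_i U_i$ be an open cover. We must show that $\lim_U F$ is the equalizer of
\[
\prod_i \lim_{U_i} F \rightrightarrows \prod_{i,j} \lim_{U_i \cap U_j} F .
\]
Intersections of opens are again open, so each term is a limit of the same kind. The key combinatorial point is that every $x \in U$ lies in some $U_i$, and then $U_x \subseteq U_i$. Hence the cone data over $U$ is exactly a family of cones over the $U_i$ that agree on the overlaps. Phrased abstractly, the diagram $U$ is the colimit of the subposets $U_i$ glued along the $U_i \cap U_j$. The functor $\lim$ sends such a colimit of index categories to the corresponding equalizer of products, by the standard interchange of limits.

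I expect this gluing step to be the main obstacle. It requires writing carefully the identification of "compatible cones on the pieces" with "cones on the union" for a general complete $\textbf{D}$, and not just for $\textbf{Set}$. The cleanest route is to apply $\operatorname{Hom}(Z,-)$ for an arbitrary object $Z$. This reduces the statement to the $\textbf{Set}$-valued case, where the argument is element-wise and immediate. The empty cover needs a separate check: $\mathcal{S}(\emptyset)$ is the limit over the empty diagram, i.e.\ the terminal object. This matches the remark above on the role of terminal objects, and by the preceding proposition it also applies to \textbf{SDef}.
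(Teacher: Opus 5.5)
The paper gives no argument of its own here; its proof is a one-line citation of Corollary 4.2.13 in Curry's thesis. Your proposal is a correct, self-contained reconstruction of the standard argument behind that result, with three ingredients:
\begin{itemize}
\item restrict a sheaf along $x \mapsto U_x$, using $x \trianglelefteq y \iff U_y \subseteq U_x$;
\item extend a functor by $\mathcal{S}(U) = \lim_{x \in U} F(x)$, and recover $F(x)$ because $x$ is the minimum of $U_x$;
\item check gluing through $\operatorname{Hom}(Z,-)$.
\end{itemize}
In the gluing step the decisive point is that each $U_i$ is an up-set, so every relation $x \trianglelefteq y$ inside $U$ already lies inside a single $U_i$.

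The citation buys brevity and delegates everything, including the correspondence on morphisms, to the source. Your route makes explicit which limits are actually used: limits over up-sets, products over covers, equalizers, and the terminal object for the empty cover. For finite $P_X$, as for the finite graphs in this paper, only finite limits are needed (finite posets $U$, finite covers). That matches what the paper actually verifies for \textbf{SDef}: finite products, equalizers and a terminal object. The cited statement, by contrast, assumes \textbf{D} complete.

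One step needs more than you claim. The initial-object observation only handles the round trip functor $\to$ sheaf $\to$ functor. For the other round trip you must show that, for a sheaf $\mathcal{S}$, the canonical map $\mathcal{S}(U) \to \lim_{x \in U} \mathcal{S}(U_x)$ is an isomorphism. This uses the gluing axiom for the cover $\{U_x\}_{x \in U}$ of $U$.

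Since $U_x \cap U_y$ is generally not an open star, you need the axiom once more. The map $\mathcal{S}(U_x \cap U_y) \to \prod_{z \in U_x \cap U_y} \mathcal{S}(U_z)$ is a monomorphism, being an equalizer map. So, arguing with generalized elements as in your gluing step, agreement on overlaps is equivalent to compatibility along $\trianglelefteq$. The converse direction uses $U_x \cap U_y = U_y$ when $x \trianglelefteq y$. With this basis lemma written out, your uniqueness clause follows.
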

\begin{proof}
    This is the Corollary 4.2.13 in \citet{curry2014sheaves}.
\end{proof}

Therefore, a sheaf of \textbf{SDef} on $P_G$  is uniquely determined by a cellular sheaf of \textbf{SDef} on a graph $G$, as desired.

\section{Cohomology and the sheaf Laplacian} \label{sec:app:cohomology}
Rather than using a coboundary map $\delta$ we had to consider a pair $\{\delta^+,\delta^-\}$ to reason about cohomology for a cellular sheaf valued in \textbf{SDef}, because $\ker\delta$ in this context would be always trivial and, therefore, uninteresting. In this section, we clarify in which sense $\{\delta^+,\delta^-\}$ is a coboundary map.

We recall that a cochain complex in \textbf{Vect} is a sequence of vector spaces $\{C^i\}$ equipped with linear maps $\delta^i : C^i \to C^{i+1}$ such that $\delta^{i+1}\circ \delta^i = 0$. These $\delta^i$ are called coboundary maps. 
\[\begin{tikzcd}[ampersand replacement=\&]
	{C^0} \& {C^1} \& {C^2} \& {C^3} \& \dots
	\arrow["{\delta^0}", from=1-1, to=1-2]
	\arrow["0"', curve={height=12pt}, from=1-1, to=1-3]
	\arrow["{\delta^1}", from=1-2, to=1-3]
	\arrow["0", curve={height=-18pt}, from=1-2, to=1-4]
	\arrow["{\delta^2}", from=1-3, to=1-4]
	\arrow["{\delta^3}", from=1-4, to=1-5]
\end{tikzcd}\]

The condition $\delta^{i+1}\circ \delta^i = 0$ guarantees $Im(\delta^{i-1}) \subset \ker(\delta^i)$ and we define the cohomology groups by $H^i(C^{\bullet}) = \dfrac{\ker(\delta^i)}{Im(\delta^{i-1})}$, where $C^{\bullet}$ denotes a cochain complex. Considering that $\delta^{-1} = 0$, we have that  $H^0(C^{\bullet}) \!=\! \ker(\delta^0)$. 
Suppose we can write $\delta^i = (\delta^i)^+ - (\delta^i)^- $, for all $i \in \sN.$ Then
$ 0 = \delta^{i+1}\circ \delta^i = ((\delta^{i+1})^+ - (\delta^{i+1})^-)\circ ((\delta^i)^+ - (\delta^i)^-).$ Each linear map $\delta^i$ have an associated matrix, which we denote by the same symbol $\delta^i$ and then composition is just multiplication of matrices.  So:
$$0 =  (\delta^{i+1})^+ (\delta^{i})^+ + (\delta^{i+1})^-(\delta^{i})^- - (\delta^{i+1})^-(\delta^{i})^+ - (\delta^{i+1})^+(\delta^{i})^-$$
While this equation holds for general matrices, it does not work for PSD matrices. However, there is an equivalent way of stating this equation that is suitable for PSD matrices:
\begin{definition}
    A cochain complex in \textbf{SDef} is a sequence of cones of positive semidefinite matrices $\{C^i\}$ equipped with morphisms $(\delta^i)^+,(\delta^i)^- : C^i \to C^{i+1}$ such that
    $$(\delta^{i+1})^+ (\delta^{i})^+ + (\delta^{i+1})^-(\delta^{i})^- = (\delta^{i+1})^-(\delta^{i})^+ + (\delta^{i+1})^+(\delta^{i})^-$$
    Then we say the pair $\{(\delta^i)^+,(\delta^i)^-\}$ is a coboundary map.
\end{definition}

Note that a cochain complex of semimodules over semirings is defined in the same way in \citet{patchkoria2006exactness}. The definition of the cohomology groups is then given by $H^i(C^\bullet) = \dfrac{Eq((\delta^i)^+,(\delta^i)^-)}{\rho^i}$, where $\rho^i$ is a congruence relation on $Eq((\delta^i)^+,(\delta^i)^-)$. Although positive semidefinite matrices are not semimodules, both structures present the same failure: the absence of an inverse of the addition. The above consideration explains why  we translated constructions in  \citet{patchkoria2006exactness} to explore basic cohomological constructions in \textbf{SDef}.

In the case of cohomology for a cellular sheaf $F$ over a graph $G$, there are only two non-trivial vector spaces in our cochain complex: 
\[\begin{tikzcd}[ampersand replacement=\&]
	{C^0(G,F)} \& {C^1(G,F)} \& 0 \& 0 \& \dots
	\arrow["{\delta^0 = \delta}", from=1-1, to=1-2]
	\arrow["0"', curve={height=12pt}, from=1-1, to=1-3]
	\arrow["0", from=1-2, to=1-3]
	\arrow["0", curve={height=-18pt}, from=1-2, to=1-4]
	\arrow["0", from=1-3, to=1-4]
	\arrow["0", from=1-4, to=1-5]
\end{tikzcd}\]

In other words, $C^i = \bigoplus_{v \in V} F(i-cells)$, for all $i \geq 0$, $\delta^i = 0$ for all $i \neq 0$, and $\delta^0 = \delta$ is the coboundary map defined in Section \ref{sec:prelim}. 

 For clarity, observe that if we had a cell complex $X$ with faces connecting edges, then the sequence would be of the form 

\[\begin{tikzcd}[ampersand replacement=\&]
	{C^0(X,F)} \& {C^1(X,F)} \& {C^2(X,F)} \& 0 \& \dots
	\arrow["\delta", from=1-1, to=1-2]
	\arrow["0"', curve={height=12pt}, from=1-1, to=1-3]
	\arrow["{\delta^1}", from=1-2, to=1-3]
	\arrow["0", curve={height=-18pt}, from=1-2, to=1-4]
	\arrow["0", from=1-3, to=1-4]
	\arrow["0", from=1-4, to=1-5]
\end{tikzcd}\]

Since we work only with sheaves over a graph, we only have to worry with $\delta^0$, thus, we can omit the index. Therefore, for cellular  sheaves valued in \textbf{SDef}, it is enough to study $H^0(C^\bullet) = \dfrac{Eq(\delta^+,\delta^-)}{\rho^0}$, and $x \rho ^0 y$ iff $x = y$. A calculation shows that  $H^0(C^\bullet) = Eq((\delta^0)^+,(\delta^0)^-)$ \citep{JUN2017306}. 

Since for graphs $(\delta^i)^+ = 0 = (\delta^i)^-$, for all $i \neq 0$, the maps for the sheaf of covariances defined as $\delta_{\gC}^+(\Sigma) = \sum_i \varphi_{\mathbf{A}_i}(\varphi_{\delta_{\gM}^+}(\Sigma))$ and $\delta_{\gC}^-(\Sigma) = \sum_i \varphi_{\mathbf{A}_i}(\varphi_{\delta_{\gM}^-}(\Sigma))$ form a coboundary map in \textbf{SDef}, where $\delta = \delta^0$. 

Now, recall that if $\gF$ is \textbf{Vect}-valued cellular sheaf on a graph $G$, the sheaf Laplacian of $\gF$ is defined by $L_{\gF} = \delta_{\gF}^{\top}\delta_{\gF}$, with $\delta_{\gF} : C^0(G,\gF) \to C^1(G,\gF)$ the correspondent coboundary map.  However, in \textbf{SDef}, we have a pair $\{\delta^+,\delta^-\}$ playing the role of $\delta$. We could repeat the same idea and break the Laplacian, in \textbf{Vect}, $\delta_{\gF} = \delta^+_{\gF} - \delta^-_{\gF}$. Then $$L_{\gF} = \delta^{+^{\top}}_{\gF}\delta^+_{\gF} + \delta^{-^{\top}}_{\gF}\delta^-_{\gF} - (\delta^{+^{\top}}_{\gF}\delta^-_{\gF} + \delta^{-^{\top}}_{\gF}\delta^+_{\gF}) = L_{\gF}^+ - L_{\gF}^-.$$ Again, this works in \textbf{Vect} but $L$ defined in this way is not enough to build a good operator in \textbf{SDef}, we need a nice behavior locally and the preservation of the block structure. Thus we defined the Laplacian in an alternative way aiming to keep the use of the coboundary maps: define a map $\delta_{\gC}(\Sigma) = \delta_{\gC}^+(\Sigma) + \delta_{\gC}^-(\Sigma)$  and then the Laplacian by $L_{\gC}(\Sigma) = \sum_{i=1}^n \varphi_{\mathbf{B}_i}(\varphi_{(\delta_{\gM}^+)^{\top}}(\delta_{\gC}(\Sigma)) + \varphi_{(\delta_{\gM}^-)^{\top}}(\delta_{\gC}(\Sigma)))$, with $\mathbf{B}_i \in \sR^{nd \times nd}$ block-diagonal matrix whose $i$-th block equals $I_{d \times d}$ and all others are $0_{d \times d}$. Notice that
\begin{align*}
    L_{\gC}(\Sigma) 
    &= \sum_{i=1}^n \varphi_{\mathbf{B}_i}(\varphi_{(\delta_{\gM}^+)^{\top}}(\delta_{\gC}^+(\Sigma) + \delta_{\gC}^-(\Sigma)) + \varphi_{(\delta_{\gM}^-)^{\top}}(\delta_{\gC}^+(\Sigma) + \delta_{\gC}^-(\Sigma))) \\ 
    &= \sum_{i=1}^n \varphi_{\mathbf{B}_i}\left(\sum_j \varphi_{(\delta_{\gM}^+)^{\top}\mathbf{A}_j\delta_{\gM}^+}(\Sigma) + \varphi_{(\delta_{\gM}^+)^{\top}\mathbf{A}_j\delta_{\gM}^-}(\Sigma) + 
    \varphi_{(\delta_{\gM}^-)^{\top}\mathbf{A}_j\delta_{\gM}^+}(\Sigma) + \varphi_{(\delta_{\gM}^-)^{\top}\mathbf{A}_j\delta_{\gM}^-}(\Sigma)\right),
\end{align*}

i.e. we are using the morphisms $\varphi$ induced by $\{\delta^{+^{\top}}_{\gF}\delta^+_{\gF}, \delta^{-^{\top}}_{\gF}\delta^-_{\gF}, \delta^{+^{\top}}_{\gF}\delta^-_{\gF}, \delta^{-^{\top}}_{\gF}\delta^+_{\gF}\}$, with the addition of correcting factors $\mathbf{A}_j$ and $\mathbf{B}_i$. This operator together with the sheaf Laplacian of sheaf of means in \textbf{Vect} provided the Laplacian for the Gaussian sheaf in $\textbf{Vect}\times \textbf{SDef}$, which recovered results analogous to those obtained in \citet{bodnar2023topological}. For instance: it generalizes the graph Laplacian; the equalizer of the positive and negative parts of the Laplacian coincides with $H^0(G,\gF)$; under suitable conditions, any Gaussian sheaf can be learned; and the adequate notion of energy is zero only when the distribution is an element in the global section of the Gaussian sheaf.

\section{An alternative description of the Gaussian sheaf}\label{sec:app:gauss_sheaf}
Now, we explicitly describe the Gaussian sheaf and the GSNN model in terms of the distributions intead of their parameters (mean and covariance). First, consider the product category $\textbf{Vect} \times \textbf{SDef}$, whose objects are pairs of vector spaces and PSD cones $(V, \psd^n)$, and morphisms $(A, \varphi_B)$. Since both categories are complete, their product is again complete. We are particularly interested in a subcategory $\gD$ of this product, whose objects takes the form $(\sR^n, \psd^n)$ and with morphisms restricted to $\Phi_A = (A, \varphi_A)$. Notice that $\Phi_B \circ \Phi_A = (BA, \varphi_{BA}) = \Phi_{BA}$. 

Let $\gauss{d}$ be the space of Gaussian distributions on $\sR^d$. The structure preserving operation in this space is the pushforward by linear maps (whose representation are square matrices), i.e. for a Gaussian distribution $\nu$ with parameters $(\mu, \Sigma)$, $A\#\nu$ is again Gaussian with parameters $(A\mu, A\Sigma A^{\top})$, for $A \in M_n(\sR)$ (notice that the parameters of the Gaussian $A\#\nu$ are $\Phi_A(\mu,\Sigma)$). We can also go to any other space $\gauss{m}$ pushing forward the distribution by any $B \in \operatorname{Hom}(\sR^n, \sR^m)$. The pushforward by the identity map is the identity operation, and $B\#(A\#\nu) = (BA)\#\nu$ for suitable maps $A$ and $B$. Therefore, we have a category \textbf{Gauss} whose objects are spaces of Gaussian distributions and morphisms are pushforward by linear maps (the remaining desired properties follows directly from properties of linear maps). Notice that when we described this category we only cared about how the parameters of a given Gaussian distribution changed under the action of the pushforward. This is due to the bijection we mentioned before, i.e. $\gauss{d} \simeq \sR^n \times \psd^n$. This gives an isomorphism of categories $\textbf{Gauss} \simeq \gD$. 

The Gaussian sheaf we defined before was formed by the sheaf of means, which was valued on $\textbf{Vect}$, and by the sheaf of covariances, which was valued on $\textbf{SDef}$. In other words, the Gaussian sheaf was valued in $\textbf{Vect} \times \textbf{SDef}$. Equivalently, we can define the following Gaussian sheaf with values on \textbf{Gauss}.
\begin{definition}
    Given an undirected graph $G=(V,E)$, a Gaussian (cellular) sheaf $(G, \gF)$ associates:
    \begin{itemize}
        \item A space of Gaussian distributions $\gF(v) \coloneqq \gauss{d_v}$ for every vertex $v \in V$.
        \item A space of Gaussian distributions $\gF(e) \coloneqq \gauss{d_e}$ for every edge $e \in E$.
        \item A morphism $\Phi_{\rmapf{v}{e}}: \gF(v) \to \gF(e)$ to every incident vertex-edge pair $v \unlhd e$, the pushforward by the linear map $\rmapf{v}{e}$.
    \end{itemize}
\end{definition}

The isomorphism between the categories provides this sheaf is categorically equivalent to the sheaf we defined in terms of means and covariances separately.  

In terms of distributions, we point out that a 0-cochain $\nu \in C^0(G,\gF)$ is $\nu = (\nu_{v_1}, \dots, \nu_{v_n})$ with $\nu_{v_i} = \mathcal{N}(\mu_{v_i}, \Sigma_{v_i})$. We have that $\nu = \bigotimes\limits_{i \in [n]} \mathcal{N}(\mu_{v_i}, \Sigma_{v_i})$, therefore $\nu$ is also Gaussian, with first moment $\mu = \bigoplus\limits_{i \in [n]} \mu_{v_i} \in \mathbb{R}^{nd}$ and covariance operator $\Sigma = \bigoplus\limits_{i \in [n]} \Sigma_{v_i}$, i.e. $\nu$ is induced by the product measure of the measures associated to each of its components.

We define the convolution of the coboundaries and Laplacian on this new sheaf.
\begin{definition}\label{def:coboundary_laplacian_distributions}
    Let $(G,\gF)$ be a Gaussian sheaf defined in terms of distributions. Given a 0-cochain $\nu \in C^0(G,\gF)$, we define
    \begin{footnotesize}
        \begin{align}
        \delta_{\gF}(\nu) = \Conv_{i=1}^{|E|} \mathbf{A}_i\#[(\delta_{\gM}^+\#\nu) &\ast (-\delta_{\gM}^-)\#\nu] \\
        L_{\gF}(\nu) = \Conv_{j=1}^n \mathbf{B}_j\#[(\delta_{\gM}^+)^{\top}\#\delta_{\gF}(\nu) &\ast (-\delta_{\gM}^-)^{\top}\#\delta_{\gF}(\nu)],
    \end{align}
    \end{footnotesize}
    where $\Conv$ denotes the convolution operator between the obtained distributions for each index.
\end{definition}
The next result illustrates the connection of the two definitions of the Gaussian sheaf through the application of the $\delta_{\gF}$ and $L_{\gF}$ defined above.
\begin{corollary}
\label{cor:gausslap}
    For a 0-cochain $\nu \in C^0(G,\gF)$ with parameters $\mu$ and $\Sigma$, we have $\delta_{\gF}(\nu) = \gN(\delta_{\gM}\mu, \delta_{\gC}(\Sigma))$ and $L_{\gF}(\nu) = \gN(L_{\gM}\mu, L_{\gC}(\Sigma))$.
\end{corollary}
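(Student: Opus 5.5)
The plan is to track the two Gaussian parameters through each operation in \cref{def:coboundary_laplacian_distributions}, using three elementary facts about Gaussians. First, by \cref{rmk:pushforward}, pushforward by any matrix $A$ sends $\gN(m,S)$ to $\gN(Am, ASA^{\top})$. Second, the convolution of independent Gaussians is Gaussian, with means added and covariances added: $\gN(m_1,S_1)\ast\gN(m_2,S_2) = \gN(m_1+m_2, S_1+S_2)$. Third, pushforward by $-A$ gives $\gN(-Am, ASA^{\top})$, because the sign cancels in the covariance. Convolution is well defined here because all the distributions involved live on the same space, $\sR^{|E|d}$ or $\sR^{nd}$. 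I read each term of the convolution as the law of an independent copy, which is how convolution is interpreted throughout.

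For the coboundary, $\nu = \gN(\mu,\Sigma)$ with $\Sigma$ block-diagonal. Applying the three facts, $(\delta_{\gM}^+\#\nu)\ast((-\delta_{\gM}^-)\#\nu)$ has mean $\delta_{\gM}^+\mu - \delta_{\gM}^-\mu = \delta_{\gM}\mu$ and covariance $\varphi_{\delta_{\gM}^+}(\Sigma) + \varphi_{\delta_{\gM}^-}(\Sigma)$. Next I push this forward by each $\mathbf{A}_i$ and convolve over $i$. On the mean side this gives $\sum_i \mathbf{A}_i \delta_{\gM}\mu = \delta_{\gM}\mu$, since $\sum_i \mathbf{A}_i = I_{|E|d}$. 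On the covariance side it gives $\sum_i \varphi_{\mathbf{A}_i}(\varphi_{\delta_{\gM}^+}(\Sigma)) + \sum_i\varphi_{\mathbf{A}_i}(\varphi_{\delta_{\gM}^-}(\Sigma)) = \delta_{\gC}^+(\Sigma) + \delta_{\gC}^-(\Sigma) = \delta_{\gC}(\Sigma)$, which is exactly the definition of $\delta_{\gC}$. This settles the first identity. \cref{thm:sdefH0} can be cited to confirm that it agrees with the edgewise formula.

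For the Laplacian, I apply the same reasoning to $\delta_{\gF}(\nu) = \gN(\delta_{\gM}\mu, \delta_{\gC}(\Sigma))$. The inner convolution has mean $(\delta_{\gM}^+)^{\top}\delta_{\gM}\mu - (\delta_{\gM}^-)^{\top}\delta_{\gM}\mu = \delta_{\gM}^{\top}\delta_{\gM}\mu = L_{\gM}\mu$. Its covariance is $\varphi_{(\delta_{\gM}^+)^{\top}}(\delta_{\gC}(\Sigma)) + \varphi_{(\delta_{\gM}^-)^{\top}}(\delta_{\gC}(\Sigma))$. Pushing forward by the $\mathbf{B}_j$ and convolving over $j$ leaves the mean as $\sum_j \mathbf{B}_j L_{\gM}\mu = L_{\gM}\mu$, because $\sum_j \mathbf{B}_j = I_{nd}$. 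The covariance becomes precisely the expression defining $L_{\gC}(\Sigma)$ in \cref{def:lap_sheaf_covariances}.

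The main obstacle is not algebraic but a matter of interpretation. The mean is recovered only because the $\mathbf{A}_i$ (respectively $\mathbf{B}_j$) sum to the identity. The covariance, by contrast, picks up only the diagonal blocks, since $\sum_i \mathbf{A}_i S \mathbf{A}_i^{\top}$ is the block-diagonal part of $S$. This is consistent only if the factors being convolved are treated as independent random variables, namely independent copies, rather than as the literal joint law of $(\mathbf{A}_i X)_i$ for a single $X\sim\nu$. Likewise, $\delta^+\#\nu$ and $(-\delta^-)\#\nu$ must be treated as independent, even though they are built from the same $\nu$, so that the covariances add with no cross terms. I would state this convention explicitly at the outset, noting that it is what convolution means. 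The rest is the routine bookkeeping described above.
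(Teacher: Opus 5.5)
Your proof is correct and follows essentially the same argument as the paper. Both track the parameters through each operation: pushforward acts by $(m,S)\mapsto(Am,ASA^{\top})$, convolution adds means and covariances, and $\sum_i \mathbf{A}_i = I_{|E|d}$ and $\sum_j \mathbf{B}_j = I_{nd}$ recover $\delta_{\gM}\mu$ and $L_{\gM}\mu$, while the covariances reproduce the defining formulas for $\delta_{\gC}$ and $L_{\gC}$. The only difference is cosmetic: the paper distributes $\mathbf{A}_i\#$ over the inner convolution and handles the $\delta_{\gM}^{+}$ and $-\delta_{\gM}^{-}$ pieces separately, whereas you evaluate the inner convolution first. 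Your independence caveat is not an extra convention; it is simply the definition of convolution of measures.
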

\begin{proof}
    Using definition \ref{def:coboundary_laplacian_distributions}, we have
\begin{equation*}
    \Conv_i \mathbf{A}_i\#(\delta_{\gM}^+\#\nu) = \mathcal{N}\left(\sum_i \mathbf{A}_i\delta_{\gM}^+\mu, \sum_i \varphi_{\mathbf{A}_i}(\varphi_{\delta_{\gM}^+}\Sigma)\right) = \gN\left(\left(\sum_i \mathbf{A}_i\right)\delta_{\gM}^+\mu, \delta_{\gC}^+(\Sigma)\right) = \gN(\delta_{\gM}^+\mu, \delta_{\gC}^+(\Sigma)),
\end{equation*}

since $\sum_i \mathbf{A}_i = I_{|E|d \times |E|d}$. Also $\Conv_i \mathbf{A}_i\#((-\delta_{\gM}^-)\#\nu) = \gN(-\delta_{\gM}^-\mu, \delta_{\gC}^-(\Sigma))$. Therefore
\begin{equation*}
    \delta_{\gF}(\nu) = \Conv_i \mathbf{A}_i\#(\delta_{\gM}^+\#\nu) \ast \Conv_i \mathbf{A}_i\#(\delta_{\gM}^-\#\nu) = \gN(\delta_{\gM}^+\mu - \delta_{\gM}^-\mu, \delta_{\gC}^+(\Sigma)+\delta_{\gC}^-(\Sigma)) = \gN(\delta_{\gM}\mu, \delta_{\gC}\Sigma).
\end{equation*}
Following a similar computation we can conclude that $L_{\gF}(\nu) = \gN(L_{\gM}\nu, L_{\gC}(\Sigma))$.
\end{proof}

The normalized Laplacian $\Delta_{\gF}$ can be achieved through $\Delta_{\gF}(\nu) = \Conv_i^k \Delta_i\#\nu \ast \Conv_j \mathbf{B}_j\#\Delta'\#\nu$, where $\Delta_i = D^{-\frac{1}{2}}L_iD^{-\frac{1}{2}}$, $\Delta' = D^{-\frac{1}{2}}L'D^{-\frac{1}{2}}$, and $L_{\gM} = \sum_i^k L_i + L'$ is the decomposition mentioned before. It is also useful to define
\begin{equation}
    (I - \Delta_{\gF})(\nu) = \Conv_{i=1}^k ((k+1)^{-1}I - \Delta_i)\#\nu \ast \Conv_{j=1}^n \mathbf{B}_j\#((k+1)^{-1}I - \Delta')\#\nu,
\end{equation}
whose resulting distribution is $(I - \Delta_{\gF})(\nu) = \gN((I - \Delta_{\gM})\mu, (I - \Delta_{\gC})(\Sigma))$, with $(I - \Delta_{\gC})(\Sigma)$ being defined accordingly to the operations described above.

Finally, recall that \cref{sec:GSNN} introduced our GSNN in terms of means and covariances. The alternative way of presenting the GSNN model is the following:
\begin{equation}\label{eq:model}
    \nu' = (I - \Delta_{\gF})^{\ell}((I \otimes W_1)\#f_{W_2}(x)\#\nu),
\end{equation}
with $W_1 \in \mathbb{R}^{d \times d}$, $W_2 \in \mathbb{R}^{h \times h}$ weight matrices, $\ell$ the number of layers, and $f_{W_2}(x) = (\mu W_2 + (x - \mu))$ an affine map that mixes the mean vectors in the channels.

\begin{table*}[h]
\centering
\caption{p-values from one-sided Wilcoxon Signed-Rank tests computed over 10 independent runs for each method and dataset. 
GSNN models consistently outperform the baselines at a 5\% significance level across real and synthetic datasets. The only exception is the Watts--Strogatz ($k=45$) dataset, where the best-performing GSNN is statistically indistinguishable from the best NSD method.}
\label{tab:p_values}
\footnotesize
\setlength{\tabcolsep}{4pt}

\begin{minipage}[t]{0.48\linewidth}
    \centering
    \begin{tabular}{llcc}
    \toprule
    Dataset & Method 1 & Method 2 & p-val \\
    \midrule
    \multirow{6}{*}{BA ($m=25$)} 
     & GCN       & MLP      & 0.712 \\
     & Gen-NSD   & MLP      & 0.070 \\
     & Gen-NSD   & GCN      & 0.166 \\
     & O(d)-GSNN & Gen-NSD  & 0.055 \\
     & O(d)-GSNN & MLP      & 0.003 \\
     & O(d)-GSNN & GCN      & 0.005 \\
    \midrule
    \multirow{6}{*}{BA ($m=50$)} 
     & GCN       & MLP      & 0.997 \\
     & Gen-NSD   & MLP      & 0.005 \\
     & Gen-NSD   & GCN      & 0.003 \\
     & Gen-GSNN  & Gen-NSD  & 0.014 \\
     & Gen-GSNN  & GCN      & 0.003 \\
     & Gen-GSNN  & MLP      & 0.003 \\
    \bottomrule
    \end{tabular}
\end{minipage}
\hfill
\begin{minipage}[t]{0.48\linewidth}
    \centering
    \begin{tabular}{llcc}
    \toprule
    Dataset & Method 1 & Method 2 & p-val \\
    \midrule
    \multirow{6}{*}{WS ($k=25$)} 
     & GCN       & MLP      & 0.254 \\
     & O(d)-NSD  & GCN      & 0.003 \\
     & O(d)-NSD  & MLP      & 0.003 \\
     & Gen-GSNN  & O(d)-NSD & 0.046 \\
     & Gen-GSNN  & GCN      & 0.003 \\
     & Gen-GSNN  & MLP      & 0.003 \\
    \midrule
    \multirow{6}{*}{WS ($k=45$)} 
     & GCN       & MLP      & 0.254 \\
     & Diag-NSD  & GCN      & 0.003 \\
     & Diag-NSD  & MLP      & 0.003 \\
     & Gen-GSNN  & Diag-NSD & 0.600 \\
     & Gen-GSNN  & GCN      & 0.003 \\
     & Gen-GSNN  & MLP      & 0.003 \\
    \bottomrule
    \end{tabular}
\end{minipage}

\vspace{15pt} 

\begin{minipage}[t]{0.48\linewidth}
    \centering
    \begin{tabular}{llcc}
    \toprule
    Dataset & Method 1 & Method 2 & p-val \\
    \midrule
    \multirow{6}{*}{Weather1} 
     & GCN       & MLP      & 0.101 \\
     & Diag-NSD  & GCN      & 0.003 \\
     & Diag-NSD  & MLP      & 0.003 \\
     & O(d)-GSNN & Diag-NSD & 0.006 \\
     & O(d)-GSNN & GCN      & 0.003 \\
     & O(d)-GSNN & MLP      & 0.003 \\
    \midrule
    \multirow{6}{*}{Weather2} 
     & GCN       & MLP      & 0.998 \\
     & Gen-NSD   & GCN      & 0.893 \\
     & Gen-NSD   & MLP      & 0.998 \\
     & Gen-GSNN  & Gen-NSD  & 0.002 \\
     & Gen-GSNN  & GCN      & 0.002 \\
     & Gen-GSNN  & MLP      & 0.002 \\
    \bottomrule
    \end{tabular}
\end{minipage}
\end{table*}

\section{Statistical Relevance}\label{app:stat_relevance}

While some datasets show no significant difference among GSNN models, our results are statistically better than other models. To confirm this, we ran Wilcoxon Signed-Rank tests. This test verifies whether the distribution of one sample is stochastically smaller than the distribution of another sample. 
Importantly, this is a standard technique for assessing whether the performances of independent runs of competing algorithms on a given task are statistically distinguishable. We present the p-values for each pair of best-performing methods in \cref{tab:p_values}. GSNN performs better than baselines at a 5\% level of statistical significance. The only exception is the W-S (k=45), in which GSNN's performance is statistically indistinguishable from Diag-NSD.

\section{Runtime}\label{app:runtime}
To demonstrate that GSNN does not drastically increases the runtime of a standard GNN such as GCN, we provide \cref{tab:runtime}. We acknowledge the scalability challenges of sheaf NNs; however, we also understand that the results are of the same magnitude order. Thus, any dataset that operates within a reasonable time-frame in GCN can be tested with a GSNN. Furthermore, the scalability issue is an intrinsic aspect of the current stage of sheaf models, as a nascent paradigm, representing a key area of ongoing research. 

\begin{table}[htbp]
\centering
\caption{Average training time per epoch (seconds) across datasets. All tests were performed on a NVIDIA GeForce RTX 3090. The results are of the same magnitude order.}
\vspace{2pt}
\label{tab:runtime}
\begin{tabular}{lcccccc}
\toprule
Model & BA (m=25) & BA (m=50) & WS (k=25) & WS (k=45) & Weather1 & Weather2 \\
\midrule
GCN     & 0.2168 & 0.2227 & 0.2185 & 0.2201 & 0.8829 & 0.8357 \\
MLP     & 0.2195 & 0.2201 & 0.2196 & 0.2215 & 1.0647 & 0.8126 \\
GenGSNN & 0.9579 & 1.1704 & 0.6223 & 0.7370 & 1.5082 & 1.5196 \\
GenNSD  & 0.3418 & 0.3703 & 0.3190 & 0.3421 & 0.9810 & 0.9898 \\
\bottomrule
\end{tabular}
\end{table}

\section{Model details and hyperparameters}\label{sec:app:model}

\textbf{On the choice of the 2-Wasserstein metric.} Beyond its role as a generalization of the MSE to probability distributions, the 2-Wasserstein metric offers two further advantages exploited by GSNN. First, unlike common divergence measures such as Kullback–Leibler or R\'enyi divergences, it is less sensitive to tail discrepancies, which yields stabler training and avoids the need for an explicit parameterization of the underlying distributions. Second, both the metric and its gradients can be efficiently computed through the Sinkhorn algorithm, enabling scalable stochastic gradient-based optimization.

\textbf{Sheaf-learning details.} The sheaf-learning process is to learn the restriction maps using the features of the graph. Depending on the type of problem, one might be interested in using another version of \cref{eq:gauss_rmap}. In our experiments, we found it useful to feed the determinant of covariances to the MLP $\Psi$ instead of the whole matrix. Another option is to use the vectorized lower triangular matrix $vech(\Sigma_{\nu_v})$ as input.

In \cref{tab:hyp} we show the hyperparameters used for the grid search in the experiments. We train the models for 1500 epochs, with a fixed learning rate, and perform early stopping and learning rate reduction after 100 and 20 epochs of no improvement, respectively. We fix the stalk dimension in 2, since the input distributions are all bi-dimensional.
\begin{table}[htbp]
    \centering
    \begin{tabular}{cc}
        \toprule
        Stalk dimension &  2 \\
        \midrule
        Layers & [1, 2] \\
        \midrule
        Hidden channels & [32, 64] \\
        \midrule
        MLP (restriction maps) & 1 layer of size [32, 64]  \\
        \midrule
        MLP (readout) & 1 layer of size 32 (synth) and 64 (real) \\
        \midrule
        Weight Decay & 5e-3 \\
        \midrule
        Sheaf Decay & 5e-3 \\
        \bottomrule
    \end{tabular}
    \caption{Hyperparameters for the experiments}
    \label{tab:hyp}
\end{table}

\end{document}